\documentclass[10pt]{article} 
\usepackage[accepted]{rlj} 

\usepackage{amssymb}            
\usepackage{mathtools}          
\usepackage{mathrsfs}           
\usepackage{graphicx}           
\usepackage{subcaption}         
\usepackage[space]{grffile}     
\usepackage{url}                
\usepackage{lipsum}             

\usepackage{amsthm}
\usepackage{comment}
\usepackage{booktabs}
 \usepackage{multirow}

 \usepackage{algorithm}
\usepackage{algpseudocode}

\title{Multi-Modal, Multi-Environment Machine Teaching for Robust Reward Learning}
\setrunningtitle{Multi-Modal, Multi-Environment Machine Teaching for Robust Reward Learning}

\author{Ali Larian\textsuperscript{1}, Qian Lin\textsuperscript{1}, Chang Zong Wu\textsuperscript{1}, Daniel S. Brown\textsuperscript{1}}

\emails{\{ali.larian, qian.lin, u1440478, daniel.s.brown\}@utah.edu}

\affiliations{
$^{1}$\textbf{Kahlert School of Computing, University of Utah.}\\
}

\contribution{
Analysis and insights into different feedback types for machine teaching of reward functions in both unlimited-data and limited-budget regimes, showing that comparisons impose the strongest global constraints while demonstrations are more constraint-efficient per query under tight budgets.
}
{
Prior work on teaching Inverse RL agents \citep{buning2022interactive,BrownNiekum2019MachineTeachingIRL} has primarily focused on demonstrations and did not systematically analyze how different feedback types constrain reward recovery.
}

\contribution{
A formal characterization of environment-dependent reward identifiability, showing that even unlimited feedback in a single MDP leaves residual reward ambiguity.
}
{
Existing machine teaching approaches for IRL \citep{buning2022interactive,BrownNiekum2019MachineTeachingIRL} typically assume a fixed environment and do not analyze how environment dynamics affect reward identifiability.
}

\contribution{
Hierarchical Set Cover Optimal Teaching (HSCOT), a framework that selects informative environments and feedback queries to efficiently constrain rewards across multiple MDPs.
}
{
Prior teaching methods \citep{buning2022interactive,BrownNiekum2019MachineTeachingIRL}  operate within a single environment and cannot exploit variation in environment dynamics to reveal complementary reward constraints.
}

\contribution{
Empirical validation showing that HSCOT achieves higher constraint coverage and lower regret on held-out environments compared to uniform teaching under identical budgets.
}
{
Existing evaluations of machine teaching for IRL \citep{buning2022interactive,BrownNiekum2019MachineTeachingIRL} only consider single-environment teaching settings.
}

\keywords{Learning from Human Feedback, Machine Teaching, Reward Learning} 

\summary{
As autonomous agents are increasingly deployed across diverse operational contexts, aligning their behavior with human intent demands reward functions that remain robust to such changes rather than overfitting to any single environment.
Inverse reinforcement learning (IRL) provides a principled way to infer such objectives from human feedback. 
However, existing analyses of optimal teaching approaches for IRL focus on single-environment, demonstration-only settings, leaving underexplored how heterogeneous feedback modalities and environment dynamics jointly constrain reward functions that generalize across multiple environments.
Because demonstrations in one MDP entangle reward information with that environment’s specific structure, the resulting rewards frequently fail to generalize when the agent is deployed in a new setting. 
We first analyze how different feedback modalities constrain rewards, showing that, in the unlimited-data regime, comparisons impose strictly stronger global constraints than other modalities. 
Beyond this theoretical analysis, we introduce a hierarchical machine teaching algorithm for reward learning that operates across multiple MDPs. 
The algorithm first greedily selects informative environments that expose complementary reward constraints, then strategically queries low-cost feedback within those environments. 
Empirically, our method achieves substantially lower regret and stronger generalization to held-out environments than uniform teaching baselines under identical feedback budgets, demonstrating the importance of multi-environment, multi-modal teaching for learning dynamics-robust reward functions.
}

\begin{document}

\theoremstyle{plain}
\newtheorem{theorem}{Theorem}
\newtheorem{lemma}{Lemma}
\newtheorem{proposition}{Proposition}
\newtheorem{corollary}{Corollary}

\theoremstyle{definition}
\newtheorem{definition}{Definition}
\newtheorem{assumption}{Assumption}

\theoremstyle{remark}
\newtheorem{remark}{Remark}

\makeCover  
\maketitle  

\begin{abstract}
As autonomous agents are increasingly deployed across diverse operational contexts, aligning their behavior with human intent demands reward functions that remain robust to such changes rather than overfitting to any single environment.
Inverse reinforcement learning (IRL) provides a principled way to infer such objectives from human feedback. 
However, existing analyses of optimal teaching approaches for IRL focus on single-environment, demonstration-only settings, leaving underexplored how heterogeneous feedback modalities and environment dynamics jointly constrain reward functions that generalize across multiple environments.
Because demonstrations in one MDP entangle reward information with that environment’s specific structure, the resulting rewards frequently fail to generalize when the agent is deployed in a new setting. 
We first analyze how different feedback modalities constrain rewards, showing that, in the unlimited-data regime, comparisons impose strictly stronger global constraints than other modalities. 
Beyond this theoretical analysis, we introduce a hierarchical machine teaching algorithm for reward learning that operates across multiple MDPs. 
The algorithm first greedily selects informative environments that expose complementary reward constraints, then strategically queries low-cost feedback within those environments. 
Empirically, our method achieves substantially lower regret and stronger generalization to held-out environments than uniform teaching baselines under identical feedback budgets, demonstrating the importance of multi-environment, multi-modal teaching for learning dynamics-robust reward functions.
\end{abstract}
\vspace{-15pt}
\section{Introduction}
\label{sec:submission}
\vspace{-5pt}
As autonomous agents become increasingly common, a central challenge is enabling humans to convey their intent in sequential decision-making settings. Rather than merely imitating behavior, agents must infer underlying objectives that remain aligned across varied situations~\citep{kaufmann2024survey}.
In realistic settings, humans provide heterogeneous feedback rather than demonstrations alone \citep{mehta2024unifiedlearning}.
Moreover, agents are rarely confined to a single operating condition; instead, they are deployed across multiple operational contexts that differ in dynamics, constraints, or embodiment. 
A service robot, for instance, may be required to perform the same task across different physical layouts, action limitations, or safety conditions, while still adhering to a consistent human-defined objective — for example, reaching a target location efficiently while avoiding hazards. 
Understanding how to best teach an agent to act appropriately across heterogeneous settings is challenging. 
The teacher must convey an objective that is invariant to changes in dynamics or environment-specific details, while the agent must learn a representation that generalizes beyond the conditions under which feedback is provided. This raises the central question of this work: \emph{How can a human use heterogeneous feedback to teach an agent so that its behavior remains aligned with the human’s intent across multiple operating conditions?}

This question naturally motivates a machine teaching perspective on sequential decision-making, where a teacher strategically selects feedback to induce a desired learning outcome with minimal cost. A natural formalism for representing human intent is reward learning, in which an agent infers a reward function from human-provided feedback \citep{abbeel2004apprenticeship}. The teaching problem can therefore be cast as machine teaching for reward function learning~\citep{BrownNiekum2019MachineTeachingIRL}, where the teacher selects a minimal yet maximally informative set of feedback instances to recover a reward that induces low-regret behavior. 
However, existing machine teaching frameworks for reward learning largely rely on demonstrations as the sole feedback modality \citep{kamalaruban2019interactive,buning2022interactive}. 
Besides the single-modality setting, prior work on machine teaching for IRL has largely restricted teaching to a single environment with fixed dynamics \citep{haug2018teaching,kamalaruban2019interactive,yengera2021curriculum,buning2022interactive}. 
Rewards learned in a single MDP often overfit to environment-specific dynamics, leading to failures when deployed under different dynamics or learning assumptions \citep{booth2023perils,fu2017learning,he2021assisted}. 
We formally characterize this limitation by showing that reward identifiability is environment-dependent. 
Consequently, resolving reward ambiguity often requires teaching across multiple environments rather than collecting more feedback within one.

To address the limitations of single-modality and single-environment teaching, we introduce \textit{Hierarchical Set Cover Optimal Teaching (HSCOT)}, a framework that jointly selects informative environments and heterogeneous feedback queries to constrain the reward space across multiple MDPs. 
Unlike prior approaches that operate within a single environment, HSCOT exploits variation in environment dynamics to expose complementary reward constraints, yielding a hierarchical strategy in which environments are selected first and feedback instances are chosen within them. 
Our contributions are as follows:

\begin{itemize}[leftmargin=12pt, itemsep=1pt, topsep=2pt, parsep=0pt]

\item \textbf{Analysis of feedback modalities for teaching reward learning.}
We examine how different feedback types constrain reward identification in machine teaching for reward learning.
In the unlimited-data regime, comparisons impose the strongest global constraints; however, under limited budgets,  demonstrations are more constraint-efficient per query.

\item \textbf{Environment-dependent reward identifiability.}
We study how reward identifiability depends on environment dynamics and formally 
characterize how additional environments can expose complementary constraint 
directions for better reward identification.

\item \textbf{Hierarchical multi-environment machine teaching.}
We introduce Hierarchical Set Cover Optimal Teaching (HSCOT), a framework that jointly selects informative environments and feedback queries to efficiently constrain rewards across multiple MDPs.

\item \textbf{Empirical validation.}
Experiments show that HSCOT achieves higher constraint coverage and significantly lower regret on held-out environments compared to uniform teaching.
\end{itemize}
Our implementation and experiment code are available online.\footnote{\url{https://github.com/Alilarian/multienv-reward-teaching}}

\vspace{-5pt}
\section{Related work}
\label{sec:others}
\vspace{-5pt}

The problem of identifying minimal teaching sets has been extensively studied
in Algorithmic Teaching \citep{Balbach2009} and Machine Teaching
\citep{liu2017iterative,zhu2018overview,10.5555/2888116.2888288,devidze2020understanding,wang2021teaching}.
These works primarily consider supervised learning settings such as
classification and regression, where teaching reduces to selecting informative
examples for a learner with a fixed hypothesis class.
In contrast, teaching in sequential decision-making requires reasoning about
policies, trajectories, and reward functions, introducing additional structure
absent from standard supervised settings \citep{10.5555/2900929.2900946}.
Our work builds on this paradigm and studies machine teaching for reward learning across multiple environments with heterogeneous
feedback.
The most closely related work studies machine teaching for Inverse Reinforcement Learning (IRL),
introduced by \citet{BrownNiekum2019MachineTeachingIRL}, which formulates
teaching as selecting demonstrations that induce a target reward in a learner
within a single Markov decision process (MDP).
Subsequent work extends this framework to interactive settings where the
teacher adaptively selects demonstrations based on the learner’s current
policy or posterior over rewards
\citep{kamalaruban2019interactive,buning2022interactive}.
Related work considers learner-specific preferences or internal constraints
\citep{tschiatschek2019learner} or feature mismatch
\citep{haug2018teaching}.
However, these approaches assume a single environment and rely on
demonstrations as the primary feedback modality. \citet{pmlr-v139-brown21a} studied the complementary problem of machine testing: how to check whether a learned reward is aligned across multiple environments. 
To our knowledge, we are the first to study machine teaching for reward learning across multiple environments with heterogeneous feedback modalities.

Outside classical IRL teaching, several works study teaching or guidance for
sequential decision-making agents under alternative assumptions, including
curriculum learning \citep{yengera2021curriculum},
planning-based teaching MDPs \citep{peltola2019machine}, and
teaching-by-reinforcement for online RL agents \citep{zhang2021sample}.
However, these methods do not address reward inference or reward generalization across
multiple environments.

A parallel line of work studies reward learning from human feedback,
including demonstrations, rankings, preferences, and corrective signals
\citep{wirth2017survey,christiano2017deep,brown2019extrapolating,
pmlr-v164-myers22a,losey2022physical,wang2025effects}.
Several approaches develop unified probabilistic frameworks for combining
heterogeneous feedback signals
\citep{jeon2020rewardrational,mehta2024unifiedlearning,metz2025reward},
while others study the effects of human rationality assumptions or feedback
noise \citep{ghosal2023effect}.
These works primarily adopt a learner-centric perspective, focusing on how
to infer rewards from feedback rather than how a teacher should select
informative environments and queries.
Recent studies also show that rewards learned in a single environment can
become entangled with environment dynamics and fail to generalize across
settings \citep{fu2017learning,he2021assisted,booth2023perils}. 
An orthogonal line of work, meta-IRL \citep{yu2019meta,chen2024meta}, learns priors for fast task-specific reward learning, whereas we aim to teach a single reward that generalizes across environments without further fine-tuning. 
While prior machine teaching approaches implicitly assume that sufficient
demonstrations in one MDP can identify the reward, some reward ambiguities are structural rather than data-driven.
Our work addresses this limitation by teaching across multiple MDPs,
where variation in dynamics reveals complementary reward constraints.


Conceptually, our approach extends the Behavioral Equivalence Class
framework of \citet{BrownNiekum2019MachineTeachingIRL} beyond
single-MDP demonstration teaching.
We define \emph{generalized behavioral equivalence classes},
which unify reward constraints induced by heterogeneous feedback
across multiple environments.
We further propose a hierarchical teaching structure that separates
environment and feedback selection: the outer level selects
environments whose dynamics expose complementary reward constraints,
while the inner level chooses feedback that efficiently shrinks the
feasible reward region.
To our knowledge, this is the first framework to jointly address
multi-environment teaching and heterogeneous feedback selection within
a unified optimization formulation.


In summary, prior work either teaches within a single MDP or infers rewards from heterogeneous feedback without structuring the teaching itself. We bridge these directions with a teacher-centric, multi-MDP framework that unifies feedback modalities through generalized behavioral equivalence classes.

\vspace{-16pt}
\section{Preliminaries}
\vspace{-12pt}
\subsection{Markov decision processes}
\vspace{-5pt}
We consider a finite Markov decision process (MDP) 
$M = (\mathcal{S}, \mathcal{A}, T, \gamma)$, 
where $\mathcal{S}$ is the state space, $\mathcal{A}$ the action space, 
$T(s' \mid s,a)$ the transition function, and $\gamma \in (0,1)$ the discount factor. 
We assume a linear reward function defined by a feature map 
$\phi : \mathcal{S} \times \mathcal{A} \to \mathbb{R}^d$, 
so that $R_w(s,a) = w^\top \phi(s,a)$, where 
$w\in \mathbb{R}^d$. 
For any policy $\pi$, we denote the discounted feature counts as
$\mu^\pi(s)=\mathbb{E}\!\left[\sum_{t=0}^\infty \gamma^t\phi(s_t,a_t)\mid s_0=s\right]$
and
$\mu^\pi(s,a)=\mathbb{E}\!\left[\sum_{t=0}^\infty \gamma^t\phi(s_t,a_t)\mid s_0=s,a_0=a\right]$. 
The corresponding value and Q-functions are 
$V_w^\pi(s)=w^\top\mu^\pi(s)$ and 
$Q_w^\pi(s,a)=w^\top\mu^\pi(s,a)$,
and we let $\pi^*(w)$ denote an optimal policy under reward parameter $w$.
\vspace{-5pt}
\subsection{Human feedback models}
\vspace{-5pt}
We model human feedback under the reward-rational choice framework \citep{jeon2020rewardrational}. A feedback instance corresponds to a choice $c \in \mathcal{C}$ from a choice set, grounded to a trajectory through a mapping $\psi:\mathcal{C}\rightarrow\Xi$. For any trajectory $\xi = (s_0,a_0,s_1,a_1,\dots)$, define its
discounted feature vector $\Phi(\xi)=\sum_{t=0}^{\infty} \gamma^t \phi(s_t,a_t)$. 
Given reward weights $w$ and rationality parameter $\beta \ge 0$, the likelihood of observing choice $c$ is 
$P(c \mid w, \beta) \propto \exp\!\big(\beta\, r_w(\psi(c))\big)$, 
where $r_w(\xi) = w^\top \Phi(\xi)$.

\noindent\textbf{Demonstrations.}
Demonstrations correspond to reward-rational choices of state–action pairs with $\mathcal{C}=\mathcal{S}\times\mathcal{A}$ and $\psi(s,a)=(s,a)$; the trajectory likelihood factorizes as
\[
P(\xi \mid w,\beta)=
\prod_{(s_t,a_t)\in\xi}
\frac{\exp(\beta Q_{w}(s_t,a_t))}
{\sum_{b\in\mathcal{A}}\exp(\beta Q_{w}(s_t,b))}.
\]
In the high-rationality limit $\beta\rightarrow\infty$, demonstrations implicitly induce preferences $\xi^+\succeq\xi^-$, where $\xi^+$ is the demonstrated trajectory $\xi$ and $\xi^-$ denotes any alternative trajectory. 

\vspace{-2pt}
\noindent\textbf{Comparisons.}
Comparisons select between trajectories
$\xi^+,\xi^-\in\Xi$ with grounding $\psi(\xi_i)=\xi_i$.
Under the Bradley–Terry model,
\[
P(\xi^+ \mid w,\beta)=
\frac{\exp(\beta r_w(\xi^+))}
{\exp(\beta r_w(\xi^+))+\exp(\beta r_w(\xi^-))}.
\]
\vspace{-2pt}
\noindent\textbf{Emergency stop (E-stop).}
Given a rollout $\xi_R$ halted at time $t$, define 
$\xi_{\mathrm{halted}} = \xi_R^{0:t}\xi_R^{t}\ldots\xi_R^{t}$, 
a trajectory of equal length that remains at state $\xi_R^t$ after halting.
The binary choice set $\mathcal{C}=\{\texttt{off},-\}$ grounds to $\psi(\texttt{off})=\xi_{\mathrm{halted}}$ and $\psi(-)=\xi_R$, yielding 
\[
P(\texttt{off}\mid w,\beta)=
\frac{\exp(\beta r_w(\xi_{\mathrm{halted}}))}
{\exp(\beta r_w(\xi_{\mathrm{halted}}))+\exp(\beta r_w(\xi_R))}.
\] 
\vspace{-2pt}
\noindent\textbf{Corrections.}
Corrections provide an improved trajectory
$\xi_{\mathrm{corr}}$ relative to a rollout $\xi_R$,
interpreted as a localized comparison:
\[
P(\xi_{\mathrm{corr}}\mid w,\beta)=
\frac{\exp(\beta r_w(\xi_{\mathrm{corr}}))}
{\exp(\beta r_w(\xi_{\mathrm{corr}}))+\exp(\beta r_w(\xi_R))}.
\]

\vspace{-4pt}
As $\beta \to \infty$, all modalities reduce to linear preference constraints $w^\top(\Phi(\xi^{+})-\Phi(\xi^{-}))\ge0$, providing the unified constraint interpretation used throughout. 
We adopt a teacher-centric view in which environments and feedback instances are selected across multiple MDPs to expose informative reward constraints. Illustrations of the feedback modalities are shown in Figure~\ref{fig:feedback_trajs}. 

\begin{figure*}[t]                
  \centering
  \begin{subfigure}[t]{0.18\textwidth}
    \centering
    \includegraphics[width=\linewidth]{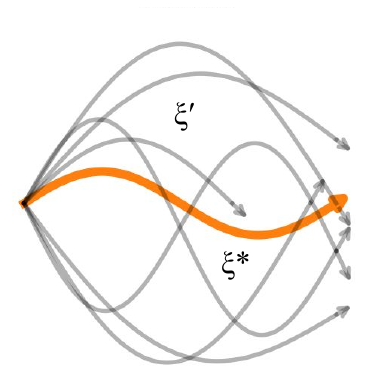}
    \subcaption{Demonstration}
  \end{subfigure}\quad
  \begin{subfigure}[t]{0.18\textwidth}
    \centering
    \includegraphics[width=\linewidth]{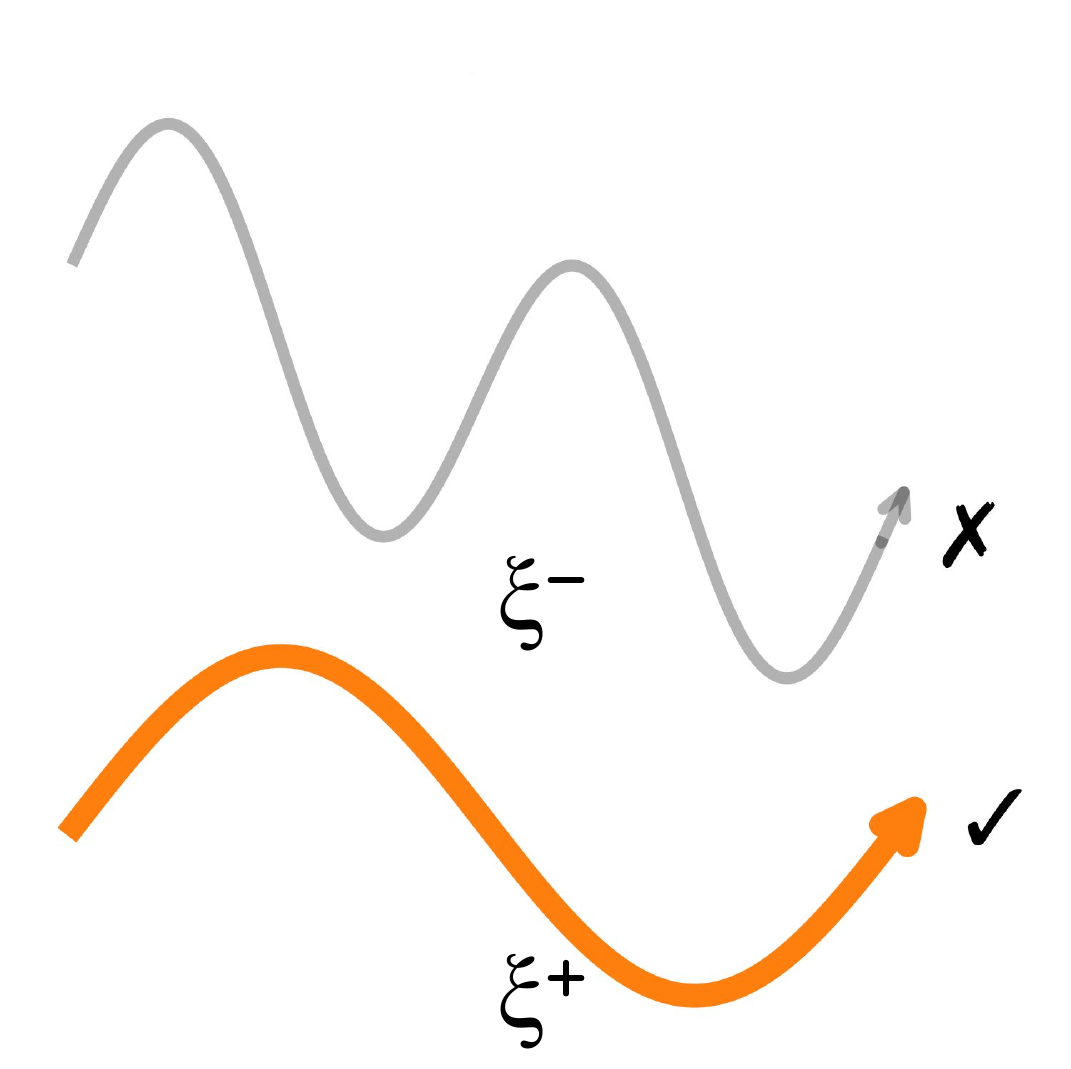}
    \subcaption{Comparison}
  \end{subfigure}\quad
    \begin{subfigure}[t]{0.18\textwidth}
    \centering
    \includegraphics[width=\linewidth]{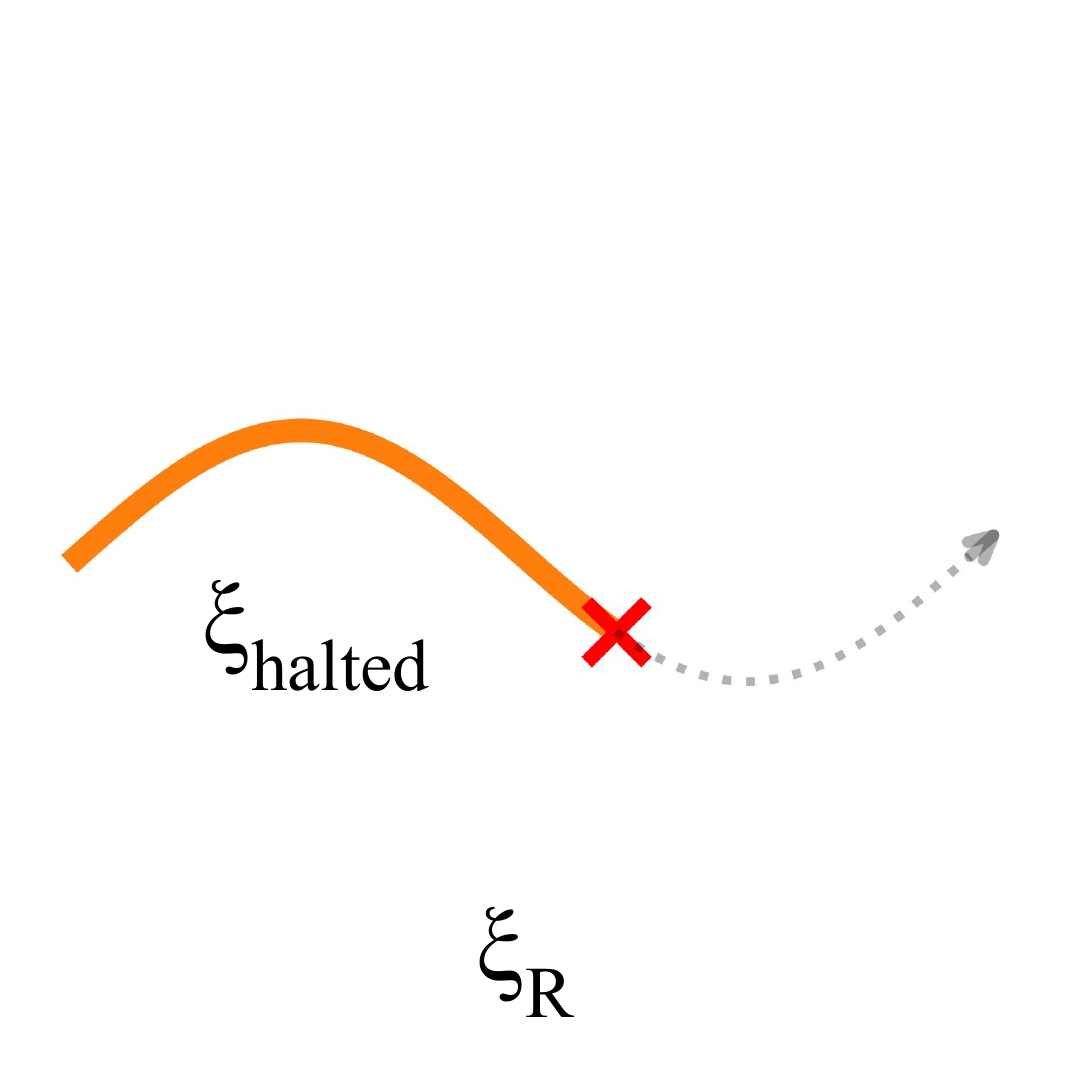}
    \subcaption{Estop}
  \end{subfigure}
  \begin{subfigure}[t]{0.18\textwidth}
    \centering
    \includegraphics[width=\linewidth]{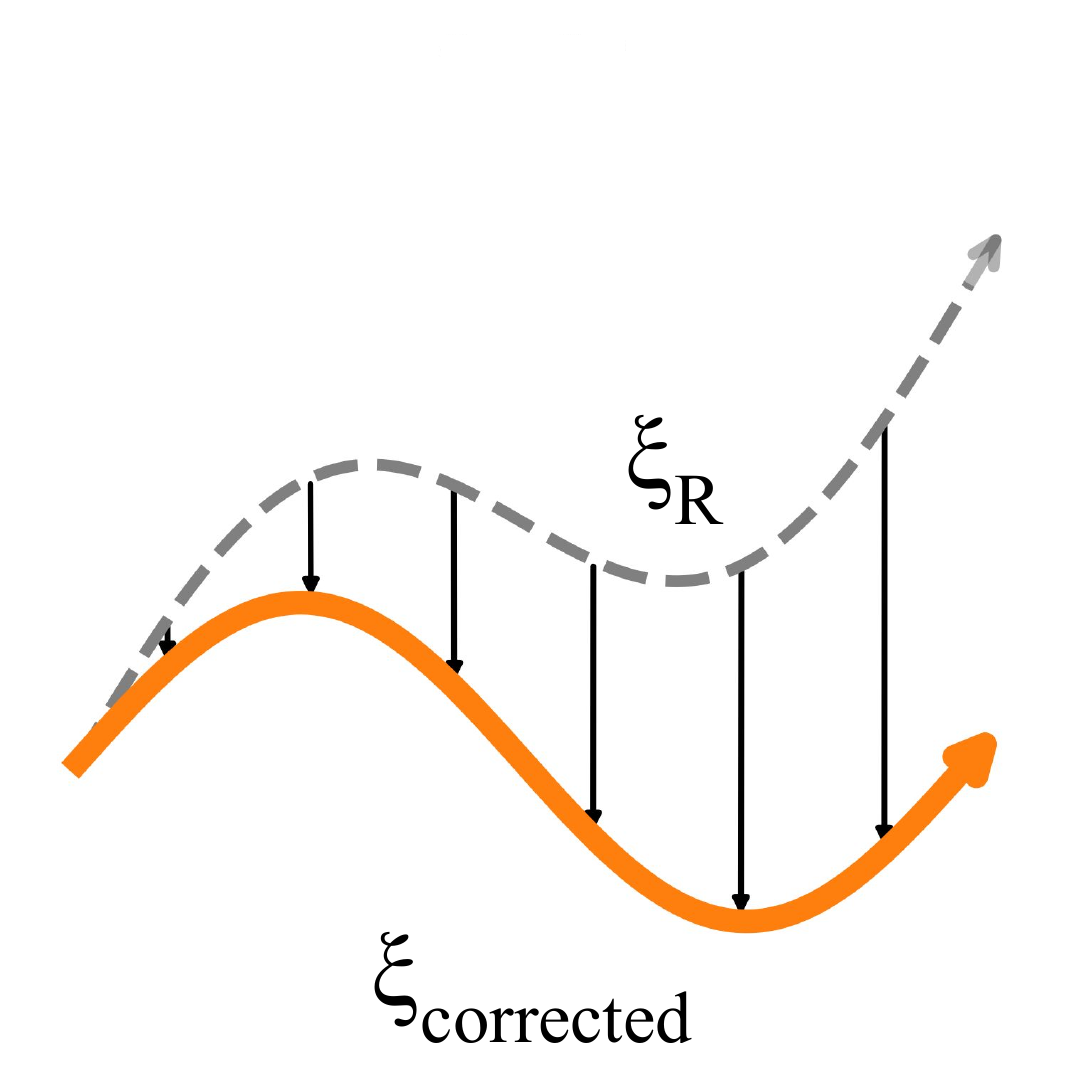}
    \subcaption{Correction}
  \end{subfigure}\quad
\caption{\textbf{Human feedback modalities as trajectory comparisons.}
Orange trajectories denote the preferred outcome.
(a) Demonstration: expert trajectory $\xi^{*}$ preferred over alternatives.
(b) Comparison: preference $\xi^+ \succ \xi^-$.
(c) E-stop: halted trajectory $\xi_{\text{halted}} \succ \xi_R$.
(d) Correction: corrected trajectory $\xi_{\mathrm{corr}} \succ \xi_R$.}

\label{fig:feedback_trajs}
\end{figure*}
\vspace{-5pt}
\section{Feedback informativeness}
\vspace{-5pt}
We analyze how different feedback modalities constrain the feasible reward space. We provide intuition for the finite-data regime and analyze idealized unlimited-data regimes to understand how each modality reduces reward ambiguity. A central tool for reasoning about reward ambiguity is the Behavioral Equivalence Class (BEC) \citep{BrownNiekum2019MachineTeachingIRL}.

\begin{definition}[Behavioral equivalence class (BEC)]
Let $\pi$ be a policy in a single MDP $M$ with linear reward
$R_w(s,a)=w^\top\phi(s,a)$.
The behavioral equivalence class of $\pi$ is
\[
\mathrm{BEC}(\pi)
=
\left\{
w \in \mathbb{R}^d \;\middle|\;
w^\top\!\big(\mu^\pi(s,\pi(s))-\mu^\pi(s,b)\big) \ge 0,
\ \forall s\in\mathcal{S},\ \forall b\in\mathcal{A}
\right\},
\]
\end{definition}
i.e., the intersection of halfspaces defined by the optimality constraints of $\pi$. 
This constraint-based view extends naturally to arbitrary feedback datasets.

\begin{definition}[Generalized behavioral equivalence class (gBEC)]
Let $D$ be a feedback dataset collected across one or more environments.
Each feedback instance induces one or more linear preference constraints of the form $w^\top\Delta\Phi\ge0$, where $\Delta\Phi=\Phi(\xi^+)-\Phi(\xi^-)$.

Let $\Delta\Phi(D)$ denote the set of all feature-difference vectors
induced by the feedback dataset $D$.
The generalized behavioral equivalence class induced by $D$ is
\begin{equation}
\mathrm{gBEC}(D)
=
\Bigl\{
w\in\mathbb{R}^d
\;\big|\;
w^\top \Delta\Phi \ge 0,
\;\forall \Delta\Phi \in \Delta\Phi(D)
\Bigr\}.
\label{eq:gbec_def}
\end{equation}
\end{definition}
\vspace{-0.8em}

\noindent\textbf{Connection to BECs.}
When $D$ consists of optimal demonstrations from a policy $\pi$ that visits all states, $\mathrm{gBEC}(D)$ coincides with the classical $\mathrm{BEC}(\pi)$, since each demonstrated state $s$ induces optimality constraints of the form
$w^\top(\mu^\pi(s,\pi(s))-\mu^\pi(s,b)) \ge 0$
for all $b\in\mathcal{A}$.
\vspace{-0.6em}

\paragraph{Feasible reward region and ambiguity.}
\vspace{-0.4em}
For any dataset $D$, the feasible reward region is $\mathrm{gBEC}(D)$. 
We quantify reward ambiguity by its volume
\[
G(D)=\mathrm{Volume}\bigl(\mathrm{gBEC}(D)\bigr),
\]
assuming $\|w\|_2\le1$ for boundedness. Smaller $G(D)$ indicates tighter reward identification.

\vspace{-5pt}
\subsection{Constraint geometry under limited and unlimited feedback}
\vspace{-5pt}
To build intuition about how feedback modalities shape the geometry of the feasible reward region, we first analyze how different forms of feedback constrain $\mathrm{gBEC}(D)$ and reduce reward ambiguity. We consider two regimes: (i) finite-data teaching with a limited feedback budget, and (ii) unlimited-data teaching, which isolates intrinsic informativeness by examining the feasible region induced by arbitrarily many feedback instances of a single type. 
\vspace{-11pt} 

\paragraph{Limited-data teaching regime.}
Figure~\ref{fig:limite_feasible_region} visualizes feasible reward regions of the MDP in Figure~\ref{fig:grid1} under tight budgets as heatmaps over reward weight space, where darker areas indicate weights that remain consistent across many randomly sampled feedback sets. Here every modality uses the same fixed budget of five queries.
Demonstrations (Fig.~\ref{fig:limited_feasible_region_demonstration}) produce the smallest $\mathrm{gBEC}(D)$ and lowest ambiguity $G(D)$. Each demonstration implicitly enforces many optimality constraints, yielding strong shrinkage of the feasible region. 
In contrast, comparisons (Fig.~\ref{fig:limited_feasible_region_pairwise}), corrections (Fig.~\ref{fig:limited_feasible_region_correction}), and E-stop feedback (Fig.~\ref{fig:limited_feasible_region_estop}) each contribute only one linear constraint per instance, resulting in progressively larger feasible regions. Corrections stay more localized thanks to their shared start-state structure, while E-stop feedback yields the largest $\mathrm{gBEC}(D)$ because its constraints are highly trajectory-specific and local. 
Demonstrations are especially powerful under limited budgets, as each piece of feedback delivers rich, multi-faceted behavioral information.
A budget-matched sweep isolating per-query efficiency via feasibility heatmaps and feasible-region volume curves is provided in Appendix~\ref{app:budget_sweep}.
\vspace{-11pt}

\paragraph{Unlimited-data teaching regime.}
We next examine the idealized limit in which a teacher can provide
arbitrarily many feedback instances of a single type.
Figure~\ref{fig:feasible_region} shows the corresponding
feasible reward regions.
In contrast to the finite-data setting, comparisons induce the
smallest $\mathrm{gBEC}(D)$ and lowest ambiguity $G(D)$,
followed by corrections, demonstrations, and E-stops.
This reversal highlights an important distinction:
while demonstrations provide strong information per example,
comparison feedback becomes more informative in the unlimited-data setting because it can
enforce global ordering constraints across the trajectory space. 
Taken together, these observations reveal a key teaching trade-off.
Different feedback modalities influence the geometry of
$\mathrm{gBEC}(D)$ in fundamentally different ways, and their relative
effectiveness depends on the available teaching budget.
This motivates the theoretical analysis in the next subsections,
where we formally characterize how feedback modalities constrain reward
ambiguity in the unlimited-data regime.

\begin{figure}[t]
  \centering
  \begin{subfigure}{0.22\textwidth}
    \centering
    \includegraphics[width=\textwidth]{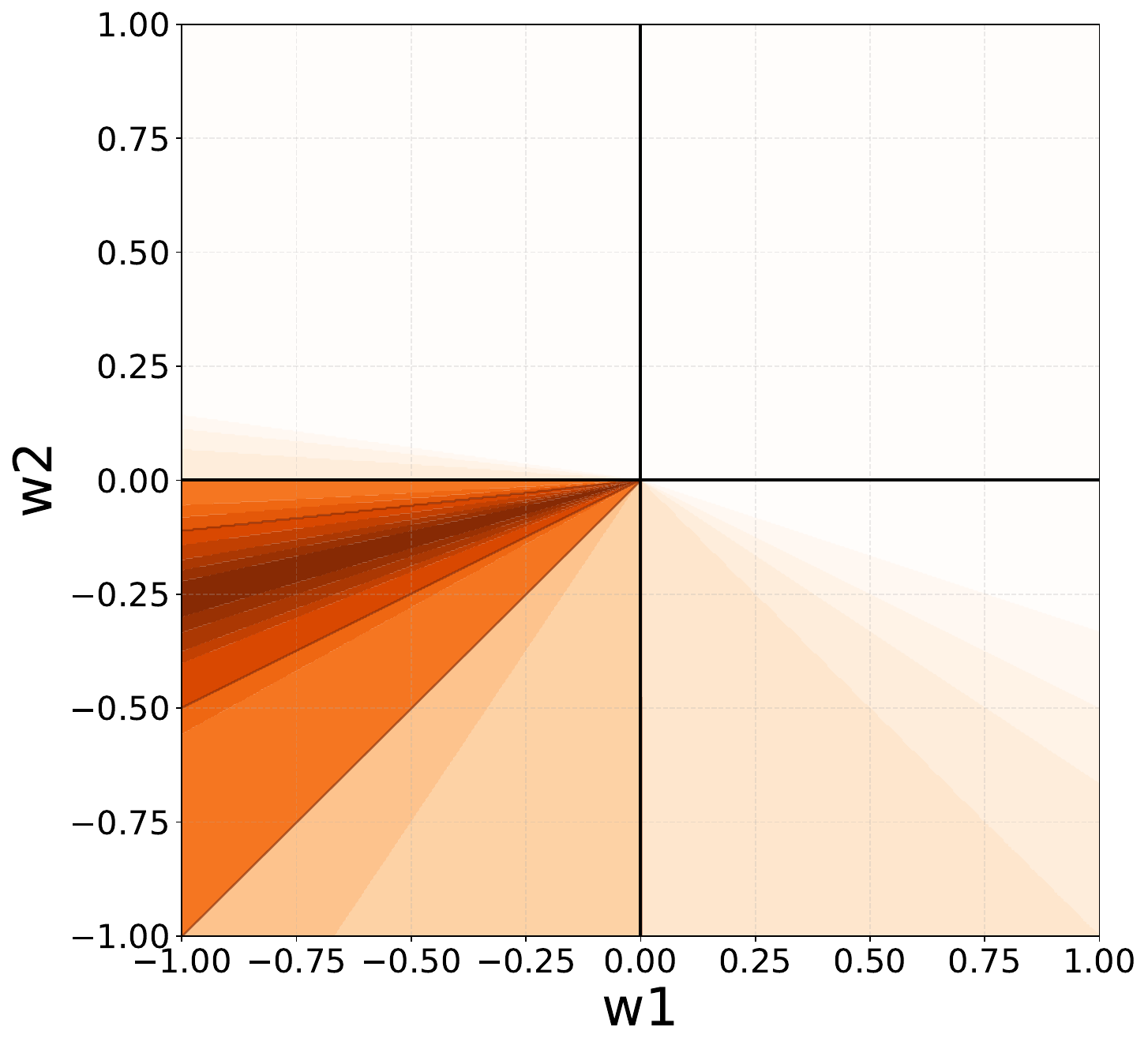}
    \caption{Comparison}
    \label{fig:limited_feasible_region_pairwise}
  \end{subfigure}
  \hfill
  \begin{subfigure}{0.22\textwidth}
    \centering
    \includegraphics[width=\textwidth]{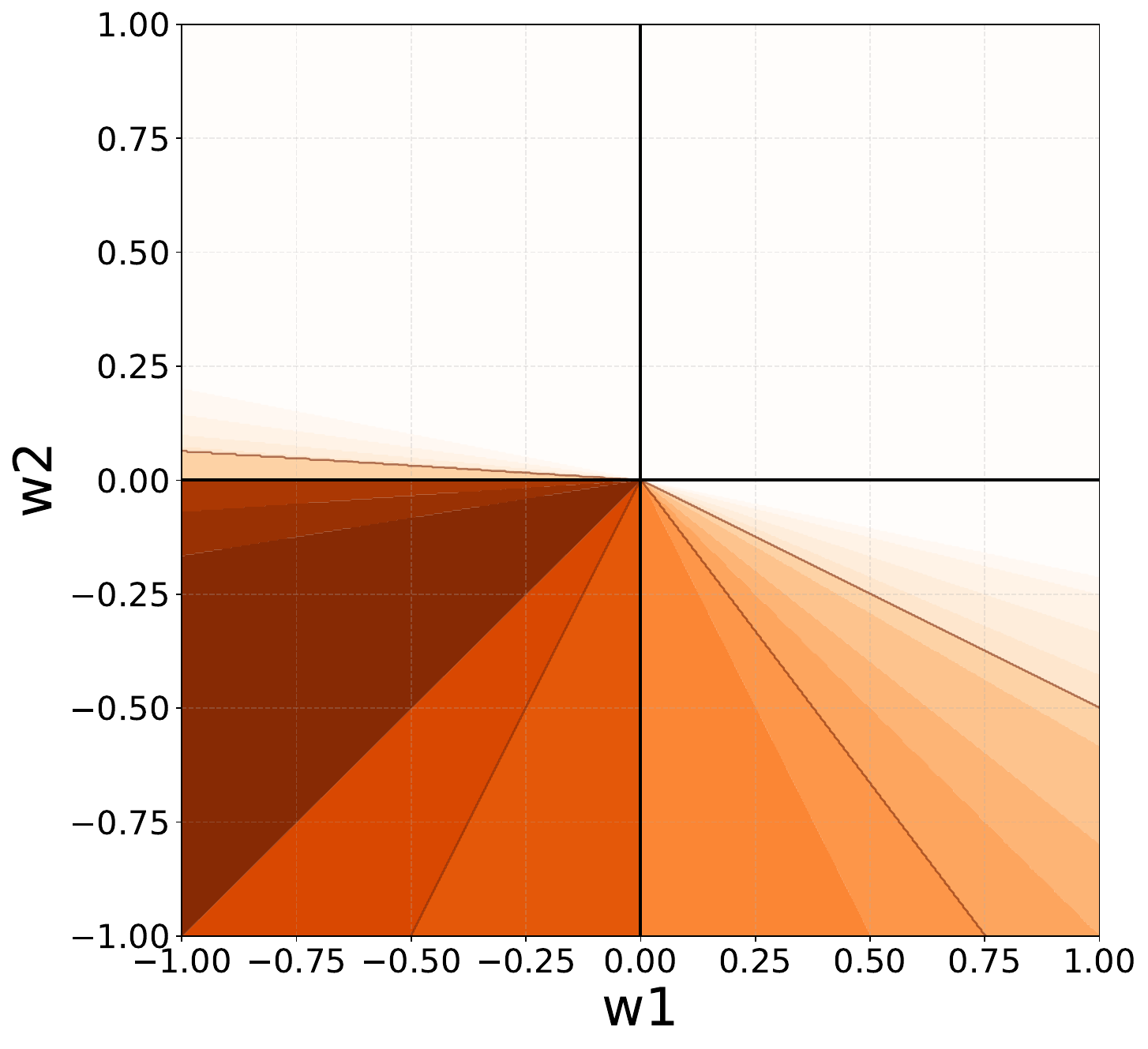}
    \caption{Correction}
    \label{fig:limited_feasible_region_correction}
  \end{subfigure}
  \hfill
  \begin{subfigure}{0.22\textwidth}
    \centering
    \includegraphics[width=\textwidth]{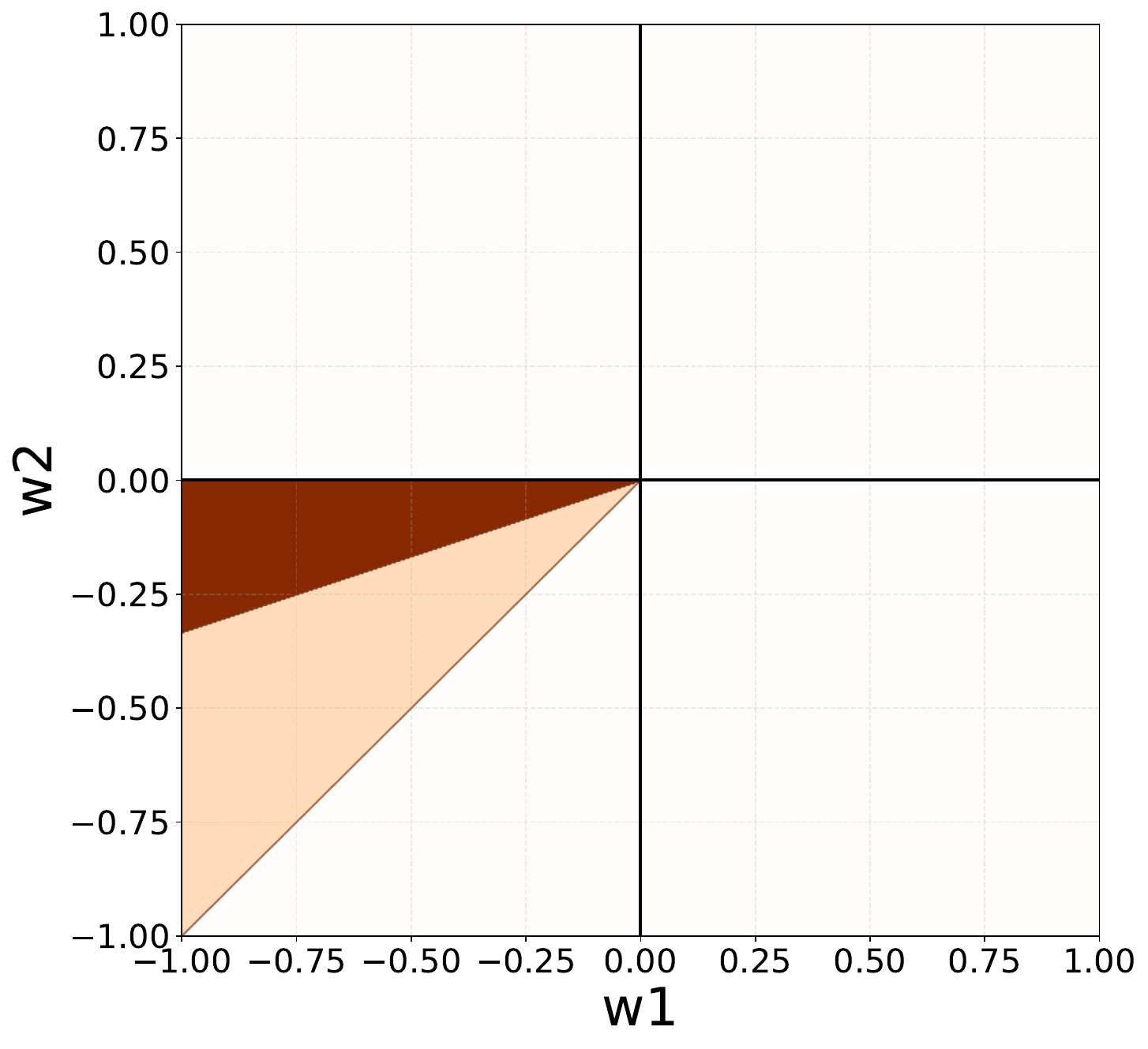}
    \caption{Demonstration}
    \label{fig:limited_feasible_region_demonstration}
  \end{subfigure}
  \hfill
  \begin{subfigure}{0.22\textwidth}
    \centering
    \includegraphics[width=\textwidth]{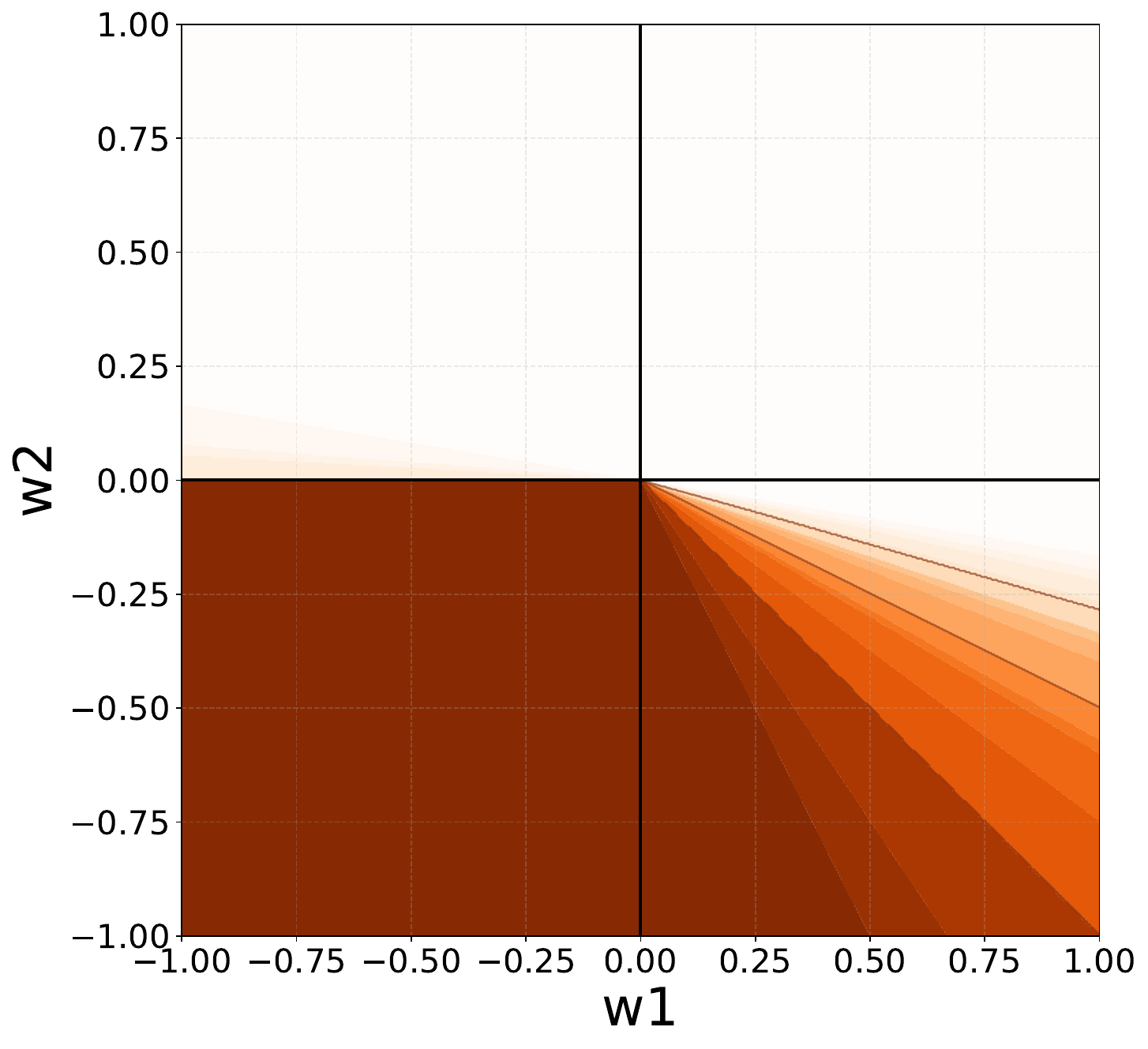}
    \caption{E-stop}
    \label{fig:limited_feasible_region_estop}
  \end{subfigure}

\caption{Finite-data feasible reward regions under different types of feedback.
Heatmaps visualize the empirical feasibility probability over the reward
space---darker colors indicate weight vectors that remain feasible
across many randomly sampled feedback sets.
The heatmap illustrates how different feedback modalities constrain
the generalized behavioral equivalence class $\mathrm{gBEC}(D)$ and
affect reward ambiguity $G(D)$ in the data-limited regime (corresponding to the layout in Figure~\ref{fig:grid1}). }

\label{fig:limite_feasible_region}

\end{figure}

\begin{figure}[t]
  \centering
  \begin{subfigure}{0.22\textwidth}
    \centering
    \includegraphics[width=\textwidth]{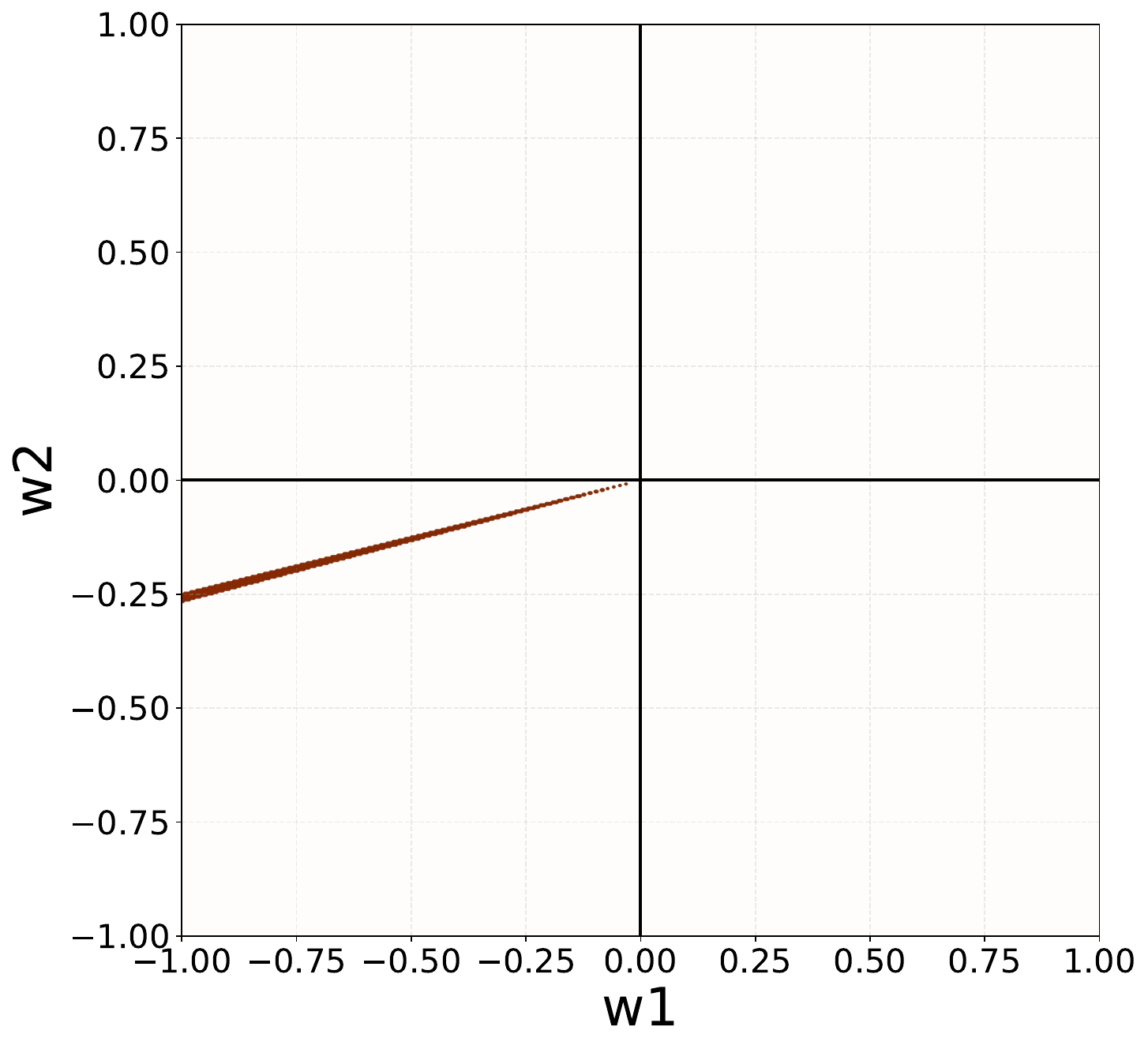}
    \caption{Comparison}
    \label{fig:feasible_region_pairwise}
  \end{subfigure}
  \hfill
  \begin{subfigure}{0.22\textwidth}
    \centering
    \includegraphics[width=\textwidth]{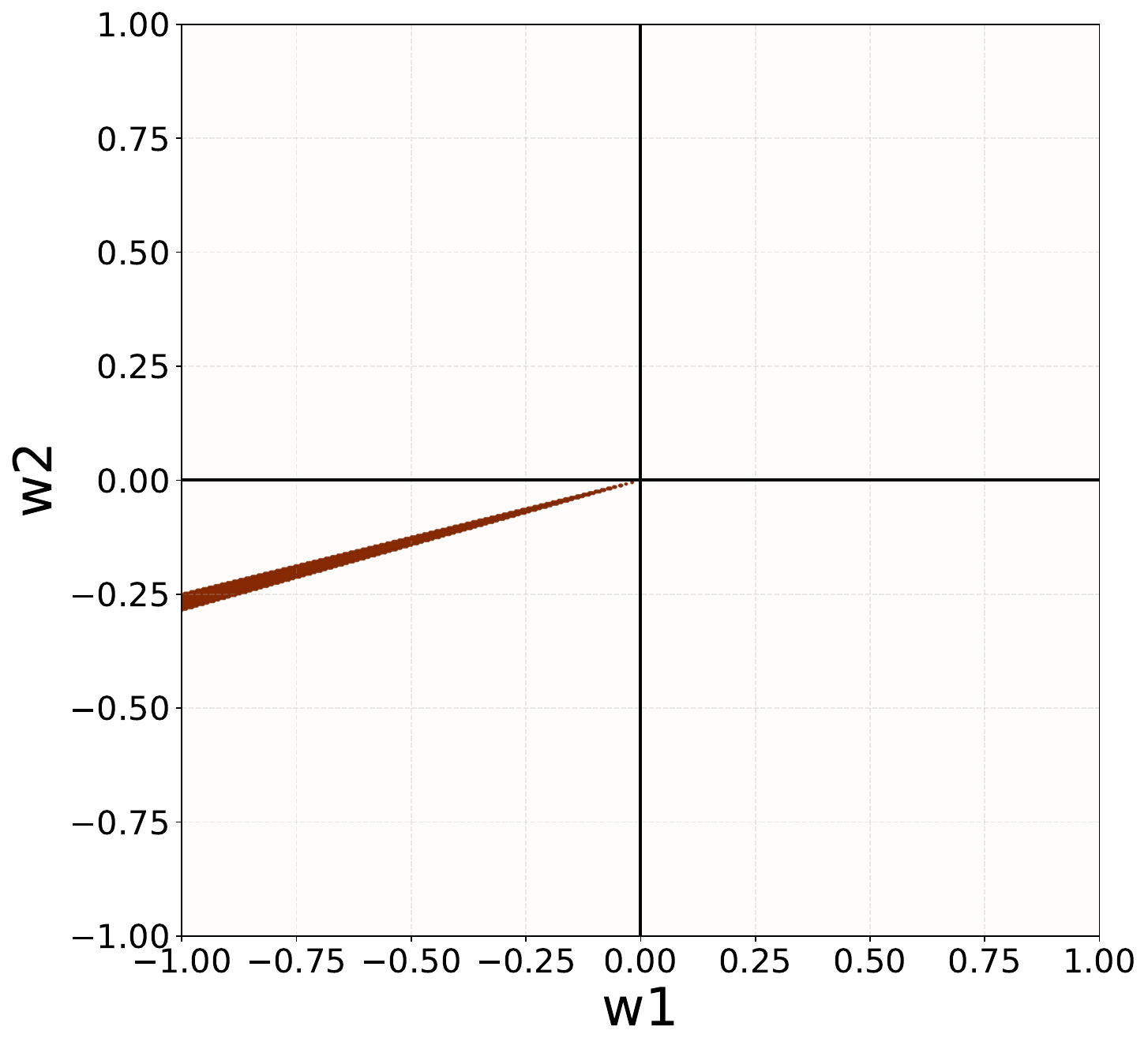}
    \caption{Correction}
    \label{fig:feasible_region_correction}
  \end{subfigure}
  \hfill
  \begin{subfigure}{0.22\textwidth}
    \centering
    \includegraphics[width=\textwidth]{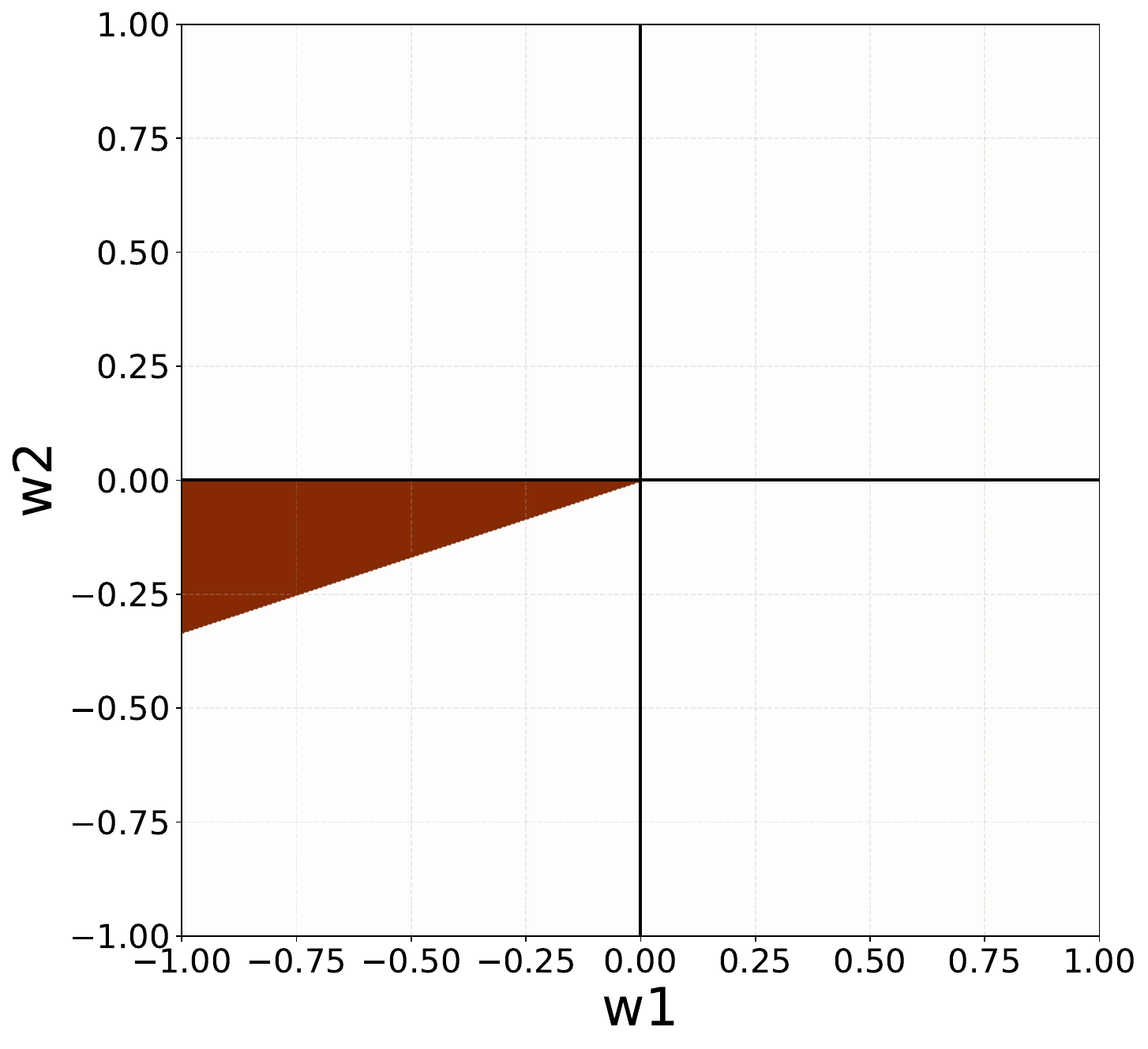}
    \caption{Demonstration}
    \label{fig:feasible_region_demonstration}
  \end{subfigure}
  \hfill
  \begin{subfigure}{0.22\textwidth}
    \centering
    \includegraphics[width=\textwidth]{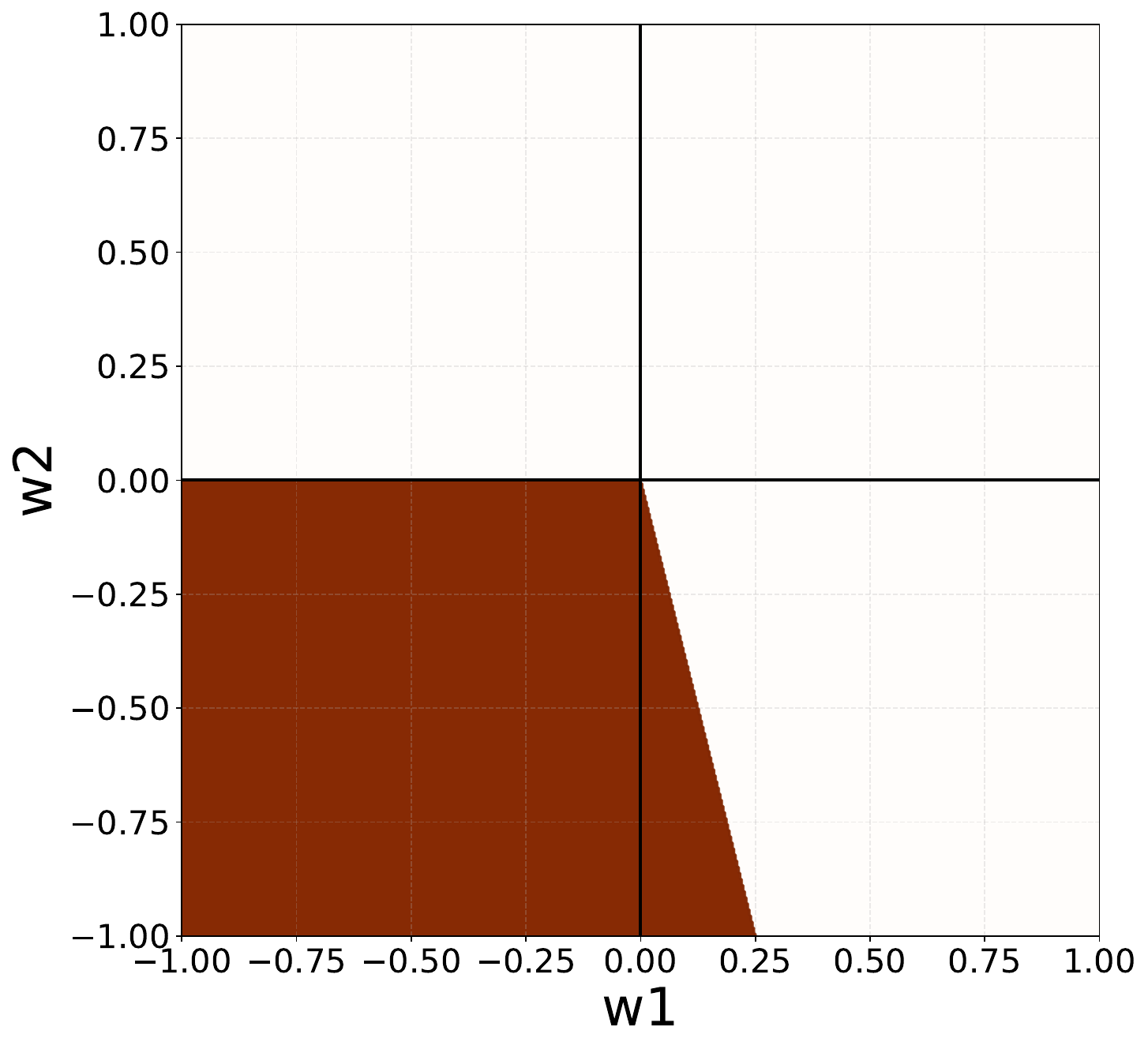}
    \caption{E-stop}
    \label{fig:feasible_region_estop}
  \end{subfigure}
  
    \caption{Feasible reward regions $\mathrm{gBEC}(D)$ for different feedback modalities. Subfigures (a)–(d) show how infinite comparisons, corrections, demonstrations, and E-stops, respectively, shape the reward feasibility region. The black dot indicates the ground-truth reward parameter (corresponding to the layout in Figure~\ref{fig:grid1}).}

\label{fig:feasible_region}
\end{figure}

\section{Theoretical analysis of feedback modalities}

The geometric analysis above showed that different feedback modalities reduce reward ambiguity in qualitatively different ways. 
We now formalize these differences by analyzing how each modality constrains the feasible reward region in the unlimited-data regime.

Let $\Xi$ denote the trajectory space of environment $M$, i.e., the set of all trajectories that can arise in $M$. 
Throughout this section we assume the teacher can obtain arbitrarily many feedback instances of a given modality. 
We use comparisons as a reference modality. 
The comparison modality compares any two trajectories $\xi_i,\xi_j \in \Xi$, allowing ordering constraints across the entire trajectory space. 
Therefore, in the unlimited-data regime, a complete set of preferences consistent with the true reward $w^\star$ induces a full ordering over $\Xi$ and yields the feasible region
\vspace{-5pt}
\[
H_{\text{comparison}} =
\Bigl\{
w \in \mathbb{R}^d : \;
w^\top(\Phi(\xi_i)-\Phi(\xi_j)) \ge 0
\;\forall\, \xi_i,\xi_j \in \Xi
\text{ with }
w^{\star\top}\Phi(\xi_i) > w^{\star\top}\Phi(\xi_j)
\Bigr\}.
\]
\vspace{-5pt}
We compare this region to those induced by demonstrations, corrections, and E-stop feedback.

\vspace{-3pt}

\subsection{Comparison vs. demonstrative feedback}
\vspace{-5pt}

As mentioned previously, demonstrations induce implicit constraints involving trajectories from the same initial state. 
Comparison feedback, in contrast, can rank any two trajectories in $\Xi$, imposing ordering constraints across the entire trajectory space.

\begin{proposition}[Comparisons strictly reduce reward ambiguity compared to demonstrations]
\label{prop:pairwise_vs_demo}

Let $\Xi^\star \subseteq \Xi$ denote the trajectories that are optimal under $w^\star$. 
Demonstrations induce the feasible region
\vspace{-5pt}
\[
H_{\text{demo}} =
\Bigl\{
w \in \mathbb{R}^d : \;
w^\top(\Phi(\xi^\star)-\Phi(\xi')) \ge 0 \;
\forall \xi^\star \in \Xi^\star,\;
\forall \xi' \in \Xi \text{ with the same initial state as } \xi^\star
\Bigr\}.
\]

\vspace{-3pt}
Then
\vspace{-5pt}
\[
H_{\text{comparison}} \subsetneq H_{\text{demo}},
\quad\text{and}\quad
G(H_{\text{comparison}}) < G(H_{\text{demo}}).
\]

\end{proposition}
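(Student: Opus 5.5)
The plan is to prove the inclusion by showing that every constraint defining $H_{\text{demo}}$ also appears among the constraints defining $H_{\text{comparison}}$. Strictness will come from a separating weight vector, and the volume claim from an open region inside the set difference.

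\textbf{Step 1 (inclusion).} Take any constraint of $H_{\text{demo}}$. It has the form $w^\top(\Phi(\xi^\star)-\Phi(\xi'))\ge 0$, where $\xi^\star\in\Xi^\star$ and $\xi'$ starts from the same initial state. Since $\xi^\star$ is optimal under $w^\star$, we have $w^{\star\top}\Phi(\xi^\star)\ge w^{\star\top}\Phi(\xi')$. There are two cases.
\begin{itemize}
\item If the inequality is strict, the pair $(\xi^\star,\xi')$ appears verbatim among the comparison constraints.
\item If it is an equality, then $\xi'$ is also optimal. This tie case needs care, because $H_{\text{comparison}}$ as written only constrains strictly ordered pairs. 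I would handle it by adopting the convention that a complete preference set reports ties as two-sided constraints $w^\top(\Phi(\xi_i)-\Phi(\xi_j))=0$, which are at least as strong. Alternatively, I would assume $w^\star$ induces a unique optimal trajectory per start state, the generic case, so that the tie constraints are vacuous.
\end{itemize}
In either case every constraint of $H_{\text{demo}}$ is implied by those of $H_{\text{comparison}}$, so $H_{\text{comparison}}\subseteq H_{\text{demo}}$.

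\textbf{Step 2 (strictness).} Exhibit $w\in H_{\text{demo}}\setminus H_{\text{comparison}}$. Comparison constraints include cross-start-state pairs and pairs of two suboptimal trajectories, which demonstrations never constrain. I would pick two suboptimal trajectories $\xi_i,\xi_j$ with $w^{\star\top}\Phi(\xi_i)>w^{\star\top}\Phi(\xi_j)$ whose difference direction $\Phi(\xi_i)-\Phi(\xi_j)$ is not in the cone generated by the demonstration difference vectors $\{\Phi(\xi^\star)-\Phi(\xi')\}$.
\begin{itemize}
\item For example, $\xi_i,\xi_j$ could be obtained by appending different suboptimal actions to a common prefix, or could start from different initial states.
\item By Farkas' lemma, since $\Phi(\xi_i)-\Phi(\xi_j)$ is not in the closed cone $K_{\text{demo}}$, there is a $w$ with $w^\top v\ge 0$ for all $v\in K_{\text{demo}}$ but $w^\top(\Phi(\xi_i)-\Phi(\xi_j))<0$.
\end{itemize}

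\textbf{Step 3 (volume).} Turn the separating vector into positive measure. $H_{\text{demo}}$ is a closed convex cone. I would take $w$ in its interior, which I expect to be nonempty since $w^\star$ or a perturbation lies there, and violating the extra comparison constraint strictly. Strict inequalities are open conditions, so a small ball around $w$, intersected with $\|w\|_2\le 1$, lies in $H_{\text{demo}}\setminus H_{\text{comparison}}$. This gives $G(H_{\text{comparison}})<G(H_{\text{demo}})$.

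\textbf{Main obstacle.} Step 2 is the hardest part. It requires a nondegeneracy assumption on the MDP: that some comparison direction is not a conic combination of demonstration directions, and that the demonstration cone has nonempty interior. This can fail in degenerate MDPs, for instance when $d=1$ or when features make all trajectory orderings determined by optimality constraints. I would first try to use cross-initial-state comparisons, arguing that the demonstration cone has no control over relative values across start states. If that is not generally available, I would state the nondegeneracy condition explicitly as the hypothesis under which strictness holds.
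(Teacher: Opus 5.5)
Your proposal is correct and follows the same three-step structure as the paper's proof. The inclusion holds because each demonstration constraint is a comparison constraint, strictness comes from a weight vector that violates a $w^\star$-consistent ordering of suboptimal trajectories that the demonstrations do not imply, and the volume gap comes from that violation having positive measure. The nondegeneracy hypothesis you flag as the main obstacle is exactly the genericity assumption the paper adds in its appendix, namely that $H_{\text{demo}}\cap\{w: w^\top(\Phi(\xi_i)-\Phi(\xi_j))<0\}$ has positive measure; your tie handling, Farkas separation (take it against the closed conic hull, since infinitely many trajectories need not generate a closed cone), and open-ball volume argument make rigorous what the paper's Steps 1--3 only assert.
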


\vspace{-6pt}

\subsection{Comparison vs. corrective feedback}
\vspace{-5pt}

Corrective feedback compares a corrected trajectory $\xi_{\mathrm{corr}}$ with the agent's rollout $\xi_R$, where both trajectories share the same initial state.

\begin{proposition}[Comparisons strictly reduce reward ambiguity compared to corrective feedback]
\label{prop:comparison_vs_correction}

Corrective feedback induces the feasible region
\vspace{-5pt}
\[
H_{\text{correction}} =
\Bigl\{
w \in \mathbb{R}^d : \;
w^\top(\Phi(\xi_{\mathrm{corr}})-\Phi(\xi_R)) \ge 0 \;
\forall \xi_{\mathrm{corr}}, \xi_R \in \Xi
\text{ with the same initial state}
\Bigr\}.
\]
\vspace{-3pt}
Then
\vspace{-5pt}
\[
H_{\text{comparison}} \subsetneq H_{\text{correction}},
\quad\text{and}\quad
G(H_{\text{comparison}}) < G(H_{\text{correction}}).
\]

\end{proposition}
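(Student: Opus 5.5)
The plan is to mirror the structure used for Proposition~\ref{prop:pairwise_vs_demo}: first show that every correction constraint is already a comparison constraint, which gives the inclusion. Then exhibit a reward direction that satisfies all same-start constraints but violates some cross-start constraint, which gives strictness. Finally, upgrade strict set inclusion to strict volume inequality by finding an open set of such directions.

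Throughout, I read $H_{\text{correction}}$ as ranging over pairs in which the correction is genuinely preferred under the true reward, i.e.\ $w^{\star\top}\Phi(\xi_{\mathrm{corr}})>w^{\star\top}\Phi(\xi_R)$. Otherwise the constraint set would contain both $\Delta\Phi$ and $-\Delta\Phi$ and collapse to a subspace. I also assume $\Xi$ is finite (finite MDP, finite horizon or truncated trajectories), so that both regions are polyhedral cones intersected with the unit ball.

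\textbf{Inclusion.} Take $w\in H_{\text{comparison}}$ and any admissible correction pair $(\xi_{\mathrm{corr}},\xi_R)$. Since $w^{\star\top}\Phi(\xi_{\mathrm{corr}})>w^{\star\top}\Phi(\xi_R)$, this pair is one of the pairs $(\xi_i,\xi_j)$ indexed in $H_{\text{comparison}}$. Hence $w^\top(\Phi(\xi_{\mathrm{corr}})-\Phi(\xi_R))\ge0$. Therefore the constraint family defining $H_{\text{correction}}$ is a subfamily of the one defining $H_{\text{comparison}}$, and $H_{\text{comparison}}\subseteq H_{\text{correction}}$.

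\textbf{Strictness.} This is the main obstacle. It needs a non-degeneracy condition on the environment, the same one implicitly used in Proposition~\ref{prop:pairwise_vs_demo}. There must exist two initial states $s\neq s'$ and trajectories $\xi_i$ starting at $s$ and $\xi_j$ starting at $s'$ with $w^{\star\top}\Phi(\xi_i)>w^{\star\top}\Phi(\xi_j)$, such that the difference $\Phi(\xi_i)-\Phi(\xi_j)$ is not in the conic hull of the same-start differences.

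I would argue that this cone condition is exactly what is needed via Farkas' lemma. If $v=\Phi(\xi_i)-\Phi(\xi_j)$ is not a nonnegative combination of the correction vectors $\{\Delta\Phi\}$, then there is a $w_0$ with $w_0^\top\Delta\Phi\ge0$ for every correction vector and $w_0^\top v<0$. Such a $w_0$ lies in $H_{\text{correction}}\setminus H_{\text{comparison}}$. To justify that the condition holds generically, I would point to the typical situation. Same-start differences only involve how trajectories from a common start compare, so adding a term that depends on the start state leaves them essentially unchanged. Concretely, when features include start-state-dependent components, one can perturb $w^\star$ along a direction that shifts the returns of all trajectories from $s'$ upward relative to those from $s$. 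This preserves every within-start ordering and flips a cross-start ordering. The figures in the previous section serve as a concrete witness of this.

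\textbf{Volume.} Strict inclusion alone does not give strict volume, since a lower-dimensional difference has measure zero. So I will show that $H_{\text{correction}}\setminus H_{\text{comparison}}$, intersected with the unit ball, contains an open set.

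First, assume $H_{\text{correction}}$ has nonempty interior; for instance $w^\star$ satisfies every defining constraint strictly, since all defining pairs are strictly ordered. Let $w_1$ be an interior point of $H_{\text{correction}}$. Consider the segment $w_\lambda=(1-\lambda)w_1+\lambda w_0$. For every $\lambda\in[0,1)$, the point $w_\lambda$ is interior to $H_{\text{correction}}$, since it is a convex combination of an interior point and a point of the convex set. The map $\lambda\mapsto w_\lambda^\top v$ is continuous and strictly negative at $\lambda=1$, so it stays strictly negative for $\lambda<1$ close enough to $1$. At such a $\lambda$, $w_\lambda$ is interior to $H_{\text{correction}}$ and satisfies the strict inequality $w_\lambda^\top v<0$. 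Both properties are open conditions, so a small ball around $w_\lambda$ lies in $H_{\text{correction}}\setminus H_{\text{comparison}}$.

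After rescaling into $\|w\|_2\le1$, which both cones permit, this ball has positive measure. Combined with the inclusion, this yields $G(H_{\text{comparison}})<G(H_{\text{correction}})$.
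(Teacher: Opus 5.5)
Your proof is correct and follows the paper's own three-step argument. Inclusion holds because each ($w^\star$-consistent) correction constraint is one of the comparison constraints. Strictness comes from a cross-initial-state ordering that the same-start constraints do not imply: the paper takes this from its genericity Assumption~\ref{ass:richness}(ii), and you make it precise via Farkas' lemma as the condition that $\Phi(\xi_i)-\Phi(\xi_j)$ lies outside the conic hull of the same-start differences.

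Your interior-point/segment argument for $G(H_{\text{comparison}})<G(H_{\text{correction}})$ is more careful than the paper's Step~3. That step only asserts that adding constraints lowers the volume, which really rests on the positive-measure clause of the assumption; you instead derive positive measure from $w^\star$ lying in the interior of $H_{\text{correction}}$, using your finiteness of $\Xi$.
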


\vspace{-6pt}

\subsection{Comparison vs. E-stop feedback}
\vspace{-5pt}

E-stop feedback compares a halted prefix $\xi^{0:t}$ with the continuation of the same trajectory $\xi$, penalizing undesirable future behavior. 
Because both share the same prefix, the feature difference $\Phi(\xi^{0:t})-\Phi(\xi)$ depends only on states visited after time $t$ along the same trajectory. 
Consequently, E-stop constraints are confined to the feature subspace induced by that trajectory.



\begin{lemma}[E-stop constraints are trajectory-local]
\label{lemma:distincttraj}
Fix a trajectory $\xi \in \Xi$.
Any E-stop constraint comparing $\xi^{0:t}$ with $\xi$ lies in the subspace
\[
\operatorname{span}
\{\phi(s,a) \mid (s,a) \text{ appears in } \xi\}.
\]
In contrast, a pairwise comparison between $\xi$ and any $\xi' \in \Xi$ 
that visits a state-action pair $(s',a')$ with $\phi(s',a')$ linearly 
independent of this subspace produces a feature difference that lies 
outside it.
\end{lemma}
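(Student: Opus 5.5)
The plan is to compute the E-stop feature difference explicitly from the definition of $\xi_{\mathrm{halted}}$ in the preliminaries and read off its span. Write $\xi=(s_0,a_0,s_1,a_1,\dots)$, and let the halted trajectory be $\xi^{0:t}$ followed by repetitions of the halting state $s_t$. Since $\Phi$ is a discounted sum, the common prefix cancels term by term:
\[
\Phi(\xi^{0:t})-\Phi(\xi)=\sum_{k\ge t}\gamma^k\bigl(\phi(s_t,a_{\mathrm{halt}})-\phi(s_k,a_k)\bigr).
\]
Every $\phi(s_k,a_k)$ with $k\ge t$ comes from a pair appearing in $\xi$. The one point needing care is the feature of the "stay at $\xi^t$" step. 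I will take the paper's convention that the halted trajectory repeats the pair $\xi_R^t=(s_t,a_t)$, so the halted feature is $\phi(s_t,a_t)$, which also appears in $\xi$. The difference is then a finite (or absolutely convergent, since $\gamma<1$ and there are finitely many features) linear combination of vectors in $S_\xi:=\operatorname{span}\{\phi(s,a)\mid (s,a)\in\xi\}$. Because $S_\xi$ is a finite-dimensional, hence closed, subspace, the limit of the partial sums stays in $S_\xi$. This proves the first claim.

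For the contrast claim, take $\xi'$ visiting $(s',a')$ with $\phi(s',a')\notin S_\xi$. I would decompose
\[
\Phi(\xi')-\Phi(\xi)=\Phi(\xi')-v,\qquad v:=\Phi(\xi)\in S_\xi,
\]
so it suffices to show $\Phi(\xi')\notin S_\xi$. Write
\[
\Phi(\xi')=c\,\phi(s',a')+u,
\]
where $c=\sum_{k:(s'_k,a'_k)=(s',a')}\gamma^k>0$, and $u$ collects the remaining terms.

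The main obstacle is that $u$ could cancel the component of $\phi(s',a')$ outside $S_\xi$; in general position this does not happen, but it is not automatic. I would first try to make the hypothesis do the work: interpret "linearly independent of this subspace" as requiring that the features of all pairs of $\xi'$ not in $\xi$ are linearly independent modulo $S_\xi$. Then the projection of $\Phi(\xi')$ onto $\mathbb{R}^d/S_\xi$ is a combination of these independent vectors with strictly positive coefficients, which is nonzero. Hence $\Phi(\xi')-\Phi(\xi)\notin S_\xi$.

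If only the weaker single-pair hypothesis is intended, I would instead choose $\xi'$ suitably, for example a trajectory that shares $\xi$'s pairs except for $(s',a')$. Then $u\in S_\xi$ and the conclusion is immediate. I would state that this choice is always available within "any $\xi'$" for the purpose of the subsequent strict-inclusion argument.
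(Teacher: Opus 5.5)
Your handling of the first claim matches the paper's proof. It computes $\Phi(\xi^{0:t})-\Phi(\xi)=\frac{\gamma^{t+1}}{1-\gamma}\phi(s_t,a_t)-\sum_{\tau=t+1}^{T}\gamma^\tau\phi(s_\tau,a_\tau)$, using the same convention that the halted tail repeats $(s_t,a_t)$, and reads off the span.

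For the contrast claim you depart from the paper, and rightly. The paper's proof only asserts that $\Phi(\xi)-\Phi(\xi')$ ``contains a component outside the trajectory-specific subspace.'' It never addresses the cancellation you flag, and that cancellation really happens, so with only the single-pair hypothesis the claim is false. Let $\xi$ stay forever at a pair with feature $e_1\in\mathbb{R}^2$. Let $\xi'$ visit a pair with feature $e_2$ at time $0$, a pair with feature $-e_2/\gamma$ at time $1$, and then the $e_1$ pair forever. Then $\Phi(\xi')-\Phi(\xi)=-(1+\gamma)e_1\in S_\xi$. So some extra hypothesis is unavoidable, and your quotient argument correctly proves a repaired lemma. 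That is more than the paper actually establishes.

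There are two caveats. First, as you phrase it, your hypothesis fails whenever $\xi'$ passes through a new pair whose feature already lies in $S_\xi$, since its class is then zero. That is the common case with shared cell features. A cleaner condition is the existence of $u\perp S_\xi$ with $u^\top\phi(s,a)\ge 0$ for every pair visited by $\xi'$ and $u^\top\phi(s',a')>0$. It gives $u^\top(\Phi(\xi')-\Phi(\xi))=u^\top\Phi(\xi')>0$ directly, is implied by your hypothesis, and also covers indicator features.

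Second, your fallback does not work. One well-chosen $\xi'$ does not prove a claim about any $\xi'$. Moreover, a feasible $\xi'$ using only the pairs of $\xi$ plus $(s',a')$ need not exist, because the dynamics may force it through other new pairs. Such a $\xi'$ also would not suffice for the strict inclusion in Proposition~\ref{prop:estop_comparison}, where the E-stop constraints range over all trajectories in $\Xi$. The paper obtains that strictness from Assumption~\ref{ass:richness}, not from this lemma.
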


\vspace{-4pt}

\begin{proposition}[Comparisons strictly reduce reward ambiguity compared to E-stop feedback]
\label{prop:estop_comparison}

E-stop feedback induces the feasible region
\vspace{-5pt}
\[
H_{\text{E-stop}} =
\Bigl\{
w \in \mathbb{R}^d : 
w^\top(\Phi(\xi^{0:t})-\Phi(\xi)) \ge 0 \;
\forall \xi \in \Xi,\;
\forall t
\Bigr\}.
\]
\vspace{-3pt}
Then
\vspace{-5pt}
\[
H_{\text{comparison}} \subsetneq H_{\text{E-stop}},
\quad\text{and}\quad
G(H_{\text{comparison}}) < G(H_{\text{E-stop}}).
\]

\end{proposition}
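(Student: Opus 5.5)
The proof splits into two steps: first show $H_{\text{comparison}}\subseteq H_{\text{E-stop}}$, then exhibit a witness weight $w\in H_{\text{E-stop}}\setminus H_{\text{comparison}}$ and argue the difference has positive volume.

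\textbf{Step 1: inclusion.} Read $H_{\text{E-stop}}$ as the E-stop events actually generated by the teacher. These are pairs $(\xi^{0:t},\xi)$ on which the human halts, i.e.\ those with $w^{\star\top}\Phi(\xi^{0:t})>w^{\star\top}\Phi(\xi)$. This is the reading under which $w^\star$ lies in the feasible region. The halted trajectory $\xi_{\mathrm{halted}}=\xi^{0:t}$ is itself an element of $\Xi$ (it stays at $\xi^t$). Hence every E-stop constraint $w^\top(\Phi(\xi^{0:t})-\Phi(\xi))\ge 0$ is one of the pairwise constraints defining $H_{\text{comparison}}$, with $\xi_i=\xi^{0:t}$ and $\xi_j=\xi$. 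Since $H_{\text{comparison}}$ intersects a superset of the halfspaces defining $H_{\text{E-stop}}$, we get $H_{\text{comparison}}\subseteq H_{\text{E-stop}}$.

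\textbf{Step 2: strictness.} Lemma~\ref{lemma:distincttraj} shows that every E-stop constraint built from $\xi$ lies in $V_\xi=\operatorname{span}\{\phi(s,a):(s,a)\in\xi\}$. The union of these constraints over all $\xi$ is a finite family of normals $\{\Delta_k\}$, each local to a single trajectory.

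I will assume the non-degeneracy condition implicit in the lemma: there exist $\xi,\xi'$ with $w^{\star\top}\Phi(\xi)>w^{\star\top}\Phi(\xi')$ whose difference $v=\Phi(\xi)-\Phi(\xi')$ is not a nonnegative combination of the E-stop normals $\{\Delta_k\}$. Equivalently, $v\notin\operatorname{cone}\{\Delta_k\}$. By the lemma, a natural candidate is a comparison involving $\xi'$ visiting an $(s',a')$ with $\phi(s',a')\notin V_\xi$, ideally chosen so that $v$ has a component outside the span of all E-stop normals that are relevant on the $v$-side.

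Farkas' lemma then gives a $w$ with $w^\top\Delta_k\ge0$ for all $k$ and $w^\top v<0$. So $w\in H_{\text{E-stop}}\setminus H_{\text{comparison}}$, which establishes $\subsetneq$.

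\textbf{Step 3: volume.} Because the trajectory space is finite, or at least the constraint families are closed, I can perturb $w$ slightly. A small ball around $w$ stays strictly inside $\{w^\top v<0\}$. To keep it inside $H_{\text{E-stop}}$, I pick $w$ in the interior of $H_{\text{E-stop}}$, assumed full-dimensional on the intersection with the unit ball. A convex-combination argument makes this possible: take $w_\lambda=\lambda w+(1-\lambda)w_{\mathrm{int}}$ with $w_{\mathrm{int}}$ interior and $\lambda$ near $1$, so that $w_\lambda^\top v<0$ still holds. Then $H_{\text{E-stop}}\setminus H_{\text{comparison}}$ contains an open set within $\|w\|\le1$, so $G(H_{\text{comparison}})<G(H_{\text{E-stop}})$.

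\textbf{Main obstacle.} The hard part is Step 2: turning the lemma's "lies outside a single trajectory's span" into genuine non-membership in the cone generated by \emph{all} E-stop constraints across all trajectories. Different trajectories' E-stop normals could in principle combine to produce $v$.

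My first attempt is to exploit the structure of E-stop differences. Each normal has the form $\Delta=-\sum_{\tau>t}\gamma^\tau\phi(\xi^\tau)+(\text{stay terms at }\xi^t)$, so each E-stop constraint only penalizes continuing versus staying put. I would then pick $v$ comparing two trajectories that start at different initial states, or that differ only in which "good" region they reach, and argue that no nonnegative combination of "stay-vs-continue" differences reproduces it. Failing a general argument, I would state the non-degeneracy as an explicit assumption. The paper's lemma seems to intend exactly this, and I would verify it on a concrete gridworld instance, such as that of Figure~\ref{fig:grid1}.
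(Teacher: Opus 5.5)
Your proposal follows the paper's proof step for step: inclusion because each ($w^\star$-consistent) E-stop pair $(\xi^{0:t},\xi)$ is a special case of a comparison, then strictness via a non-degeneracy condition suggested by Lemma~\ref{lemma:distincttraj}, then the volume inequality. The condition you end up assuming ($v\notin\operatorname{cone}\{\Delta_k\}$) is the paper's Assumption~\ref{ass:richness}, case (iii) of the accompanying remark, which the paper also needs because its Step 2 does not actually derive it from the lemma. Your reading of $H_{\text{E-stop}}$ as the $w^\star$-consistent halts, the Farkas witness, and the interior-point argument for $G(H_{\text{comparison}})<G(H_{\text{E-stop}})$ make explicit what the paper's Steps 1 and 3 leave implicit. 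In particular, the paper's volume step rests on the ``positive measure'' clause of that assumption.
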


\section{Single-MDP limitations: environment-dependent reward identifiability}
\vspace{-5pt}
The previous section characterized the intrinsic informativeness of different feedback modalities in the unlimited-data regime within a single MDP. 
In realistic settings, however, teaching must operate under finite feedback budgets and produce rewards that remain effective across different environments. Even with unlimited feedback available in one MDP, the constraints induced by that particular environment may fail to uniquely identify the ground-truth reward when the agent is deployed in other environments. 
We now formalize a key limitation of single-MDP teaching: reward identifiability depends on the environment. 

\begin{theorem}[Single-MDP ambiguity]
\label{thm:single_mdp_ambiguity}
Let $\mathcal{M}=\{M_1,M_2\}$ be two finite MDPs that share a common 
feature map $\phi:\mathcal{S}\times\mathcal{A}\rightarrow\mathbb{R}^d$ 
and a common ground-truth reward parameter $w^\star \in \mathbb{R}^d$, 
where rewards take the linear form $r(s,a)=w^{\star\top}\phi(s,a)$.
For any MDP $M_k$, let $\mathcal{W}(M_k)$ denote the generalized behavioral
equivalence class (gBEC), i.e.,
the set of reward parameters consistent with \emph{all possible}
idealized feedback constraints obtainable within $M_k$.
Define the feature-difference span induced by $M_k$ as
\[
\mathcal{V}_k
=\operatorname{span}\{\Delta\Phi(\xi^+,\xi^-)\;:\;
\xi^+,\xi^- \text{ are feasible trajectories in } M_k\},
\]
where $\Delta\Phi(\xi^+,\xi^-)=\Phi(\xi^+)-\Phi(\xi^-)$
and $\Phi(\xi)$ denotes the feature expectation of trajectory $\xi$.
Assume that the induced spans satisfy 
$\mathcal{V}_1 \subsetneq \operatorname{span}(\mathcal{V}_1\cup\mathcal{V}_2)$. 
Then there exists a reward parameter $w\neq w^\star$ such that 
$w \in \mathcal{W}(M_1)$ but $w \notin \mathcal{W}(M_1)\cap\mathcal{W}(M_2)$. 
\end{theorem}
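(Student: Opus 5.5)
The plan is to construct $w$ explicitly as $w = w^\star + \epsilon u$, where $u$ is orthogonal to $\mathcal{V}_1$ but not to $\mathcal{V}_2$. Perturbing $w^\star$ in a direction invisible to every feature difference in $M_1$ leaves all $M_1$-constraints unchanged, so $w \in \mathcal{W}(M_1)$. If the perturbation is also large enough relative to some strict $M_2$ preference, it violates a constraint in $M_2$.

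\textbf{Step 1: choose the direction.} By hypothesis $\mathcal{V}_1 \subsetneq \operatorname{span}(\mathcal{V}_1\cup\mathcal{V}_2)$, so $\mathcal{V}_2 \not\subseteq \mathcal{V}_1$. Hence there is $v\in\mathcal{V}_2$ with a nonzero component orthogonal to $\mathcal{V}_1$. Let $u$ be that orthogonal projection, so $u\in\mathcal{V}_1^\perp$ and $u^\top v = \|u\|^2>0$. Because $\mathcal{V}_2$ is spanned by the differences $\Delta\Phi(\xi^+,\xi^-)$, and $u$ is not orthogonal to all of $\mathcal{V}_2$, some generating difference $\Delta_0=\Phi(\xi_a)-\Phi(\xi_b)$ in $M_2$ satisfies $u^\top\Delta_0\neq 0$. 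Swapping the roles of $\xi_a,\xi_b$ if needed, we may assume $u^\top\Delta_0<0$.

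\textbf{Step 2: membership in $\mathcal{W}(M_1)$.} Every idealized constraint in $M_1$ has the form $w^\top\Delta\ge 0$ with $\Delta\in\mathcal{V}_1$, and $w^\star$ satisfies all of them. Since $u^\top\Delta=0$, we get $w^\top\Delta = w^{\star\top}\Delta$ for every such $\Delta$ and every scalar $\epsilon$. So $w\in\mathcal{W}(M_1)$, and $w\neq w^\star$ whenever $\epsilon\neq 0$.

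\textbf{Step 3: violation in $M_2$.} This step is the main obstacle. To get $w\notin\mathcal{W}(M_2)$, we need an $M_2$ constraint that is actually imposed on $w$, and constraints are oriented by $w^\star$. Suppose $w^{\star\top}\Delta_0>0$, so the constraint $w^\top\Delta_0\ge 0$ is imposed (for instance under the comparison region $H_{\text{comparison}}$ of $M_2$). Taking
\[
\epsilon > \frac{w^{\star\top}\Delta_0}{-u^\top\Delta_0}
\]
gives $w^\top\Delta_0<0$. If instead $w^{\star\top}\Delta_0<0$, the imposed constraint is the reversed one, $-w^\top\Delta_0\ge 0$; we then replace $u$ by $-u$, which is still orthogonal to $\mathcal{V}_1$, and argue in the same way.

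The delicate case is $w^{\star\top}\Delta_0=0$, i.e.\ a tie under $w^\star$, which imposes no strict ordering. I would handle it in one of two ways. The first is to perturb the choice of generator: among generators of $\mathcal{V}_2$ not orthogonal to $u$, pick one with $w^{\star\top}\Delta\neq 0$. The second, if all of them are tied, is to interpret the idealized gBEC as including both $w^\top\Delta\ge0$ and $-w^\top\Delta\ge0$ for ties, which is the natural complete-preference reading in which ties are equalities. Then any $\epsilon\neq 0$ violates an equality constraint. In either case $w\notin\mathcal{W}(M_2)$, so $w\notin\mathcal{W}(M_1)\cap\mathcal{W}(M_2)$, which proves the theorem.

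Finally, if the boundedness convention $\|w\|_2\le 1$ is in force, I would rescale $w$. Because all constraints are homogeneous cones, scaling $w$ by a positive constant preserves both membership and non-membership.
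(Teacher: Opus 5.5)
Your proposal is correct and follows the paper's proof essentially step for step. You project some $z\in\mathcal{V}_2\setminus\mathcal{V}_1$ onto $\mathcal{V}_1^\perp$ to get $u$, orient a $w^\star$-consistent $M_2$ difference $\Delta\Phi'$ so that $u^\top\Delta\Phi'<0$, and set $\tilde w=w^\star+\varepsilon u$ with $\varepsilon>(w^\star)^\top\Delta\Phi'/|u^\top\Delta\Phi'|$. Your second tie option is exactly how the paper proceeds: it defines the constraint set $\mathcal{C}_k$ by $(w^\star)^\top\Delta\Phi\ge 0$, so tied differences enter with both signs, act as equality constraints, and any $\varepsilon>0$ suffices.
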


The proof is provided in Appendix~\ref{app:proofs}.

The theorem shows that reward identifiability depends critically on the span of feature differences achievable by feasible trajectories in each MDP. When a single MDP does not span all reward-relevant directions, residual ambiguity persists even with unlimited idealized feedback.
For a concrete example, consider two gridworld MDPs that share the same 2D feature representation (white/gray cells) and ground-truth linear reward $  w^\star  $, but differ in layout, cell arrangement, and terminal state position (see Figures~\ref{fig:grid1} and \ref{fig:grid2}). These layout differences produce distinct sets of feasible trajectories and thus distinct spans of achievable feature differences, leading to different generalized behavioral equivalence classes.
The resulting feasible reward regions appear in Figures~\ref{fig:feasible1} and \ref{fig:feasible2}: $  \mathrm{gBEC}_1 \subsetneq \mathrm{gBEC}_2  $. Every reward consistent with all optimal demonstrations in MDP 1 remains feasible in MDP 2 (but not vice versa). 
Teaching only in $M_2$ leaves reward ambiguity that can cause generalization failures to $M_1$, while teaching in $M_1$ recovers a reward that transfers across both environments. This asymmetry stems purely from differences in the environment layout. Appendix~\ref{app:layout_gallery} confirms this layout-dependent ambiguity across $200$ randomly sampled layouts.

These observations reveal that single-MDP teaching is frequently insufficient for reward generalization across environments, motivating the use of diverse training environments.
Under realistic finite feedback budgets, the situation becomes even more challenging: the teacher faces two tightly coupled problems—(i) identifying environments whose combined constraint spans resolve the remaining ambiguity, and (ii) selecting the most informative feedback instances within those environments under a limited total query budget. 
This naturally motivates a hierarchical approach, in which environment selection activates complementary constraint subspaces and feedback selection efficiently covers those subspaces. 
We next formalize this two-stage strategy as a hierarchical set-cover optimization problem and introduce hierarchical set cover optimal teaching (HSCOT), an algorithm for multi-environment teaching with heterogeneous feedback.

\begin{figure*}[t]
  \centering
  \begin{subfigure}[t]{0.23\textwidth}
    \centering
    \includegraphics[width=\linewidth]{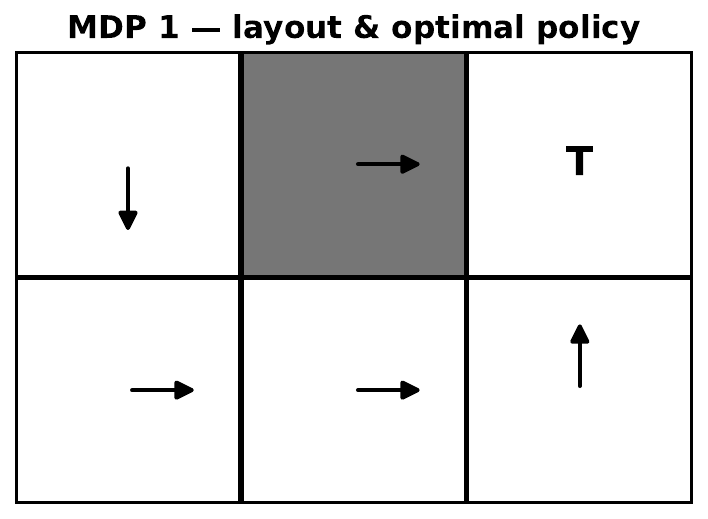}
    \caption{MDP 1}
    \label{fig:grid1}
  \end{subfigure}
  \hfill
  \begin{subfigure}[t]{0.23\textwidth}
    \centering
    \includegraphics[width=\linewidth]
    {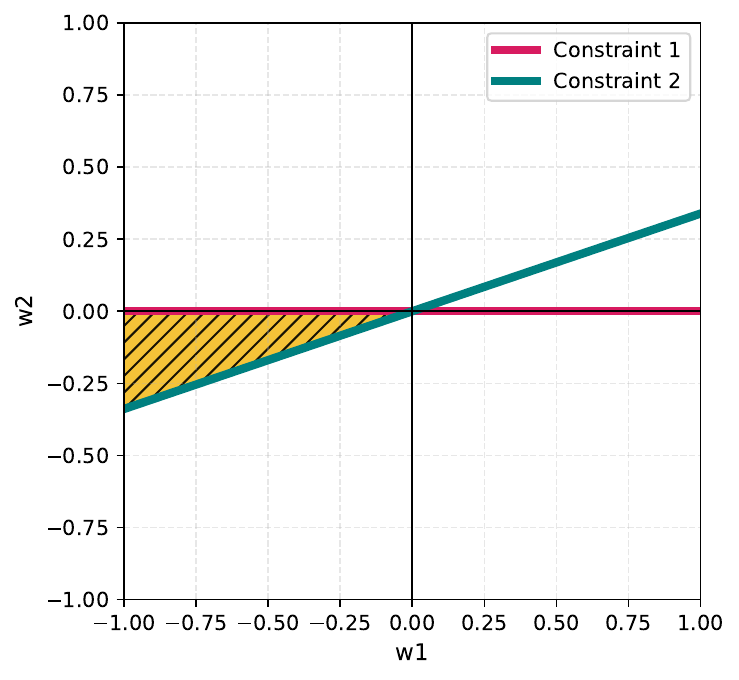}
    
    \caption{gBEC in MDP 1}
    \label{fig:feasible1}
  \end{subfigure}
  \hfill
  \begin{subfigure}[t]{0.23\textwidth}
    \centering
    \includegraphics[width=\linewidth]{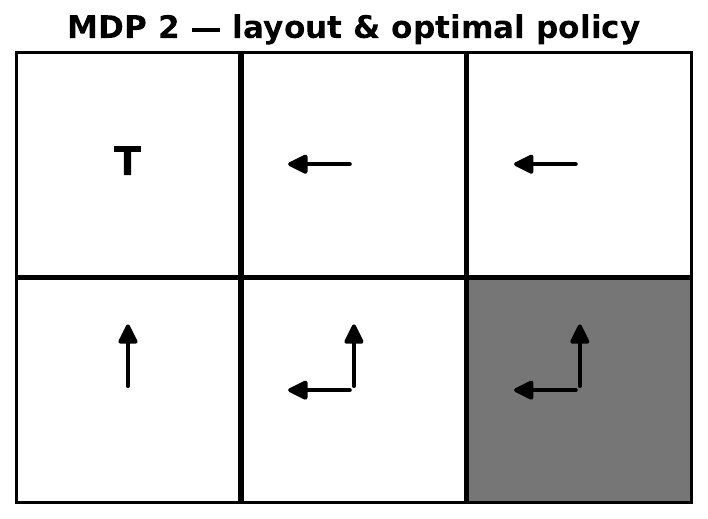}
    \caption{MDP 2}
    \label{fig:grid2}
  \end{subfigure}
  \hfill
  \begin{subfigure}[t]{0.23\textwidth}
    \centering
    \includegraphics[width=\linewidth]{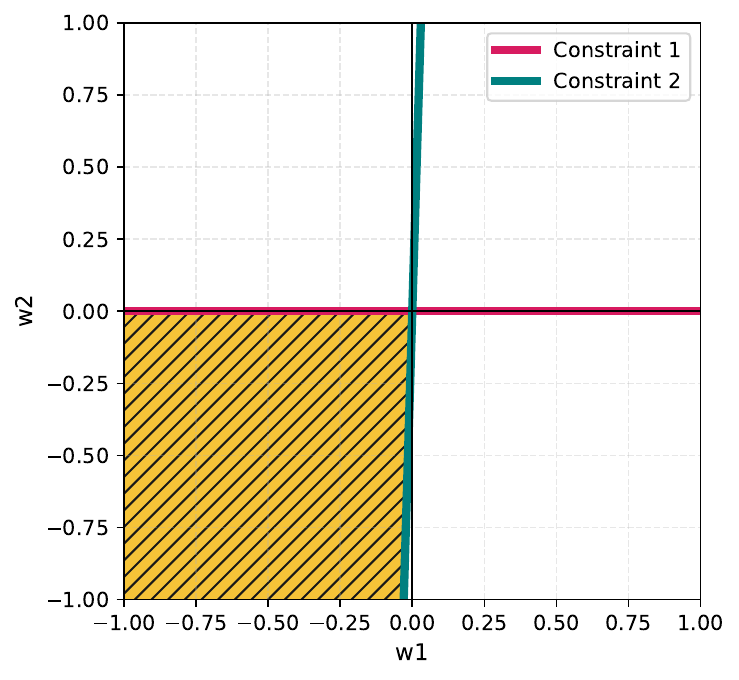}
    \caption{gBEC in MDP 2}
    \label{fig:feasible2}
  \end{subfigure}
    \caption{{\textbf{Environment-dependent reward ambiguity.}
      Two gridworld MDPs share the same reward features but differ in layout; each MDP has two features, shown as white and gray cells, and \textbf{T} marks the terminal state. Arrows in (a) and (c) show an optimal policy.
      The feasible reward region in MDP~2 is strictly larger than in MDP~1 ($\mathrm{gBEC}_1 \subsetneq \mathrm{gBEC}_2$), illustrating how environment structure affects reward identifiability.}}
  
\label{fig:reward_ambiguity_asymmetry}
\end{figure*}

\vspace{-5pt}
\section{Hierarchical Set Cover Optimal Teaching (HSCOT)}
\label{sec:hscot}
\vspace{-5pt}
We introduce \textit{Hierarchical Set Cover Optimal Teaching (HSCOT)}, a machine teaching framework that learns reward functions robust to changes in environment dynamics by strategically selecting both environments and feedback queries. 
The key idea is that different environments expose different reward constraints through their feasible trajectory spaces. 
HSCOT exploits this structure by first selecting environments whose dynamics reveal complementary reward constraints and then selecting feedback instances within those environments that efficiently cover those constraints.
\vspace{-5pt}
\subsection{Generalized teaching formulation}
\label{sec:generalized_teaching}
\vspace{-5pt}
We formulate teaching a reward learning agent across multiple MDPs with heterogeneous feedback as a machine teaching problem. 
This generalizes the single-MDP, demonstration-only setting of \citet{BrownNiekum2019MachineTeachingIRL} to a multi-environment setting where the teacher selects both environments and feedback instances to efficiently constrain the reward and promote generalization.

Let $\mathcal{M}=\{M_1,\dots,M_K\}$ be a family of MDPs sharing a feature map $\phi:\bigcup_k(\mathcal{S}_k\times\mathcal{A}_k)\to\mathbb{R}^d$, where $\mathcal{S}_k$ and $\mathcal{A}_k$ denote the state and action spaces of $M_k$. 
Let $\mathcal{M}_{\mathrm{train}},\mathcal{M}_{\mathrm{test}}\subseteq\mathcal{M}$ denote training and evaluation environments. 
A feedback atom is $x=(k,f,c)$, where $M_k\in\mathcal{M}_{\mathrm{train}}$, $f\in\{\text{demo},\text{comp},\text{estop},\text{corr}\}$ denotes the modality, and $c$ is the queried choice. 
A dataset $D=\{x_1,\dots,x_N\}$ is a collection of such atoms. 
In the high-rationality limit ($\beta\to\infty$), corresponding to an expert teacher, each atom induces linear preference constraints $w^\top\Delta\Phi\ge0$, where $\Delta\Phi$ is the discounted feature-count difference between preferred and dispreferred trajectories. 
These constraints define the feasible reward region
\[
\mathcal{W}(D)=\{w\in\mathbb{R}^d \mid w^\top\Delta\Phi\ge0,\ \forall\Delta\Phi\text{ induced by }x\in D\}.
\]

We evaluate a learned reward $w$ using the average performance gap 
relative to the ground-truth reward $w^\star$ across a given set of 
evaluation environments, $\mathcal{M}'$. 
Let $V^{\pi}_{k}(w)=\mathbb{E}\left[\sum_{t=0}^{\infty}\gamma^t 
r_w(s_t,a_t)\mid \pi, M_k\right]$ denote the expected discounted 
return of policy $\pi$ in $M_k$ under reward parameter $w$. 
We then define
\begin{equation}
\mathrm{Loss}(w^\star,w, \mathcal{M}')=\frac{1}{|\mathcal{M}'|}
\sum_{k\in\mathcal{M}'}\left(V_{k}^{\pi_k^*(w^)}(w^\star)
-V_{k}^{\pi_k(w)}(w^\star)\right)
\label{eq:loss_multi}
\end{equation}
where $\pi_k^*(w)$ denotes the optimal policy in $M_k$ under reward $w$. 
We assume an expert teacher who knows $w^\star$ and the dynamics of all $M_k\in\mathcal{M}_{\mathrm{train}}$, but not the evaluation environments. 
A teaching instance is therefore specified by $\mathcal{T}=(\mathcal{M},\mathcal{M}_{\mathrm{train}},\mathcal{M}_{\mathrm{test}},\phi,\mathbb{R}^d,w^\star,\varepsilon)$, where $\varepsilon$ is a target loss tolerance.

We want to solve for a teaching set $D$ that minimizes some notion of teaching cost and also minimizes loss in Eq. (\ref{eq:loss_multi}). However, as is typical of machine teaching problems in general~\citep{zhu2018overview}, this is a difficult multi-objective optimization problem. Minimizing loss on an unseen set of MDPs is also very difficult without making any assumptions on these unseen test environments. To make our approach tractable, we focus on a constrained form of the machine teaching problem~\citep{zhu2018overview}, where we focus on minimizing loss on the known train set of MDPs. 
This mirrors the standard empirical risk minimization paradigm in machine learning, where performance is optimized on a training set to minimize error on unseen test instances. Under the assumption that training and evaluation environments are drawn i.i.d. from the same distribution, minimizing loss on the training MDPs provides a principled surrogate for minimizing loss on unseen environments.

The teacher aims to find a minimal subset of environments 
$\mathcal{K}\subseteq\mathcal{M}_{\mathrm{train}}$ and, within each selected 
environment, provide a minimal set of feedback instances such that the 
inferred reward generalizes across $\mathcal{M}_{\mathrm{train}}$. 

Let $\hat w=\mathrm{IRL}(D)$ denote the inferred reward. 
We define the hierarchical teaching cost as $\mathrm{TeachingCost}(D)=(|\mathrm{MDPs}(D)|,|D|)$ and solve
\vspace{-1pt}
\begin{align}
\min_D \quad & \mathrm{TeachingCost}(D) \\
\text{s.t.}\quad & \mathrm{Loss}(w^\star,\hat w, \mathcal{M}_{\rm train})\le\varepsilon, \\
& \hat w=\mathrm{IRL}(D).
\end{align}
\vspace{-1pt}
The minimization is lexicographic: first minimizing the number of environments used and then the number of feedback queries.

\vspace{-5pt}
\subsection{Constraint universe}
\label{sec:constraint_universe}
\vspace{-5pt}

To operationalize the teaching objective, we represent feedback in terms of the reward constraints it induces. 
For each training MDP $M_k \in \mathcal{M}_{\mathrm{train}}$, let $\mathcal{X}_k$ denote the finite set of candidate feedback atoms available in that environment. 
Each atom $x \in \mathcal{X}_k$ induces a set of discounted feature-difference vectors $\Delta\Phi(x)=\{\Delta\Phi_1,\dots,\Delta\Phi_m\}$.

We define the \emph{constraint universe} $\mathcal{U}$ as the set of all distinct linear preference constraints inducible from the training environments, 
$\mathcal{U}=\bigcup_{k\in\mathcal{M}_{\mathrm{train}}}\bigcup_{x\in\mathcal{X}_k}\Delta\Phi(x)$, where duplicates are removed so each feature-difference direction appears only once. 
For each environment $M_k$, we similarly define $\mathcal{U}_k=\bigcup_{x\in\mathcal{X}_k}\Delta\Phi(x)\subseteq\mathcal{U}$, representing the constraint directions accessible within that environment.

This representation highlights a key structural property of the teaching problem: environment dynamics determine which reward constraints are reachable, while individual feedback atoms instantiate those constraints. This separation provides the basis for the hierarchical teaching strategy introduced next.
\vspace{-5pt}
\subsection{Hierarchical teaching strategy}
\label{sec:hierarchical_teaching}
\vspace{-5pt}

The constraint representation above exposes a natural hierarchical structure in the teaching problem. 
This observation allows the teaching objective to be decomposed into two coupled decisions: selecting informative environments and selecting feedback instances within those environments.
\vspace{-5pt}
\paragraph{Outer level (environment selection).}
The first stage selects a minimal subset of environments $\mathcal{K}\subseteq\mathcal{M}_{\mathrm{train}}$ such that $\bigcup_{k\in\mathcal{K}}\mathcal{U}_k=\mathcal{U}$, identifying environments whose trajectory spaces collectively expose all reward-relevant constraint directions.
\vspace{-5pt}
\paragraph{Inner level (atom selection).}
Given $\mathcal{K}$, the second stage selects a minimal dataset $D$ satisfying $\bigcup_{x\in D}\Delta\Phi(x)=\mathcal{U}$ with $\mathrm{MDPs}(D)\subseteq\mathcal{K}$, determining the feedback instances required to realize those constraints. 
\vspace{-5pt}
\paragraph{Greedy hierarchical approximation (HSCOT).}
Both stages correspond to set-cover problems over the finite constraint universe $\mathcal{U}$. 
We therefore take advantage of submodularity~\citep{nemhauser1978analysis} and employ greedy selection procedures that iteratively choose the environment or feedback atom providing the largest marginal increase in uncovered constraint directions. 
The resulting algorithm, HSCOT, decomposes teaching into two stages: 
an outer stage that selects environments by their marginal constraint 
coverage, and an inner stage that selects feedback atoms within those environments.
The full pseudocode for the outer-stage environment selection, the inner-stage atom selection, and the combined hierarchical procedure appears in Algorithms~\ref{alg:env_selection}--\ref{alg:hscot}.
After teaching, the learner estimates the reward $\hat{w} = \mathrm{IRL}(D)$ 
and deploys the resulting policy; we then evaluate generalization 
performance using the loss in Equation~\eqref{eq:loss_multi}.

\vspace{-5pt}
\section{Experimental results}
\label{sec:experiments}
\vspace{-5pt}
We evaluate whether hierarchical multi-environment teaching (HSCOT)
improves reward identifiability and policy generalization under limited feedback budgets.

\vspace{-5pt}
\paragraph{Domains and setup.}
We evaluate on two domains with deterministic transitions and fixed discount factor. In a 6$\times$6 GridWorld, we generate 50 MDPs sharing the same linear reward $w^\star$ (normalized so $\|w^\star\|_2\le 1$), but differing in layout and transition structure; feature vectors are sampled per environment. 
In LavaMiniGrid~\citep{chevalier2023minigrid}, we use four hand-designed features (\textit{distance-to-goal}, \textit{on-lava}, \textit{adjacent-to-lava}, \textit{per-step cost}) with corresponding ground-truth weights $w^\star = [-1.0,\, -8.0,\, -2.0,\, -0.05]$, and generate 50 MDPs with the same ground-truth reward but different layouts (see Appendix~\ref{app:env_maps} for visualizations of the environments). 
For generalization, 20\% of MDPs (10 of 50) are held out and never seen during teaching. 
All results are averaged over 10 independent seeds, where each seed is a full replicate of the entire pipeline---environment generation, ground-truth reward, train/held-out split, and feedback draws.
\vspace{-6pt}

\paragraph{Baseline and Reward Learning.} We compare HSCOT against a \textit{Uniform Teaching} baseline that samples feedback atoms
uniformly across environments under the same global query budget as HSCOT.
Reward recovery uses max-margin inverse reinforcement learning
\citep{abbeel2004apprenticeship}.

\vspace{-5pt}

\paragraph{Evaluation metrics.}
We report \emph{held-out regret} using the average performance gap (Eq.~\eqref{eq:loss_multi}) between the policy induced by the learned reward $\hat{w}$ and the ground-truth optimal policy, averaged over held-out MDPs. 
We also report \emph{constraint coverage}, the fraction of the universal constraint set $\mathcal{U}$ covered by the selected feedback atoms $D$:
\vspace{-6pt}
\[
\text{coverage}(D) = \frac{|\mathcal{U}(D)|}{|\mathcal{U}|},
\]
where $\mathcal{U}$ is the set of all distinct linear preference constraints $w^\top(\Phi(\xi^+)-\Phi(\xi^-))\ge 0$ inducible by any feedback atom and modality across all training MDPs, and $\mathcal{U}(D)$ is the subset induced by $D$. Coverage serves as a proxy for reduction in reward ambiguity (shrinkage of the generalized behavioral equivalence class).
\paragraph{Held-out regret.}
Single-modality results are shown in Figures~\ref{fig:regret_gridworld} 
and \ref{fig:regret_lavagrid}. 
In GridWorld, HSCOT achieves zero held-out regret across all modalities, 
while uniform teaching incurs substantially higher regret in every case, 
confirming that strategic environment selection is critical even under 
identical budgets. 
In LavaMiniGrid, HSCOT again attains near-zero regret across modalities, 
whereas uniform teaching leaves residual regret throughout. 
Under uniform teaching, demonstrations remain comparatively effective 
since each one implicitly imposes many optimality constraints, and 
corrections benefit from steering rollouts toward the goal, yielding 
informative goal-directed constraints. 
Notably, E-stop outperforms comparisons under uniform querying in 
LavaMiniGrid: halting points concentrate on imminent lava entry---the 
dominant reward direction---whereas randomly sampled trajectory pairs 
rarely differ in lava contact, making many comparison queries 
uninformative. 
Mixed-feedback results (Figures~\ref{fig:regret_tie_gridworld} 
and \ref{fig:regret_tie_lavagrid}) show that combining demonstrations 
with another modality allows HSCOT to reach near-zero regret in both 
domains, as complementary constraint directions across environments 
tighten the feasible reward region and yield rewards that transfer 
reliably to unseen MDPs.

\begin{figure}[t]
\centering

\begin{subfigure}{0.24\textwidth}
\includegraphics[width=\linewidth]{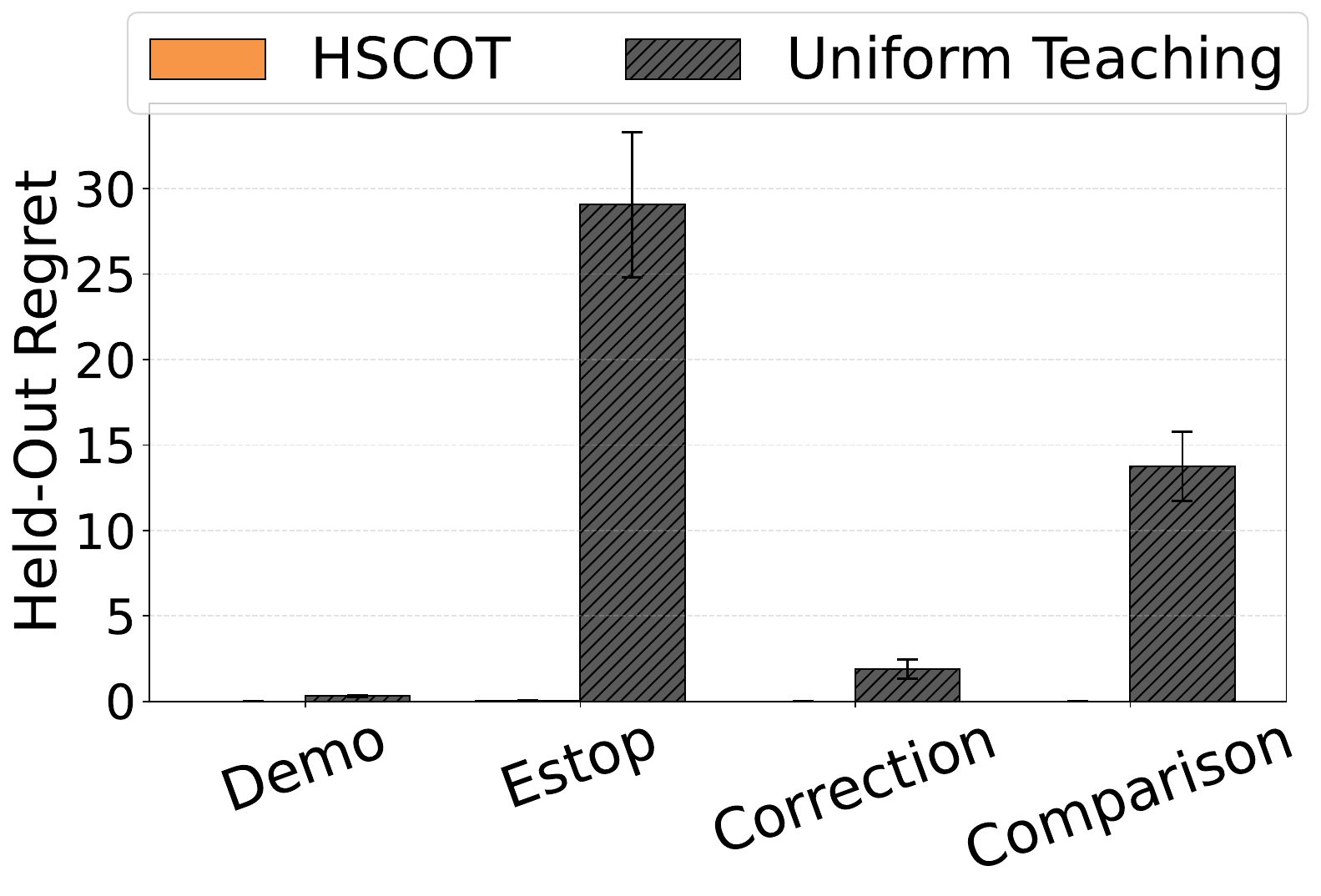}
\caption{GridWorld}
\label{fig:regret_gridworld}
\end{subfigure}
\hfill
\begin{subfigure}{0.24\textwidth}
\includegraphics[width=\linewidth]{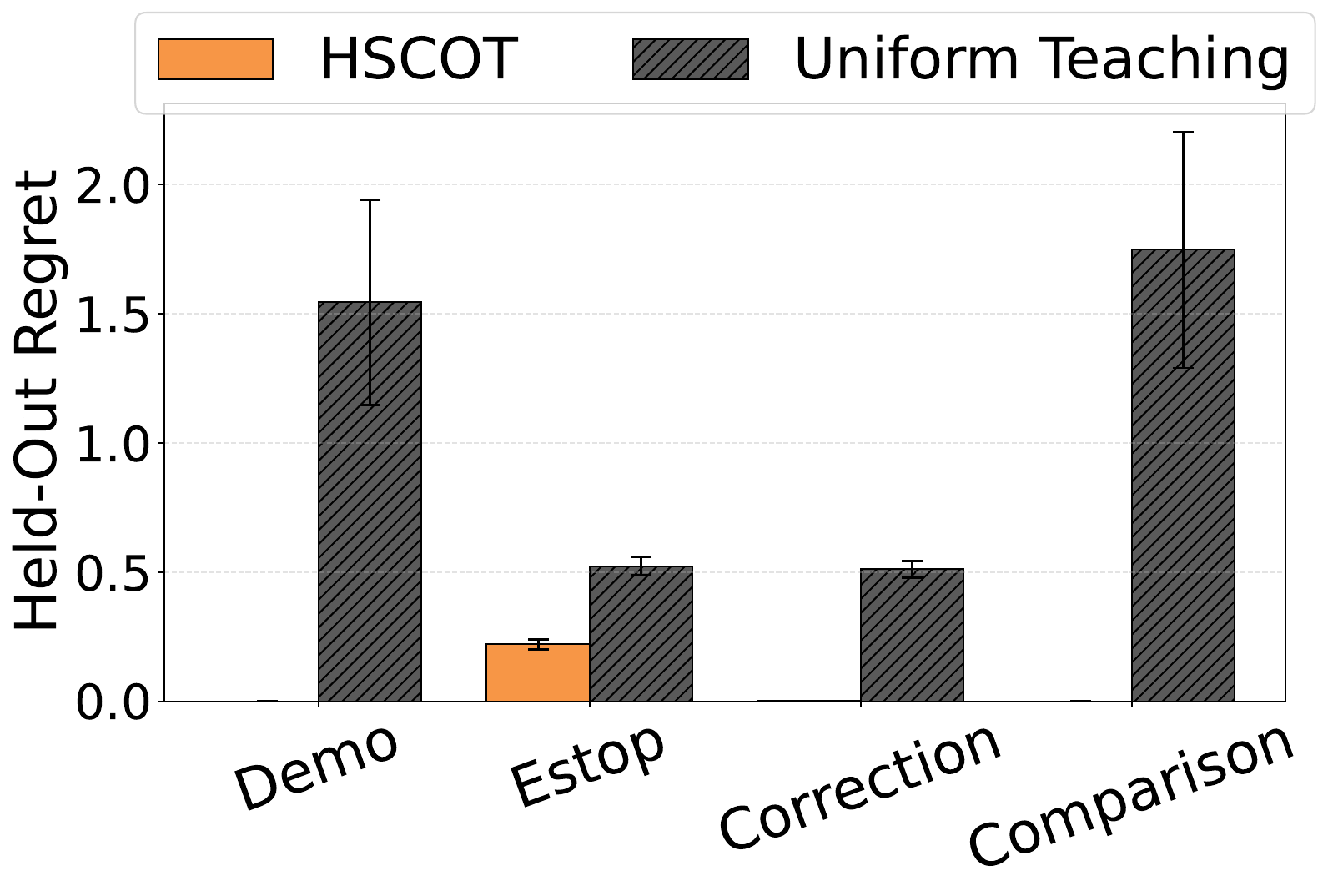}
\caption{LavaMiniGrid}
\label{fig:regret_lavagrid}
\end{subfigure}
\hfill
\begin{subfigure}{0.24\textwidth}
\includegraphics[width=\linewidth]{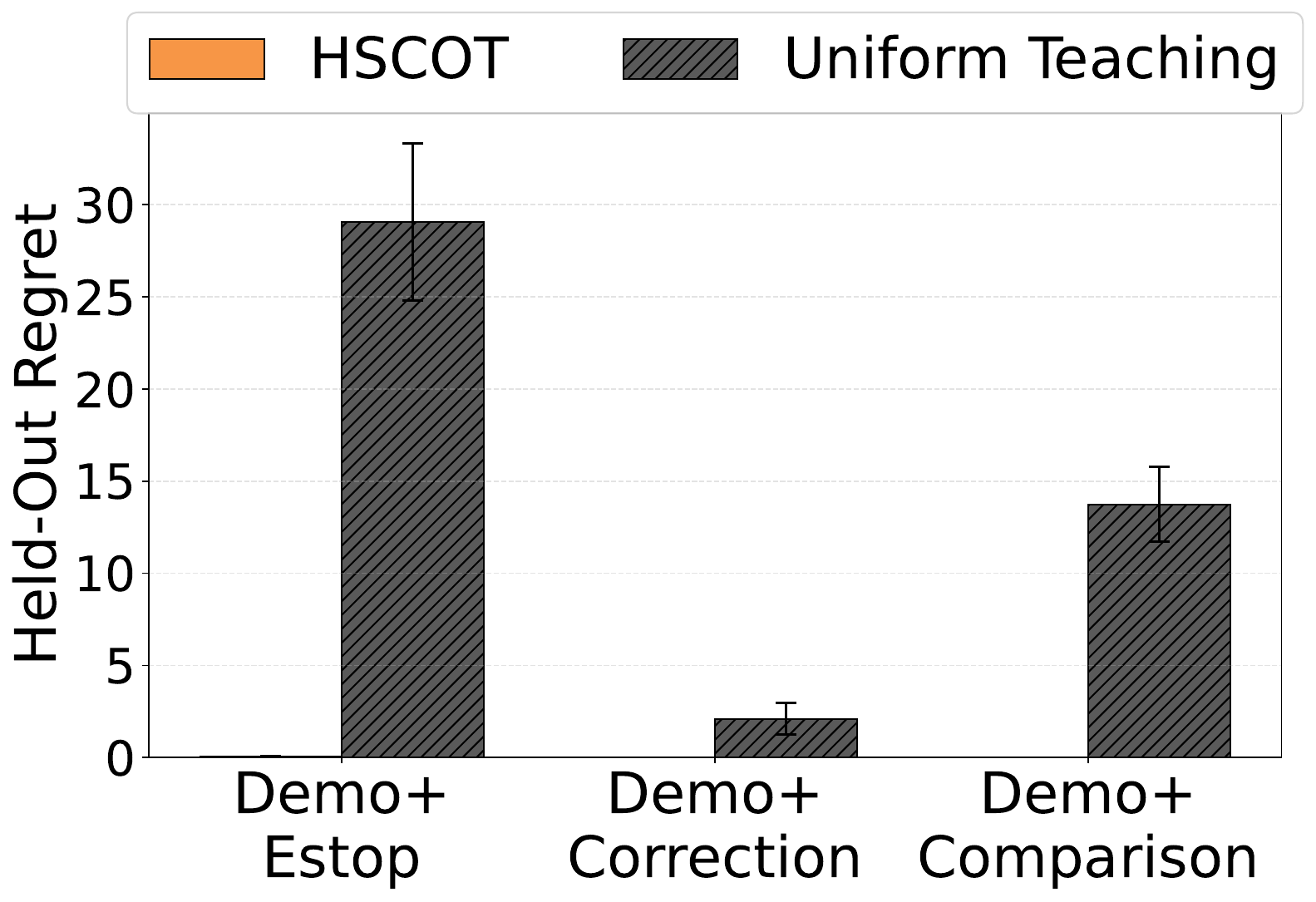}
\caption{GridWorld}
\label{fig:regret_tie_gridworld}
\end{subfigure}
\hfill
\begin{subfigure}{0.24\textwidth}

\includegraphics[width=\linewidth]{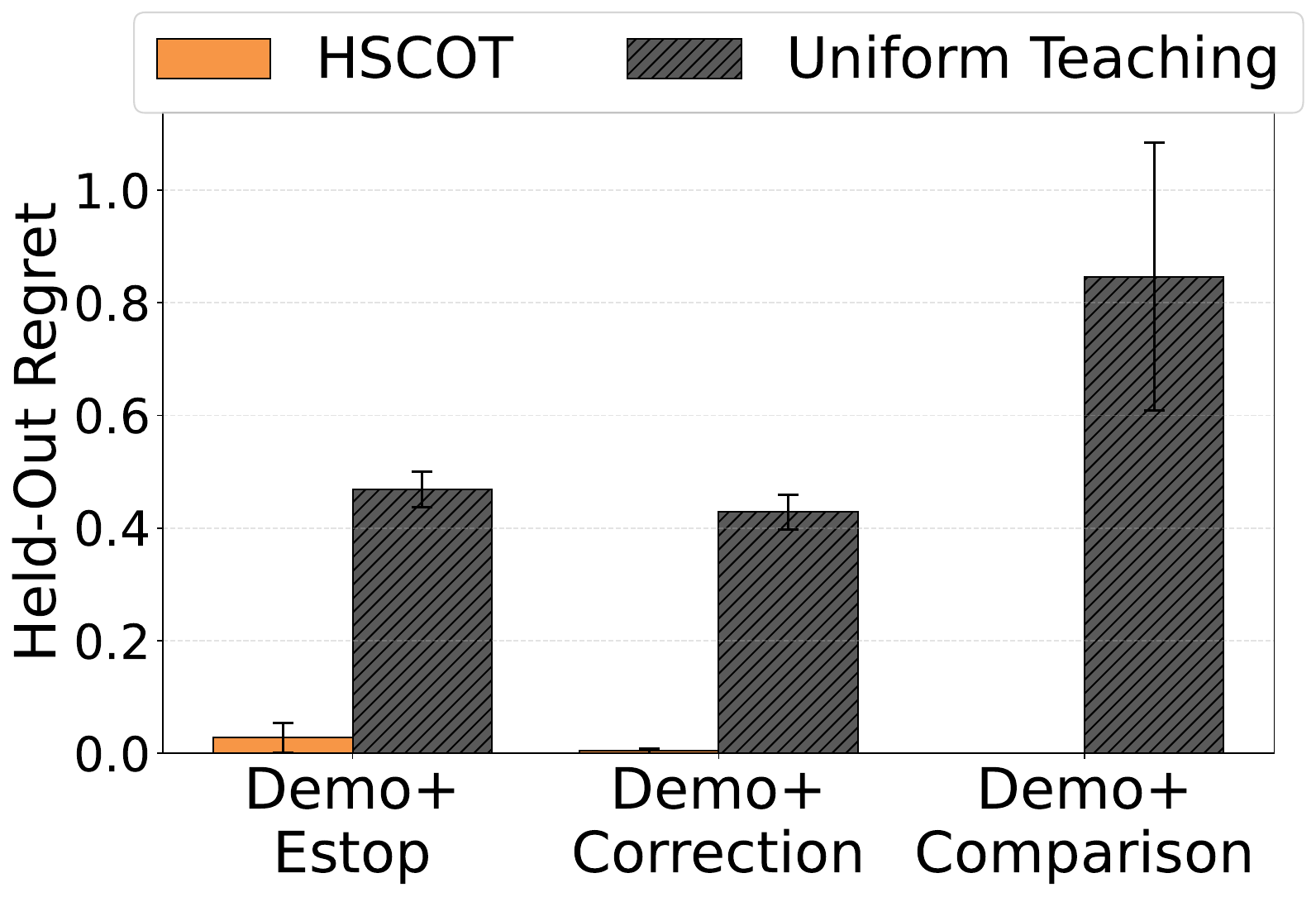}
\caption{LavaMiniGrid}
\label{fig:regret_tie_lavagrid}
\end{subfigure}

\caption{\textbf{Held-out regret} (lower is better) averaged over 10 random seeds on 20\% held-out MDPs. Bars show mean regret and error bars denote standard error across seeds. Subfigures (a,b) show results for single-modality feedback. Subfigures (c,d) show settings where demonstrations are combined with an additional modality. HSCOT (solid bars) consistently outperforms uniform teaching (dashed bars) across both domains and feedback settings. Overall, rewards learned by HSCOT generalize better to unseen environments.}

\label{fig:heldout_main}

\end{figure}
\vspace{-8pt}
\paragraph{Constraint coverage.}
Figure~\ref{fig:coverage_main} shows the fraction of the universal constraint set covered by the selected feedback atoms, computed over the training MDPs. 
HSCOT's two-level greedy approach (environment selection followed by atom selection) consistently achieves complete coverage across both GridWorld and LavaMiniGrid and all feedback modalities. In contrast, uniform teaching recovers only a small fraction of the universal set---even when using demonstrations, where each individual demonstration implicitly imposes many constraints. 
This large and consistent gap demonstrates two important points: (1) HSCOT can recover all constraints that can be generated across the training MDPs, and (2) intelligent selection of environments remains critical for constraint coverage, even when each piece of feedback is highly informative on its own. Uniform sampling, by spreading queries thinly across uninformative MDPs, misses most of the complementary constraints needed for tight reward identification.
\vspace{0pt}

\vspace{-5pt}
\begin{figure}[t]
  \centering
  \begin{subfigure}{0.24\textwidth}
    \includegraphics[width=\linewidth]{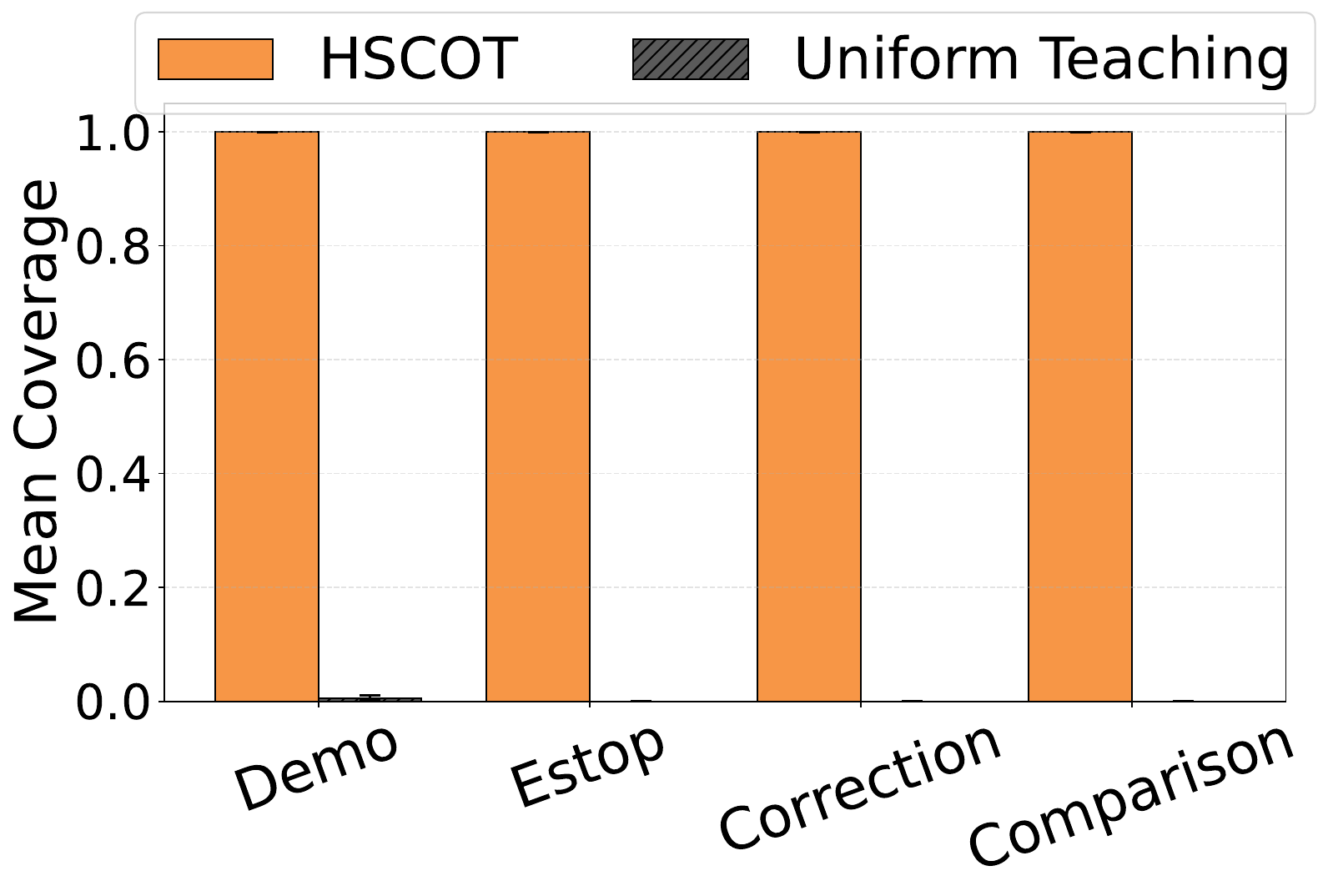}
    \caption{GridWorld}
    \label{fig:coverage_main_gridworld}
  \end{subfigure}
  \hfill
  \begin{subfigure}{0.24\textwidth}
    \includegraphics[width=\linewidth]{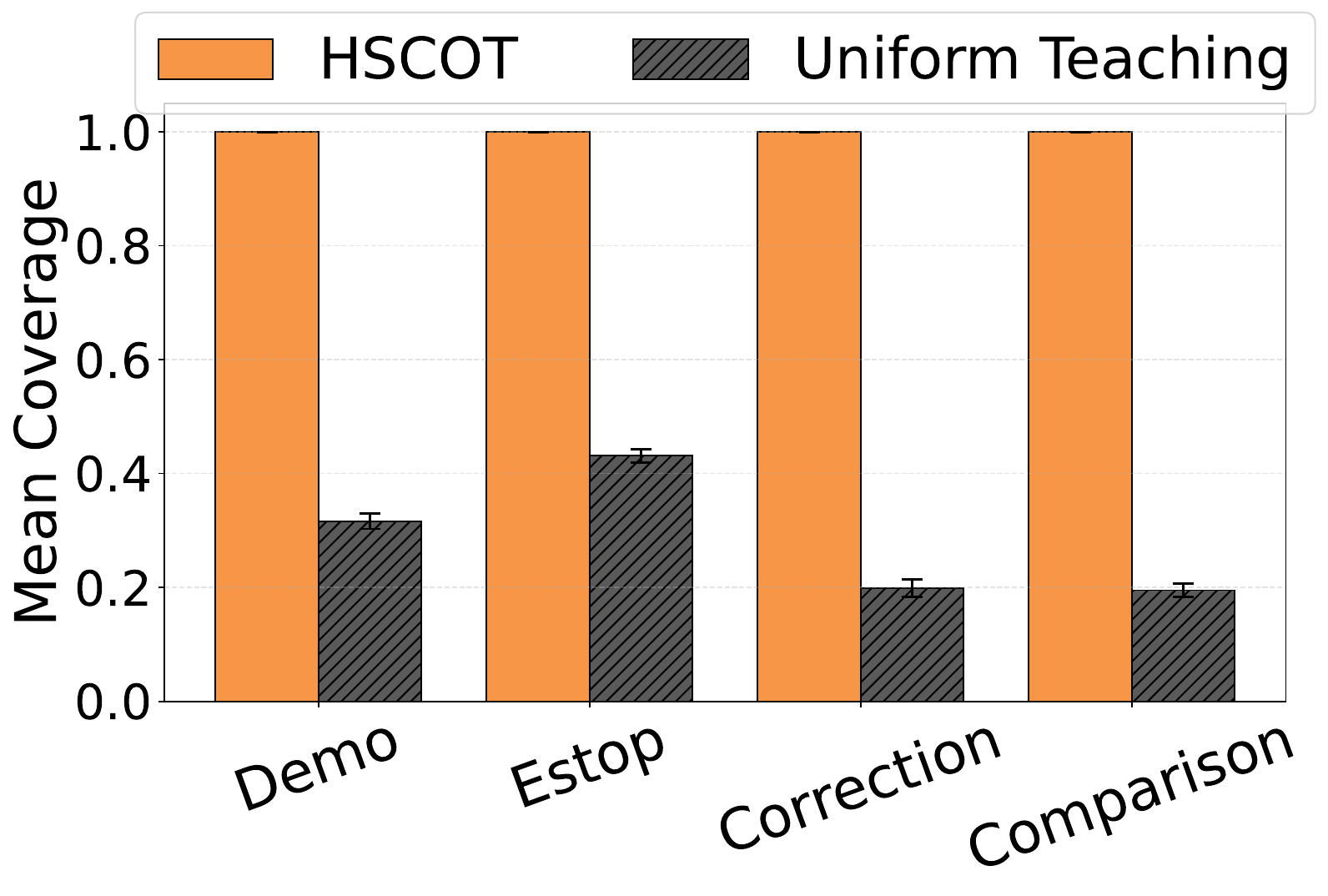}
    \caption{LavaMiniGrid}
    \label{fig:coverage_main_minigrid}
  \end{subfigure}
  \hfill
  \begin{subfigure}{0.24\textwidth}
    \includegraphics[width=\linewidth]{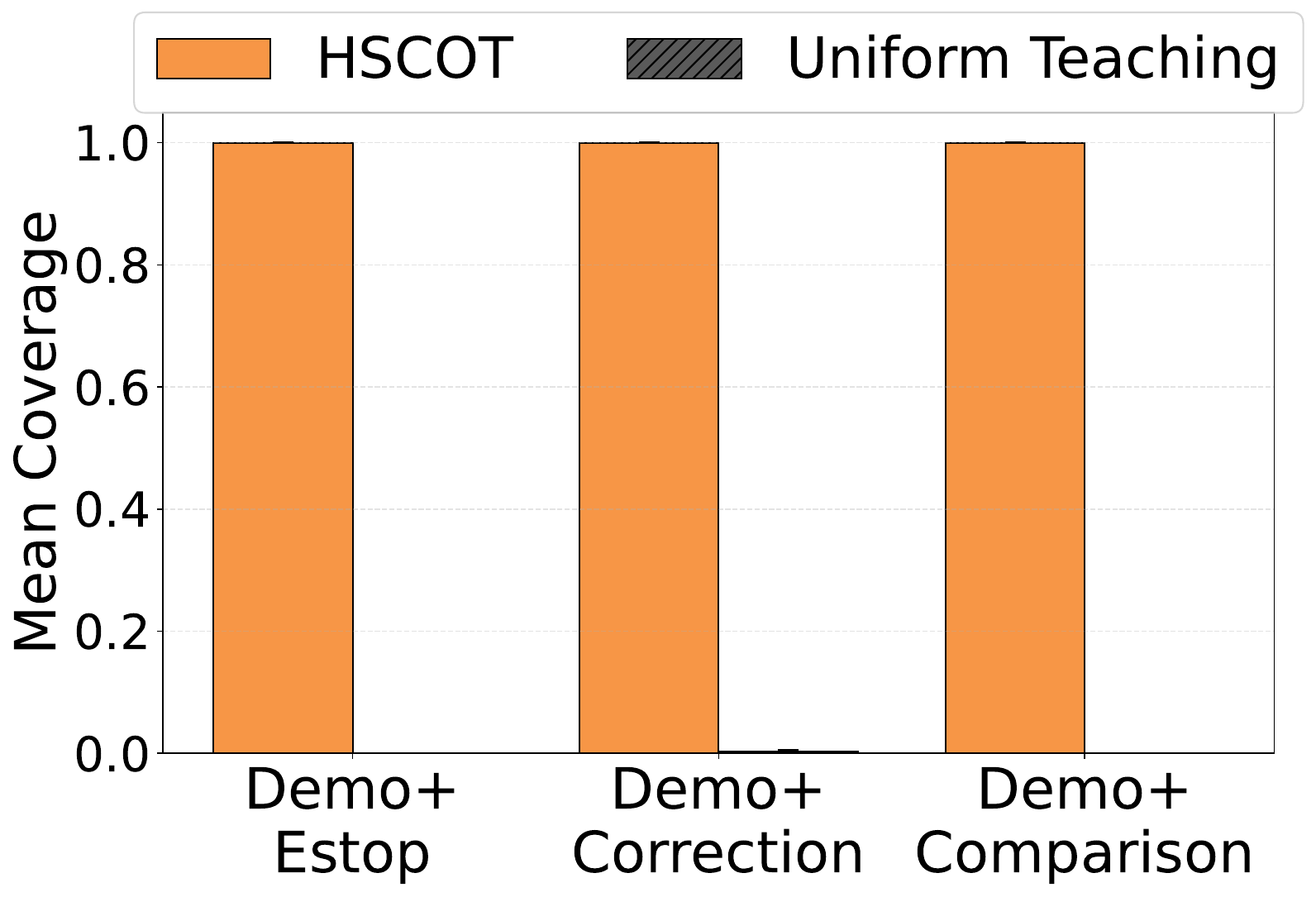}
    \caption{GridWorld}
    \label{fig:coverage_main_placeholder1}
  \end{subfigure}
  \hfill
  \begin{subfigure}{0.24\textwidth}
    \includegraphics[width=\linewidth]{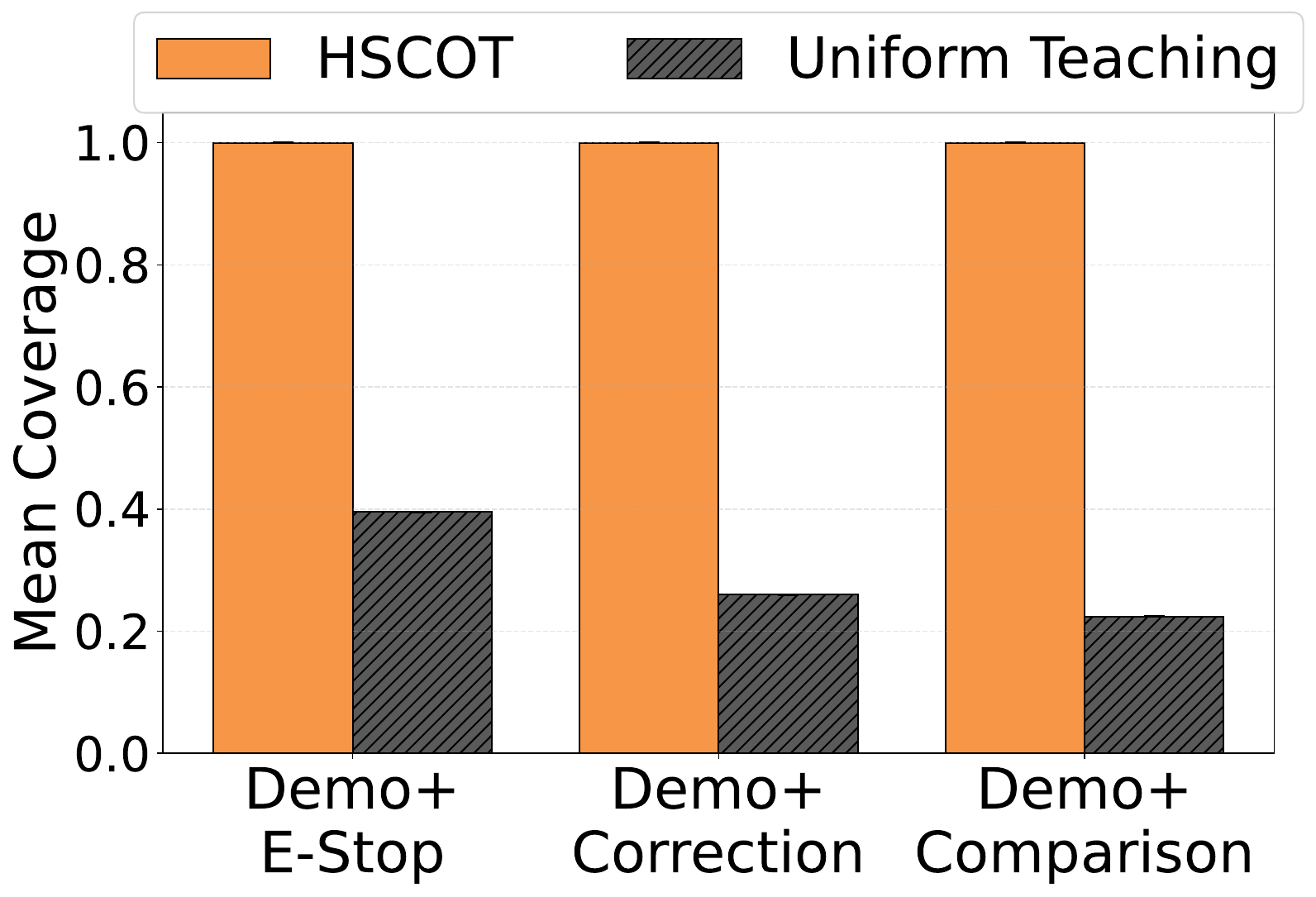}
    \caption{LavaMiniGrid}
    \label{fig:coverage_main_placeholder2}
  \end{subfigure}

  \vspace{4pt}
\caption{\textbf{Constraint coverage} (higher is better) averaged over 10 random seeds. Bars show the fraction of the universal constraint set covered, and error bars denote standard error across seeds. Subfigures (a,b) correspond to single-modality feedback, while (c,d) show mixed-feedback settings where demonstrations are combined with another modality. HSCOT (solid bars) achieves complete coverage in both domains across all settings, whereas uniform teaching (dashed bars) covers only a small fraction of the constraint universe.}
  \label{fig:coverage_main}
\end{figure}
\vspace{-2pt}

\paragraph{Selective environment activation.}
Table~\ref{tab:env_count} reports the average number of distinct environments activated during teaching.
HSCOT consistently uses fewer environments than uniform teaching under the same global budget, reflecting its outer-stage greedy selection of MDPs that contribute maximal marginal constraint coverage.
The results also highlight the necessity of multi-environment teaching under limited feedback.
Each MDP exposes only a subset of the global constraint universe due to its specific transition dynamics, so no single environment suffices to eliminate reward ambiguity.
HSCOT selects a small but informative subset whose joint constraints approximate the full intersection over all training MDPs.
Environment counts are lowest for demonstration feedback, consistent with each demonstration inducing many optimality constraints at once, and highest for E-stop feedback, whose weaker and more trajectory-local constraints require activating more environments to achieve comparable coverage.
Per-environment teaching maps that visualize, for each modality, the LavaMiniGrid layouts, the environments HSCOT activates, the held-out split, and the resulting per-environment regret are provided in Appendix~\ref{app:env_maps}. 

\begin{table}[t]
  \centering
  \caption{Average environments activated (out of 50 training MDPs), mean $\pm$ std over 10 seeds. Aggregate averages the two domain means. Lower is better.}
  \label{tab:env_count}
  \small
  \begin{tabular}{lcccccc}
    \toprule
    Feedback modality & \multicolumn{2}{c}{GridWorld} & \multicolumn{2}{c}{LavaMiniGrid} & \multicolumn{2}{c}{Aggregate} \\
    \cmidrule(lr){2-3} \cmidrule(lr){4-5} \cmidrule(lr){6-7}
                      & HSCOT & Uniform & HSCOT & Uniform & HSCOT & Uniform \\
    \midrule
    Demonstration     & $\mathbf{4.3 \pm 0.4}$ & $5.2 \pm 0.4$ & $\mathbf{5.5 \pm 0.4}$ & $7.1 \pm 0.4$ & \textbf{4.9} & 6.2 \\
    Correction        & $\mathbf{6.9 \pm 0.4}$ & $7.7 \pm 0.5$ & $\mathbf{6.9 \pm 0.6}$ & $7.0 \pm 0.5$ & \textbf{6.9} & 7.4 \\
    Comparison        & $\mathbf{6.3 \pm 0.4}$ & $7.2 \pm 0.4$ & $\mathbf{7.5 \pm 0.3}$ & $7.9 \pm 0.4$ & \textbf{6.9} & 7.6 \\
    E-stop            & $\mathbf{5.8 \pm 0.4}$ & $6.9 \pm 0.4$ & $\mathbf{12.0 \pm 0.4}$ & $15.0 \pm 0.5$ & \textbf{8.9} & 11.0 \\
    \bottomrule
  \end{tabular}
\end{table}

\vspace{-5pt}
\section{Conclusion}
\vspace{-5pt}
We studied how to teach reward functions that remain robust across environment dynamics. Our analysis showed that feedback modalities differ sharply in informativeness: comparisons impose the strongest global constraints with unlimited data, while demonstrations are more constraint-efficient per query under tight budgets. We further proved that reward identifiability is environment-dependent—unlimited feedback in a single MDP can leave residual ambiguity that only additional environments resolve. Building on these insights, we introduced HSCOT, a two-stage greedy algorithm that selects informative environments and then queries feedback within them. Across GridWorld and LavaMiniGrid, HSCOT achieved lower held-out regret, complete constraint coverage, and fewer activated environments than uniform teaching under identical budgets, though extending HSCOT to continuous domains and noisy human feedback remains an important direction.

\bibliographystyle{plainnat}
\bibliography{main}

\beginSupplementaryMaterials
\section{Appendix}
\label{sec:appendix}

\subsection{Proofs}
\label{app:proofs}
\setcounter{theorem}{0}
\setcounter{proposition}{0}
\setcounter{lemma}{0}

\begin{assumption}[Genericity / non-redundancy]
\label{ass:richness}
For each modality $m\in\{\text{demo},\text{corr},\text{E-stop}\}$, there exists a
$w^\star$-consistent comparison ordering
$w^{\star\top}\bigl(\Phi(\xi_i)-\Phi(\xi_j)\bigr)\ge 0$ that is \emph{not} implied by the
constraints of modality $m$; equivalently, $H_m\cap\{w: w^\top(\Phi(\xi_i)-\Phi(\xi_j))<0\}$
has positive measure. This rules out degenerate environments in which the modalities induce
identical constraints, and is what makes the inclusions in
Propositions~\ref{prop:pairwise_vs_demo}--\ref{prop:estop_comparison} strict.
\end{assumption}

\begin{remark}[Sufficient conditions for Assumption~\ref{ass:richness}]
Concretely, Assumption~\ref{ass:richness} holds when:
(i) for Prop.~\ref{prop:pairwise_vs_demo}, there exist suboptimal trajectories
$\xi_a,\xi_b\in\Xi\setminus\Xi^\star$ whose ordering is not implied by the demonstration
(same-initial-state optimality) constraints;
(ii) for Prop.~\ref{prop:comparison_vs_correction}, $M$ admits $\ge 2$ distinct initial states and
some cross-initial-state ordering is not implied by the same-initial-state constraints;
(iii) for Prop.~\ref{prop:estop_comparison}, some $w^\star$-consistent comparison ordering is not
implied by the union of all E-stop (prefix-vs-continuation) constraints across $\Xi$.
\end{remark}

All feasible regions below are intersected with the unit ball $\{w:\|w\|_2\le 1\}$.

\begin{proposition}[Comparisons strictly reduce reward ambiguity compared to demonstrations]
\label{prop:pairwise_vs_demo}

Let $\Xi^\star \subseteq \Xi$ denote the trajectories that are optimal under $w^\star$. 
Demonstrations induce the feasible region
\vspace{-5pt}
\[
H_{\text{demo}} =
\Bigl\{
w \in \mathbb{R}^d : \;
w^\top(\Phi(\xi^\star)-\Phi(\xi')) \ge 0 \;
\forall \xi^\star \in \Xi^\star,\;
\forall \xi' \in \Xi \text{ with the same initial state as } \xi^\star
\Bigr\}.
\]

\vspace{-3pt}
Then
\vspace{-5pt}
\[
H_{\text{comparison}} \subsetneq H_{\text{demo}},
\quad\text{and}\quad
G(H_{\text{comparison}}) < G(H_{\text{demo}}).
\]

\end{proposition}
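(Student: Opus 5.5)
The argument has two parts. First we show $H_{\text{comparison}}\subseteq H_{\text{demo}}$ by showing that every demonstration constraint already appears among the comparison constraints. Then we use Assumption~\ref{ass:richness} to find a point of $H_{\text{demo}}$ that violates some comparison constraint, and we upgrade that single point to a set of positive measure to get the strict volume inequality.

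\emph{Step 1 (inclusion).} Fix $\xi^\star\in\Xi^\star$ and $\xi'\in\Xi$ with the same initial state. There are two cases.
\begin{itemize}
\item If $w^{\star\top}\Phi(\xi^\star)>w^{\star\top}\Phi(\xi')$, then the constraint $w^\top(\Phi(\xi^\star)-\Phi(\xi'))\ge 0$ is literally one of the constraints defining $H_{\text{comparison}}$.
\item If equality holds, then $\xi'$ is also optimal from that initial state, so $\xi'\in\Xi^\star$. The demonstration region then contains both $w^\top(\Phi(\xi^\star)-\Phi(\xi'))\ge0$ and its reverse, which forces $w^\top(\Phi(\xi^\star)-\Phi(\xi'))=0$.
\end{itemize}
The tie case is the delicate point. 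As written, $H_{\text{comparison}}$ only imposes strict-preference pairs, so ties are unconstrained there, and the inclusion could fail. I would handle this by reading the complete ordering as also containing indifference constraints, i.e.\ the tie-breaking convention $\ge$ in both directions whenever $w^{\star\top}\Phi(\xi_i)=w^{\star\top}\Phi(\xi_j)$; Assumption~\ref{ass:richness} phrases the comparison orderings with $\ge$, which supports this reading. Alternatively one can assume the optimal trajectory from each initial state is unique in feature space, so that no ties arise. Either way, every constraint of $H_{\text{demo}}$ is implied by those of $H_{\text{comparison}}$. Intersecting both regions with the unit ball preserves the inclusion, which gives $H_{\text{comparison}}\subseteq H_{\text{demo}}$ and hence $G(H_{\text{comparison}})\le G(H_{\text{demo}})$.

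\emph{Step 2 (strictness).} By Assumption~\ref{ass:richness} for the demo modality, there is a $w^\star$-consistent ordering with $w^{\star\top}(\Phi(\xi_i)-\Phi(\xi_j))\ge 0$ such that
\[
S=H_{\text{demo}}\cap\{w: w^\top(\Phi(\xi_i)-\Phi(\xi_j))<0\}
\]
has positive measure; Remark~(i) gives a concrete instance via suboptimal pairs. This ordering is a constraint of $H_{\text{comparison}}$, so $S\cap H_{\text{comparison}}=\emptyset$, while $S\subseteq H_{\text{demo}}$. Hence $H_{\text{demo}}\setminus H_{\text{comparison}}\supseteq S$, which yields the strict inclusion. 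Combined with Step 1,
\[
G(H_{\text{demo}})=G(H_{\text{comparison}})+\mathrm{Vol}(H_{\text{demo}}\setminus H_{\text{comparison}})\ge G(H_{\text{comparison}})+\mathrm{Vol}(S)>G(H_{\text{comparison}}).
\]

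\emph{Main obstacle.} The main difficulty is the strictness of the volume inequality. Set inclusion alone only gives $\le$, and a difference set of measure zero would not help. This is exactly why the positive-measure clause in Assumption~\ref{ass:richness} is needed. If one instead wanted to derive that clause from Remark~(i), I would argue as follows. $H_{\text{demo}}$ is a closed convex cone. The unimplied ordering means the open halfspace $\{w: w^\top(\Phi(\xi_i)-\Phi(\xi_j))<0\}$ meets $H_{\text{demo}}$. If $H_{\text{demo}}$ is full-dimensional, it meets the interior, and the intersection is an open nonempty set in the ball, so it has positive volume. Full-dimensionality of $H_{\text{demo}}$ would itself need a mild genericity condition, which I would fold into the assumption.
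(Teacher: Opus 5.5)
Your proposal is correct and follows the same route as the paper's proof. Inclusion holds because each demonstration constraint is a special case of a comparison constraint; strictness comes from a $w^\star$-consistent ordering that the demonstration constraints do not imply (e.g.\ between two suboptimal trajectories, Remark~(i)); the volume gap then follows. You are more careful than the paper, which ignores ties and infers the strict volume inequality merely from adding constraints, whereas you correctly invoke the positive-measure clause of Assumption~\ref{ass:richness}; that same clause also settles your tie worry in Step~1, since two tied optimal trajectories from the same start state with different $\Phi$ would confine $H_{\text{demo}}$ to a hyperplane.
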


\begin{proof}
\textbf{Step 1: Inclusion $H_{\text{comparison}} \subseteq H_{\text{demo}}$.}

Demonstrations reveal trajectories that are optimal under the true reward $w^\star$.
Thus for any $\xi^\star \in \Xi^\star$ and any trajectory $\xi'$ with the same initial state,
\[
w^{\star\top}\Phi(\xi^\star) \ge w^{\star\top}\Phi(\xi').
\]

Each demonstration constraint therefore has the form
\[
w^\top(\Phi(\xi^\star)-\Phi(\xi')) \ge 0.
\]

This is a special case of the comparison ordering constraints defining
$H_{\text{comparison}}$, since comparisons compare arbitrary
trajectories $\xi_i,\xi_j \in \Xi$.

Therefore any $w$ satisfying all comparison constraints automatically satisfies
all demonstration constraints, implying

\[
H_{\text{comparison}} \subseteq H_{\text{demo}}.
\]

\textbf{Step 2: Strict inclusion.}

To show the inclusion is strict, it suffices to exhibit a reward vector
$w \in H_{\text{demo}} \setminus H_{\text{comparison}}$.

Consider two suboptimal trajectories
$\xi_a,\xi_b \in \Xi \setminus \Xi^\star$ such that

\[
w^{\star\top}\Phi(\xi_a) > w^{\star\top}\Phi(\xi_b).
\]

Let

\[
v := \Phi(\xi_a) - \Phi(\xi_b).
\]

Demonstration constraints compare optimal trajectories with alternatives
from the same initial state, but do not constrain the relative ordering
between suboptimal trajectories such as $\xi_a$ and $\xi_b$.
Thus the sign of $w^\top v$ is not determined by the demonstration constraints.

Consequently there exists a reward vector
$w \in H_{\text{demo}}$ such that

\[
w^\top v < 0.
\]

Such a vector satisfies all demonstration constraints but violates the
comparison ordering $\xi_a \succ \xi_b$, implying
$w \notin H_{\text{comparison}}$.

Therefore

\[
H_{\text{comparison}} \subsetneq H_{\text{demo}}.
\]

\textbf{Step 3: Volume comparison.}

Since $H_{\text{comparison}}$ is obtained from $H_{\text{demo}}$
by adding additional linear inequality constraints,
and both sets are compact convex subsets of the unit sphere
(after $\|w\|_2=1$ normalization),
their volumes satisfy

\[
G(H_{\text{comparison}}) < G(H_{\text{demo}}).
\]

This completes the proof.
\end{proof}

\begin{proposition}[Comparisons strictly reduce reward ambiguity compared to corrective feedback]
\label{prop:comparison_vs_correction}

Corrective feedback induces the feasible region
\vspace{-5pt}
\[
H_{\text{correction}} =
\Bigl\{
w \in \mathbb{R}^d : \;
w^\top(\Phi(\xi_{\mathrm{corr}})-\Phi(\xi_R)) \ge 0 \;
\forall \xi_{\mathrm{corr}}, \xi_R \in \Xi
\text{ with the same initial state}
\Bigr\}.
\]
\vspace{-3pt}
Then
\vspace{-5pt}
\[
H_{\text{comparison}} \subsetneq H_{\text{correction}},
\quad\text{and}\quad
G(H_{\text{comparison}}) < G(H_{\text{correction}}).
\]

\end{proposition}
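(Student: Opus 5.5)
The plan follows the three-step template of the proof of Proposition~\ref{prop:pairwise_vs_demo}: first inclusion, then strictness via a witness, then volume. All regions are intersected with the unit ball $\{w:\|w\|_2\le 1\}$, as stated at the start of the appendix.

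\textbf{Step 1 (inclusion).} I will read $H_{\text{correction}}$ in the idealized high-rationality regime. There a correction is only issued when $\xi_{\mathrm{corr}}$ is a strict improvement over $\xi_R$ under the teacher's reward, i.e. $w^{\star\top}\Phi(\xi_{\mathrm{corr}}) > w^{\star\top}\Phi(\xi_R)$. If $w^\star$-ties were admitted in both orders they would force equalities that $H_{\text{comparison}}$ does not impose, so I will state this reading explicitly. Under it, every correction constraint $w^\top(\Phi(\xi_{\mathrm{corr}})-\Phi(\xi_R))\ge 0$ is exactly one of the comparison constraints defining $H_{\text{comparison}}$, restricted to pairs that share an initial state. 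Hence the correction constraint set is a subset of the comparison constraint set. Any $w$ satisfying all comparison constraints therefore satisfies all correction constraints, which gives $H_{\text{comparison}}\subseteq H_{\text{correction}}$.

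\textbf{Step 2 (strictness).} I will invoke Assumption~\ref{ass:richness} in the concrete form of Remark (ii). The MDP has at least two distinct initial states, and there are trajectories $\xi_i,\xi_j$ with different initial states such that $w^{\star\top}(\Phi(\xi_i)-\Phi(\xi_j))>0$, while this ordering is not implied by the same-initial-state constraints. Set $v=\Phi(\xi_i)-\Phi(\xi_j)$. The assumption says that $S:=H_{\text{correction}}\cap\{w: w^\top v<0\}$ has positive measure. In particular $S$ is nonempty, so some $w\in H_{\text{correction}}$ violates the comparison constraint $\xi_i\succ\xi_j$, giving $w\notin H_{\text{comparison}}$ and hence the strict inclusion.

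For intuition on why the assumption is natural: same-initial-state differences only relate trajectories anchored at a common start. The offset between value levels at different starts, $\Phi(\xi_i)-\Phi(\xi_j)$, generically lies outside the cone those differences generate, much as in Lemma~\ref{lemma:distincttraj}.

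\textbf{Step 3 (volume).} By Step 2, $S\subseteq H_{\text{correction}}\setminus H_{\text{comparison}}$, and $S$ has positive measure. Since $H_{\text{comparison}}\subseteq H_{\text{correction}}$ by Step 1, and both sets are bounded measurable (closed convex cones cut by the unit ball), we get
\[
G(H_{\text{correction}})=G(H_{\text{comparison}})+\mathrm{Vol}(H_{\text{correction}}\setminus H_{\text{comparison}})\ge G(H_{\text{comparison}})+\mathrm{Vol}(S)>G(H_{\text{comparison}}).
\]

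\textbf{Main obstacle.} The delicate step is Step 2. Strict set inclusion alone does not yield strict volume inequality, since the sets could differ by a measure-zero face. Also, it is not automatic that cross-initial-state orderings escape the cone generated by same-start differences. I will rely directly on the positive-measure formulation of Assumption~\ref{ass:richness}, rather than attempting to derive it from the MDP structure. Where possible I will supplement it with the sufficient condition that $v\notin\operatorname{cone}\{\Phi(\xi)-\Phi(\xi'):\ \xi,\xi'\ \text{same initial state}\}$. By Farkas' lemma, this condition yields an open set of $w$ in the correction region with $w^\top v<0$.
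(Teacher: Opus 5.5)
Your proposal is correct and takes the same three-step route as the paper's proof: inclusion because correction constraints are the same-initial-state special case of comparison constraints, strictness via a $w^\star$-ordered pair with different initial states whose ordering the correction constraints do not enforce, and then the volume comparison. Your version is more careful than the paper's on two points: you make explicit the strict-improvement reading of $H_{\text{correction}}$ (the literal ``for all same-start pairs'' definition would force equalities and break the inclusion), and you derive $G(H_{\text{comparison}})<G(H_{\text{correction}})$ from the positive-measure clause of Assumption~\ref{ass:richness}, whereas the paper argues only from adding constraints, which by itself would not guarantee a strict volume drop.
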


\begin{proof}

\textbf{Step 1: Inclusion $H_{\text{comparison}} \subseteq H_{\text{correction}}$.}

Each corrective feedback instance compares two trajectories
$\xi_{\mathrm{corr}}$ and $\xi_R$ that share the same initial state,
imposing

\[
w^\top(\Phi(\xi_{\mathrm{corr}})-\Phi(\xi_R)) \ge 0.
\]

This is a special case of a comparison constraint.
Hence every correction constraint appears among the comparison constraints,
implying

\[
H_{\text{comparison}} \subseteq H_{\text{correction}}.
\]

\textbf{Step 2: Strict inclusion.}

Consider trajectories $\xi_a,\xi_b \in \Xi$ with different initial states
such that

\[
w^{\star\top}\Phi(\xi_a) > w^{\star\top}\Phi(\xi_b).
\]

The corresponding comparison constraint is

\[
w^\top(\Phi(\xi_a)-\Phi(\xi_b)) \ge 0.
\]

However, correction constraints only compare trajectories that start
from the same state. Therefore the ordering between $\xi_a$ and $\xi_b$
is not enforced by correction constraints.

Thus there exists $w \in H_{\text{correction}}$ such that

\[
w^\top(\Phi(\xi_a)-\Phi(\xi_b)) < 0.
\]

Hence $w \notin H_{\text{comparison}}$ and

\[
H_{\text{comparison}} \subsetneq H_{\text{correction}}.
\]

\textbf{Step 3: Volume comparison.}

Since $H_{\text{comparison}}$ adds additional linear constraints to
$H_{\text{correction}}$, their volumes satisfy

\[
G(H_{\text{comparison}}) < G(H_{\text{correction}}).
\]

\end{proof}





\begin{lemma}[E-stop constraints are trajectory-local]
\label{lemma:distincttraj}
Fix a trajectory $\xi \in \Xi$.
Any E-stop constraint comparing $\xi^{0:t}$ with $\xi$ lies in the subspace
\[
\operatorname{span}
\{\phi(s,a) \mid (s,a) \text{ appears in } \xi\}.
\]
In contrast, a pairwise comparison between $\xi$ and any $\xi' \in \Xi$ 
that visits a state-action pair $(s',a')$ with $\phi(s',a')$ linearly 
independent of this subspace produces a feature difference that lies 
outside it.
\end{lemma}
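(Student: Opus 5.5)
The argument is a direct computation with the discounted feature vector $\Phi$. It has two parts: the E-stop claim and the contrasting comparison claim.

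\textbf{E-stop part.} Write $\xi=(s_0,a_0,s_1,a_1,\dots)$ and let $S_\xi=\operatorname{span}\{\phi(s,a)\mid (s,a)\text{ appears in }\xi\}$. By the definition of the halted trajectory in the E-stop model,
\[
\Phi(\xi_{\mathrm{halted}})=\sum_{\tau<t}\gamma^\tau\phi(s_\tau,a_\tau)+\sum_{\tau\ge t}\gamma^\tau\phi(s_t,a_t^{\mathrm{halt}}).
\]
Subtracting $\Phi(\xi)$ cancels the common prefix, which gives
\[
\Phi(\xi^{0:t})-\Phi(\xi)=\sum_{\tau\ge t}\gamma^\tau\big(\phi(s_t,\cdot)-\phi(s_\tau,a_\tau)\big).
\]
Every $\phi(s_\tau,a_\tau)$ appearing here is a generator of $S_\xi$. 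The halted term is evaluated at the state $s_t$ reached along $\xi$. I will adopt the convention, implicit in the paper's definition, that the halted feature is $\phi(s_t,a_t)$, i.e.\ the pair already visited at time $t$. Under that convention this term also lies in $S_\xi$.

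The sums are infinite, but they are absolutely convergent because $\gamma<1$ and the finite MDP has bounded features. $S_\xi$ is finite-dimensional, hence closed, so the limit of the partial sums stays in $S_\xi$. This closure point is the only analytic subtlety, and it also covers finite-horizon trajectories trivially. Therefore every E-stop difference vector lies in $S_\xi$.

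\textbf{Comparison part.} Take $\xi'$ visiting $(s',a')$ with $\phi(s',a')\notin S_\xi$, and suppose for contradiction that $\Phi(\xi')-\Phi(\xi)\in S_\xi$. Since $\Phi(\xi)\in S_\xi$, this gives $\Phi(\xi')\in S_\xi$.

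Let $S'=S_\xi+\operatorname{span}\{\phi(s,a)\mid (s,a)\in\xi,\ \phi(s,a)\ne\phi(s',a')\}$. Decompose $\Phi(\xi')=c\,\phi(s',a')+u$, where $c=\sum_{\tau:(s_\tau',a_\tau')=(s',a')}\gamma^\tau>0$ and $u$ collects the other visited pairs. Then $c\,\phi(s',a')=\Phi(\xi')-u$.

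This step needs genuine care, and it is the main obstacle. The remaining pairs of $\xi'$ may themselves have features outside $S_\xi$ that could cancel with $\phi(s',a')$, so the conclusion does not follow from the definitions alone. I would handle it by taking the hypothesis as the natural reading: the features of $\xi'$ outside $S_\xi$ are linearly independent of $S_\xi$ and of each other, with $\phi(s',a')$ as a representative. More concretely, I would assume $\phi(s',a')$ is independent of $S_\xi$ together with all other features visited by $\xi'$.

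Under that assumption, project onto a complement of $W=S_\xi+\operatorname{span}\{\text{other features of }\xi'\}$ along $W$. The projection of $\Phi(\xi')-\Phi(\xi)$ equals $c\,\pi(\phi(s',a'))\neq 0$, so the difference lies outside $S_\xi$, which is the required contrast.

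If the paper intends the weaker literal hypothesis, I would note that a generic coefficient argument suffices instead. The coefficient $c$ is a positive sum of powers of $\gamma$, and such cancellation occurs only for nongeneric feature configurations. I would state the result under the independence reading, in line with Assumption~\ref{ass:richness}.
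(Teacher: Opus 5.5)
Your E-stop half is the paper's argument. The paper computes $\Phi(\xi^{0:t})-\Phi(\xi)=\frac{\gamma^{t+1}}{1-\gamma}\phi(s_t,a_t)-\sum_{\tau=t+1}^{T}\gamma^\tau\phi(s_\tau,a_\tau)$, taking the visited pair $(s_t,a_t)$ as the halted feature exactly as you do, and reads off membership in the span. Your closure remark is extra care; in a finite MDP only finitely many distinct pairs occur, so $\Phi(\xi)$ is a finite combination anyway.

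The comparison half is where you genuinely diverge, and the divergence is warranted. The paper gives no argument there. It only asserts that $\Phi(\xi)-\Phi(\xi')$ ``contains a component outside the trajectory-specific subspace'' under the literal one-pair hypothesis. Your worry about cancellation is correct, and the literal claim is false in general. Take $d=2$ and let $\xi$ visit only pairs with feature $e_1$. Let $\xi'$ visit $(s',a')$ with $\phi(s',a')=e_2$ at time $0$, a pair with feature $-\gamma^{-1}e_2$ at time $1$, and pairs with feature $e_1$ afterwards. Then $\Phi(\xi')=\frac{\gamma^2}{1-\gamma}e_1$, so the difference lies in $S_\xi$ even though $\phi(s',a')\notin S_\xi$. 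Cancellation cannot occur for one-hot indicator features, which is presumably the intended setting. Your strengthened hypothesis, $\phi(s',a')\notin W=S_\xi+\operatorname{span}\{\text{other features of }\xi'\}$, together with projection along $W$, is a correct proof of the repaired claim. The trade-off is that your version needs a stronger hypothesis but yields a true lemma, whereas the paper keeps the weaker hypothesis with an unproved and generally false second claim.

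A few minor points:
\begin{itemize}
\item Define $c$ as the total discounted weight of all time steps of $\xi'$ whose feature vector equals $\phi(s',a')$, not only visits to the pair $(s',a')$. Correspondingly, ``other features'' should mean feature vectors different from $\phi(s',a')$. Otherwise your independence assumption fails trivially whenever two pairs share a feature.
\item Your $S'$ is written with $\xi$ where you meant $\xi'$, so as written it equals $S_\xi$; it is also never used and can be dropped.
\item Assumption~\ref{ass:richness} concerns orderings not implied by a modality's constraints, so it does not supply the independence condition.
\item Your genericity remark does not rescue the literal claim for a fixed feature map. The extra hypothesis therefore has to be stated in the lemma itself.
\end{itemize}
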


\begin{proof}

Let

\[
\xi = (s_0,a_0,s_1,a_1,\dots,s_T,a_T)
\]

be a trajectory.

For any stopping time $t<T$, the halted trajectory is $\xi^{0:t}$.

The discounted feature sums are

\[
\Phi(\xi)
=
\sum_{\tau=0}^{T} \gamma^\tau \phi(s_\tau,a_\tau)
\]

and

\[
\Phi(\xi^{0:t})
=
\sum_{\tau=0}^{t} \gamma^\tau \phi(s_\tau,a_\tau)
+
\sum_{\tau=t+1}^{\infty} \gamma^\tau \phi(s_t,a_t).
\]

The difference vector is

\[
\Phi(\xi^{0:t})-\Phi(\xi)
=
\frac{\gamma^{t+1}}{1-\gamma}\phi(s_t,a_t)
-
\sum_{\tau=t+1}^{T}\gamma^\tau\phi(s_\tau,a_\tau).
\]

All terms involve only feature vectors $\phi(s_\tau,a_\tau)$
for state-action pairs appearing in $\xi$.

Therefore

\[
\Phi(\xi^{0:t})-\Phi(\xi)
\in
\operatorname{span}
\{\phi(s,a)\mid (s,a)\text{ appears in }\xi\}.
\]

Now consider another trajectory $\xi'$ that visits a state-action pair
$(s',a')$ whose feature vector is linearly independent of this span.
Then the comparison difference

\[
\Phi(\xi)-\Phi(\xi')
\]

contains a component outside the trajectory-specific subspace,
which cannot be generated by E-stop constraints along $\xi$ alone.

\end{proof}

\begin{proposition}[Comparisons strictly reduce reward ambiguity compared to E-stop feedback]
\label{prop:estop_comparison}

E-stop feedback induces the feasible region
\vspace{-5pt}
\[
H_{\text{E-stop}} =
\Bigl\{
w \in \mathbb{R}^d : \|w\|_2 \le 1,\;
w^\top(\Phi(\xi^{0:t})-\Phi(\xi)) \ge 0 \;
\forall \xi \in \Xi,\;
\forall t
\Bigr\}.
\]
\vspace{-3pt}
Then
\vspace{-5pt}
\[
H_{\text{comparison}} \subsetneq H_{\text{E-stop}},
\quad\text{and}\quad
G(H_{\text{comparison}}) < G(H_{\text{E-stop}}).
\]

\end{proposition}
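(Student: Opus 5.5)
We follow the same three-step template as Propositions~\ref{prop:pairwise_vs_demo} and~\ref{prop:comparison_vs_correction}: inclusion, strictness, then volume.

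\textbf{Step 1 (inclusion).} We first check that every E-stop constraint is $w^\star$-consistent. The E-stop modality is queried in the high-rationality limit, so an \texttt{off} response at time $t$ on rollout $\xi$ is issued only when $w^{\star\top}\Phi(\xi^{0:t}) \ge w^{\star\top}\Phi(\xi)$. We therefore read the quantifier ``$\forall \xi,\forall t$'' in $H_{\text{E-stop}}$ as ranging over the halting events actually elicited under $w^\star$. In the strict case each such constraint is literally one of the comparison constraints defining $H_{\text{comparison}}$, instantiated with $\xi_i=\xi^{0:t}$ and $\xi_j=\xi$. Ties contribute nothing binding beyond what the comparison ordering already implies, so we set them aside. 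Any $w$ satisfying all comparison constraints then satisfies all E-stop constraints, giving $H_{\text{comparison}}\subseteq H_{\text{E-stop}}$.

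\textbf{Step 2 (strictness).} We need a $w$ in $H_{\text{E-stop}}$ that violates some comparison ordering, and there are two routes. The first is to use Lemma~\ref{lemma:distincttraj} directly. E-stop constraints generated along a fixed $\xi$ lie in $S_\xi=\operatorname{span}\{\phi(s,a):(s,a)\in\xi\}$. A comparison $\xi \succ \xi'$, where $\xi'$ visits a pair whose feature lies outside $S_\xi$, yields a difference vector $v=\Phi(\xi)-\Phi(\xi')$ with a nonzero component orthogonal to $S_\xi$. Perturbing a feasible point along that orthogonal component flips the sign of $w^\top v$ without changing $w^\top u$ for any $u\in S_\xi$.

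The difficulty is that $H_{\text{E-stop}}$ collects constraints from \emph{all} trajectories, so the union of the spans $S_\xi$ may be all of $\mathbb{R}^d$, and the local-span argument alone does not finish the proof. This is the main obstacle. We would therefore fall back on Assumption~\ref{ass:richness}(iii), which asserts exactly that some $w^\star$-consistent ordering $v$ is not implied by the union of the E-stop constraints. Concretely, $H_{\text{E-stop}}\cap\{w:w^\top v<0\}$ has positive measure. Any $w$ in this set lies in $H_{\text{E-stop}}\setminus H_{\text{comparison}}$. The lemma then serves as the structural explanation of when (iii) holds: it holds whenever the E-stop cone, generated by prefix-versus-continuation vectors, is a strict subcone of the comparison cone.

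\textbf{Step 3 (volume).} The strict inclusion alone does not give a strict volume inequality, since the removed set could have measure zero. Assumption~\ref{ass:richness} supplies the positive-measure set removed, intersected with the unit ball. Because $H_{\text{comparison}}\subseteq\{w:w^\top v\ge 0\}$, we get
\[
G(H_{\text{E-stop}})\ \ge\ G(H_{\text{comparison}})+\mathrm{Vol}\bigl(H_{\text{E-stop}}\cap\{w^\top v<0\}\bigr)\ >\ G(H_{\text{comparison}}).
\]
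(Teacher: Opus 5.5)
Your argument is correct under your reading of the quantifier. It follows the paper's three-step template, but Steps 2 and 3 take a genuinely different route.

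The paper derives strictness from the trajectory-locality lemma alone: E-stop vectors along $\xi$ stay in the span of features on $\xi$, while a comparison with $\xi'$ leaves that span. It then asserts $G(H_{\text{comparison}})<G(H_{\text{E-stop}})$ merely because comparisons add constraints. You correctly observe that neither inference closes the argument.
\begin{itemize}
\item $H_{\text{E-stop}}$ pools E-stop constraints over all of $\Xi$, including $\xi'$ itself, whose prefix-versus-continuation vectors can involve $\phi(s',a')$. So per-trajectory locality says nothing about the pooled region.
\item A strict inclusion can remove a null set, so it does not by itself give a strict volume gap.
\end{itemize}
Invoking the genericity assumption (iii) with its positive-measure clause fixes both at once, through your splitting of $H_{\text{E-stop}}$ along $\{w:w^\top v<0\}$.

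The trade-off is this. The paper's route, were it valid, would extract strictness from the structure of E-stop feedback. Yours is logically complete but essentially postulates strictness, and the lemma becomes heuristic motivation. That is consistent with the paper's own remark that the assumption is what makes the inclusions strict.

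Your Step 1 is also more careful than the paper's. Read literally, $\forall\xi,\forall t$ includes halts that $w^\star$ strictly disprefers. $w^\star$ violates those constraints yet lies in $H_{\text{comparison}}$, so without your restriction to $w^\star$-consistent halts the inclusion would fail.

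One correction to your tie remark. The claim that ties add nothing binding beyond the comparison ordering is false in general. For a halt vector $v$ with $w^{\star\top}v=0$, the hyperplane $w^\top v=0$ passes through $w^\star$ and typically splits $H_{\text{comparison}}$. The clean fix is to define $H_{\text{E-stop}}$ over strictly preferred halts only, matching the strict inequality in $H_{\text{comparison}}$. Likewise, when you apply (iii), take $v$ with $w^{\star\top}v>0$, since only then is $H_{\text{comparison}}\subseteq\{w:w^\top v\ge 0\}$ guaranteed.

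Finally, both your inclusion step and the paper's tacitly require the halted trajectory $\xi^{0:t}$ to belong to $\Xi$.
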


\begin{proof}

\textbf{Step 1: Inclusion.}

Each E-stop constraint compares two trajectories $\xi^{0:t}$ and $\xi$.
This is a special case of a comparison
Thus every E-stop constraint appears among the comparison constraints,
implying

\[
H_{\text{comparison}} \subseteq H_{\text{E-stop}}.
\]

\textbf{Step 2: Strict inclusion.}

By Lemma~\ref{lemma:distincttraj},
E-stop constraints along a trajectory $\xi$ only constrain reward
vectors in the subspace spanned by features appearing in $\xi$.

Comparisons between trajectories $\xi$ and $\xi'$ can introduce
feature differences outside this subspace.

Hence there exist reward vectors satisfying all E-stop constraints
but violating a comparison ordering.

Therefore

\[
H_{\text{comparison}} \subsetneq H_{\text{E-stop}}.
\]

\textbf{Step 3: Volume comparison.}

Since $H_{\text{comparison}}$ adds additional constraints to $H_{\text{E-stop}}$,
their volumes satisfy

\[
G(H_{\text{comparison}}) < G(H_{\text{E-stop}}).
\]

\end{proof}





\begin{theorem}[Single-MDP ambiguity]
\label{thm:single_mdp_ambiguity}
Let $\mathcal{M}=\{M_1,M_2\}$ be two finite MDPs that share a common 
feature map $\phi:\mathcal{S}\times\mathcal{A}\rightarrow\mathbb{R}^d$ 
and a common ground-truth reward parameter $w^\star \in \mathbb{R}^d$, 
where rewards take the linear form $r(s,a)=w^{\star\top}\phi(s,a)$.
For any MDP $M_k$, let $\mathcal{W}(M_k)$ denote the generalized behavioral
equivalence class (gBEC), i.e.,
the set of reward parameters consistent with \emph{all possible}
idealized feedback constraints obtainable within $M_k$.
Define the feature-difference span induced by $M_k$ as
\[
\mathcal{V}_k
=\operatorname{span}\{\Delta\Phi(\xi^+,\xi^-)\;:\;
\xi^+,\xi^- \text{ are feasible trajectories in } M_k\},
\]
where $\Delta\Phi(\xi^+,\xi^-)=\Phi(\xi^+)-\Phi(\xi^-)$
and $\Phi(\xi)$ denotes the feature expectation of trajectory $\xi$.
Assume that the induced spans satisfy 
$\mathcal{V}_1 \subsetneq \operatorname{span}(\mathcal{V}_1\cup\mathcal{V}_2)$. 
Then there exists a reward parameter $w\neq w^\star$ such that 
$w \in \mathcal{W}(M_1)$ but $w \notin \mathcal{W}(M_1)\cap\mathcal{W}(M_2)$. 
\end{theorem}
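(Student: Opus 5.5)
The plan is to perturb $w^\star$ in a direction that every constraint of $M_1$ is blind to but some constraint of $M_2$ is not. Write $\mathcal{V}=\operatorname{span}(\mathcal{V}_1\cup\mathcal{V}_2)$. I use the unlimited-data description of $\mathcal{W}(M_k)$ from the comparison analysis, i.e.\ $H_{\text{comparison}}$ for $M_k$. Every idealized constraint in $M_k$ then has the form $w^\top\Delta\ge0$, where $\Delta=\Phi(\xi^+)-\Phi(\xi^-)\in\mathcal{V}_k$ and $w^{\star\top}\Delta>0$. In particular $w^\star\in\mathcal{W}(M_1)\cap\mathcal{W}(M_2)$.

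\textbf{Step 1: a blind direction.} Since $\mathcal{V}_1\subsetneq\mathcal{V}$, some $v_2\in\mathcal{V}_2$ satisfies $v_2\notin\mathcal{V}_1$. Let $u$ be the component of $v_2$ orthogonal to $\mathcal{V}_1$. Then $u\neq0$, $u\perp\mathcal{V}_1$, and $u^\top v_2=\|u\|^2>0$. For every $t\in\mathbb{R}$, set $w_t=w^\star+tu$. For any $\Delta\in\mathcal{V}_1$ we get $w_t^\top\Delta=w^{\star\top}\Delta$. So $w_t$ satisfies every $M_1$ constraint exactly as $w^\star$ does, and hence $w_t\in\mathcal{W}(M_1)$ for all $t$.

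\textbf{Step 2: a violated $M_2$ constraint.} The vector $v_2$ is a linear combination of generators $\Delta\Phi(\xi^+,\xi^-)$ of $\mathcal{V}_2$. Since $u^\top v_2\neq0$, at least one generator $\Delta_2$ has $u^\top\Delta_2\neq0$. If $w^{\star\top}\Delta_2\neq0$, orient it so that $w^{\star\top}\Delta_2>0$. It is then a genuine $M_2$ ordering constraint.

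Now choose $t=-2\,w^{\star\top}\Delta_2/(u^\top\Delta_2)$. This gives $w_t^\top\Delta_2=-w^{\star\top}\Delta_2<0$, so $w_t\notin\mathcal{W}(M_2)$. Also $t\neq0$, so $w_t\neq w^\star$.

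The positive-volume, unit-ball convention only requires rescaling $w_t$ by a positive constant. The constraints are homogeneous, so rescaling preserves membership in $\mathcal{W}(M_1)$ and non-membership in $\mathcal{W}(M_2)$. The rescaled vector cannot equal $w^\star$: a positive multiple of $w^\star$ would satisfy $\Delta_2$, but $w_t$ violates it.

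\textbf{Main obstacle.} The delicate case is when every generator with $u^\top\Delta\neq0$ is a tie, i.e.\ $w^{\star\top}\Delta=0$. The strict-ordering definition of $H_{\text{comparison}}$ imposes nothing on ties, so Step 2 gives no constraint to violate.

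I would first handle this by reading ties as the two constraints $w^\top\Delta\ge0$ and $w^\top\Delta\le0$, which is natural since the teacher reports indifference. Then any $t\neq0$ violates one of them. If the paper intends strict orderings only, I would instead invoke a genericity condition in the spirit of Assumption~\ref{ass:richness}: some non-tied generator of $\mathcal{V}_2$ has a nonzero component outside $\mathcal{V}_1$. That condition is exactly what Step 2 needs.
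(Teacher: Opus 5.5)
Your proof is correct and follows the paper's argument: take the component $u$ of some $z\in\mathcal{V}_2\setminus\mathcal{V}_1$ orthogonal to $\mathcal{V}_1$, note that $w^\star+\varepsilon u$ agrees with $w^\star$ on every $M_1$ constraint, and choose the sign and size of the perturbation so that some $M_2$ constraint is violated. The paper resolves the tie case exactly as in your first option: it defines $\mathcal{W}(M_k)$ through all $\Delta\Phi$ with $(w^\star)^\top\Delta\Phi\ge 0$, so a tied pair contributes both $\Delta\Phi$ and $-\Delta\Phi$, which amounts to an equality constraint.
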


\begin{proof}
\textbf{Step 1: Cone form of the gBEC.}
For each MDP $M_k$, let
\[
\mathcal{C}_k=\bigl\{\Delta\Phi(\xi^+,\xi^-) : \xi^+,\xi^- \text{ feasible in } M_k,\
(w^\star)^\top\Delta\Phi(\xi^+,\xi^-)\ge 0\bigr\}
\]
be the feature-difference directions that form valid constraints under $w^\star$.
Then
\[
\mathcal{W}(M_k)=\{w\in\mathbb{R}^d : w^\top\Delta\Phi\ge 0\ \ \forall\,\Delta\Phi\in\mathcal{C}_k\}.
\]
For any pair $(\xi_i,\xi_j)$, one of $\Delta\Phi(\xi_i,\xi_j)$ or its negation
$\Delta\Phi(\xi_j,\xi_i)$ satisfies the sign condition, so $\mathcal{C}_k$ contains a
spanning vector for every line generated by a feature difference, giving
$\operatorname{span}(\mathcal{C}_k)=\mathcal{V}_k$.

\textbf{Step 2: Perturbation direction $u$.}
Since $\mathcal{V}_1\subsetneq\operatorname{span}(\mathcal{V}_1\cup\mathcal{V}_2)$, we
have $\mathcal{V}_2\not\subseteq\mathcal{V}_1$. Fix $z\in\mathcal{V}_2\setminus\mathcal{V}_1$.
Let $\{b_1,\dots,b_m\}$ be an orthonormal basis of $\mathcal{V}_1$ and define
\[
u=z-\sum_{i=1}^{m}(z^\top b_i)\,b_i,
\]
the component of $z$ orthogonal to $\mathcal{V}_1$. Then $u\neq 0$ (as
$z\notin\mathcal{V}_1$), and $u^\top\Delta\Phi=0$ for all $\Delta\Phi\in\mathcal{V}_1$.
Moreover $u^\top z=\|u\|_2^2>0$ with $z\in\mathcal{V}_2$, so $u\not\perp\mathcal{V}_2$.

\textbf{Step 3: A violable constraint.}
Since $\operatorname{span}(\mathcal{C}_2)=\mathcal{V}_2$ and $u\not\perp\mathcal{V}_2$,
there exists $\Delta\Phi'\in\mathcal{C}_2$ with $u^\top\Delta\Phi'\neq 0$. Replacing
$z$ by $-z$ if its projection gives the wrong sign, take $u$ so that
\[
u^\top\Delta\Phi'<0,\qquad (w^\star)^\top\Delta\Phi'\ge 0,
\]
where the second inequality holds because $\Delta\Phi'\in\mathcal{C}_2$.

\textbf{Step 4: Explicit perturbation magnitude.}
Choose
\[
\varepsilon>\frac{(w^\star)^\top\Delta\Phi'}{\lvert u^\top\Delta\Phi'\rvert}\ \ (\ge 0).
\]
Then
\[
(w^\star+\varepsilon u)^\top\Delta\Phi'
=(w^\star)^\top\Delta\Phi'-\varepsilon\,\lvert u^\top\Delta\Phi'\rvert<0,
\]
for any margin $(w^\star)^\top\Delta\Phi'\ge 0$.

\textbf{Step 5: Witness reward.}
Set $\tilde w=w^\star+\varepsilon u\neq w^\star$. For every
$\Delta\Phi\in\mathcal{C}_1$, $u^\top\Delta\Phi=0$, so
\[
\tilde w^\top\Delta\Phi=(w^\star)^\top\Delta\Phi\ge 0,
\]
giving $\tilde w\in\mathcal{W}(M_1)$. By Step 4, $\tilde w^\top\Delta\Phi'<0$, so
$\tilde w\notin\mathcal{W}(M_2)$ and hence
$\tilde w\notin\mathcal{W}(M_1)\cap\mathcal{W}(M_2)$. Therefore no feedback obtained
solely from $M_1$ identifies $w^\star$.
\end{proof}





\subsection{Per-budget feasible region analysis}
\label{app:budget_sweep}

To analyze the limited-budget regime more rigorously, we go beyond the single
fixed-budget heatmaps and study how each feedback modality behaves as the budget
varies. We proceed in two steps. First, we visualize the feasible reward region
as a heatmap across a range of feedback budgets $B\in\{1,2,3,5,7,10,15,20\}$,
showing how the region shrinks for each modality as more queries become
available. Second, to obtain a quantitative, budget-matched comparison, we
approximate the \emph{volume} of the feasible region at each budget and plot it
as a function of $B$. All figures in this section are computed for the same
single MDP in Figure~\ref{fig:grid1}, matching the finite-data regions in
Figure~\ref{fig:limite_feasible_region}.

\paragraph{Per-budget heatmap results.}
The per-budget feasibility heatmaps (Figures~\ref{fig:heatmaps_budget}
and~\ref{fig:heatmaps_budget_cont}) show how each modality contracts the feasible
region as the budget grows. Demonstrations are tight from the start: already at
$B=1$ (Fig.~\ref{fig:hb1}) the demonstration region is a narrow wedge around
$w^\star$ and it changes little thereafter, since a single demonstration
contrasts the optimal action at each visited state against its alternatives and
thus instantiates many optimality constraints at once. Comparisons narrow
steadily: the comparison region starts as a wide wedge at $B=1$ but contracts at
each successive budget, becoming a thin sliver comparable to demonstrations by
$B=15$--$20$ (Fig.~\ref{fig:hb15}--\ref{fig:hb20}), because comparisons can
contrast arbitrary trajectory pairs and each query may cut along a new
feature-difference direction. Corrections and E-stops, by contrast, saturate:
they reduce the feasible region far more slowly and retain wide wedges even at
$B=20$, since by Lemma~\ref{lemma:distincttraj} E-stop constraints lie in the
span of features along a single trajectory and corrections are limited by their
shared start-state structure, so repeated queries re-cut along similar directions
and quickly stop adding information. Taken together, the heatmaps make a regime
crossover visible: demonstrations dominate under tight budgets, while comparisons
overtake them as the budget grows. This trend is consistent with our
unlimited-data analysis (Figure~\ref{fig:feasible_region}): as the budget
increases, the comparison wedge approaches and begins to match the demonstration
wedge, moving toward the unlimited-data regime in which comparisons impose the
strongest global ordering constraints and yield the smallest feasible region
(Proposition~\ref{prop:pairwise_vs_demo}). The crossover is only partial at the
budgets shown---corrections and E-stops remain far from their idealized regions
because their constraints saturate---but the comparison trend clearly anticipates
the unlimited-data ordering.

\paragraph{Feasible-region volume results.}
Aggregating these regions into a single scalar, Figure~\ref{fig:volume_vs_budget}
plots the feasible-region volume $G(D)$ against the budget $B$, providing the
budget-matched comparison across modalities. Demonstrations attain the smallest
volume at every budget, collapsing the feasible region to a thin sliver already
at $B=1$ and leaving little room to shrink further. Comparisons begin with a much
larger volume but contract steadily, approaching the demonstration curve only at
large budgets ($B\ge 15$), consistent with comparisons needing many queries
to accumulate the global ordering constraints that make them dominant in the
unlimited-data regime. Corrections and E-stops reduce ambiguity far more slowly
and plateau at large volumes---E-stop the largest---because their constraints are
confined to same-start-state or trajectory-local feature-difference subspaces
(\ Lemma~\ref{lemma:distincttraj}) and saturate quickly. The ordering
$G(D)_{\text{demo}}\le G(D)_{\text{comp}}\le G(D)_{\text{corr}}\le
G(D)_{\text{E-stop}}$ holds across the entire budget range with tight variance
bands, confirming that demonstrations are the most constraint-efficient modality
under limited budgets. This steady contraction of the comparison curve toward the
demonstration curve mirrors the crossover anticipated by the unlimited-data
analysis (Figure~\ref{fig:feasible_region}): with enough queries, comparisons
accumulate the global ordering constraints that make them the most constraining
modality in the limit, so increasing the budget moves the empirical ordering
toward the idealized one in which comparisons yield the smallest feasible region.
The convergence is clearest for comparisons; corrections and E-stops plateau well
above their idealized regions, indicating that the budgets considered here
approach but do not fully reach the unlimited-data regime for those modalities.

\paragraph{How the feasible-region volume is approximated.}
For each budget $B$ and each modality, we draw $B$ random feedback queries and
extract the induced homogeneous linear constraints $v^\top w \ge 0$, where
$v=\Phi(\xi^+)-\Phi(\xi^-)$ is the discounted feature-difference between the
preferred and dispreferred trajectory. The feasible reward region is the set of
weights satisfying all of these constraints simultaneously. Because every
constraint is homogeneous and we restrict to $\|w\|_2\le 1$, this region is a
convex cone intersected with the unit disk, i.e., an angular wedge. We
approximate its volume $G(D)$ as the \emph{fraction of the unit reward disk that
remains feasible}: we evaluate all constraints on a dense grid over the disk and report the proportion of points satisfying every constraint. Smaller $G(D)$ thus indicates a tighter, less ambiguous reward region. We repeat the procedure $100$ times per $(B,\text{modality})$ configuration and report the mean and standard deviation, yielding the curves in Figure~\ref{fig:volume_vs_budget}.

\begin{figure}[t]
  \centering
  \includegraphics[width=0.62\textwidth]{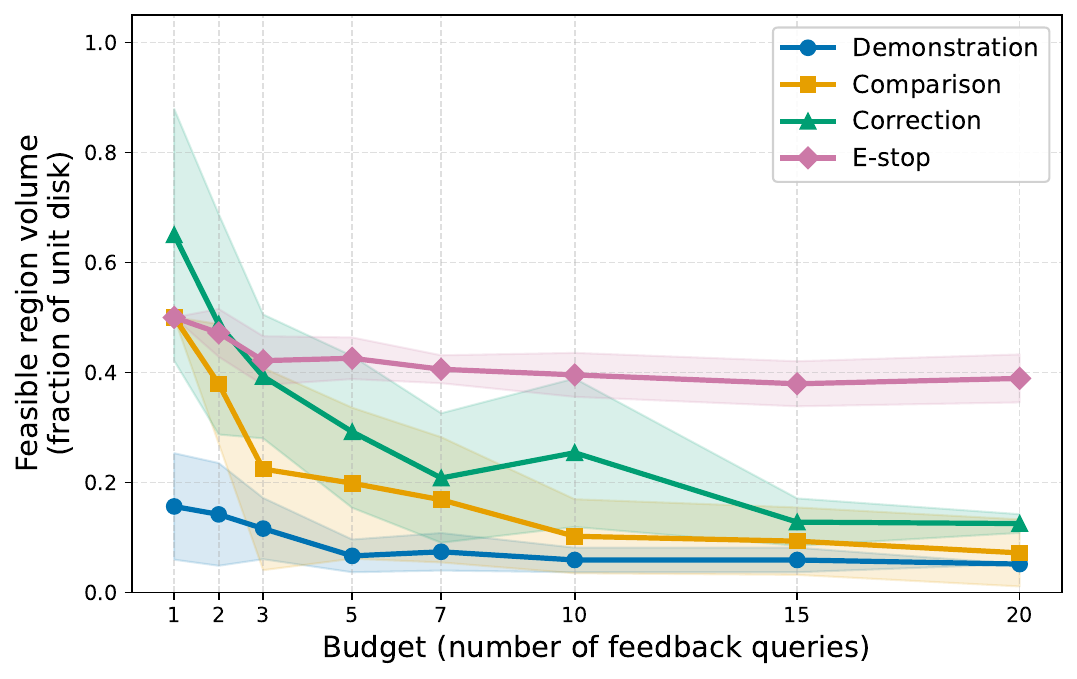}
  \caption{\textbf{Budget-matched reward ambiguity across modalities.}
  Feasible-region volume $G(D)$ (lower is better) versus budget $B$, averaged over
  $100$ random samples per setting; shaded bands denote $\pm 1$ standard deviation.
  Demonstrations attain the smallest volume at every budget; comparisons shrink
  steadily and approach demonstrations only for large $B$; corrections and E-stops
  reduce ambiguity slowly and plateau at larger volumes. The ordering
  $G(D)_{\text{demo}}\le G(D)_{\text{comp}}\le G(D)_{\text{corr}}\le
  G(D)_{\text{E-stop}}$ holds across the entire range with tight variance bands.}
  \label{fig:volume_vs_budget}
\end{figure}

\begin{figure}[p]
  \centering
  \begin{subfigure}{\textwidth}
    \includegraphics[width=\linewidth]{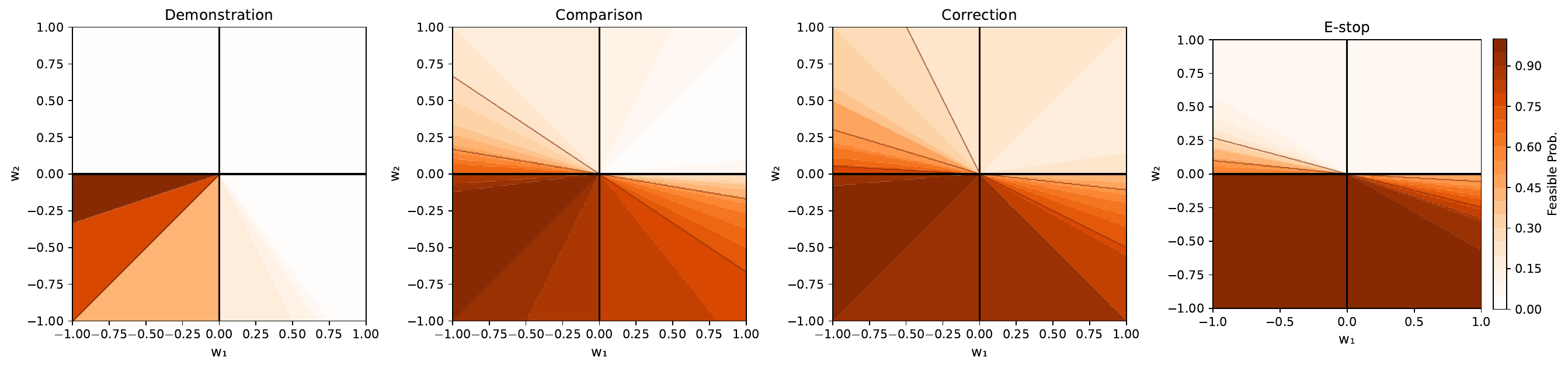}
    \caption{$B=1$}
    \label{fig:hb1}
  \end{subfigure}\\[3pt]
  \begin{subfigure}{\textwidth}
    \includegraphics[width=\linewidth]{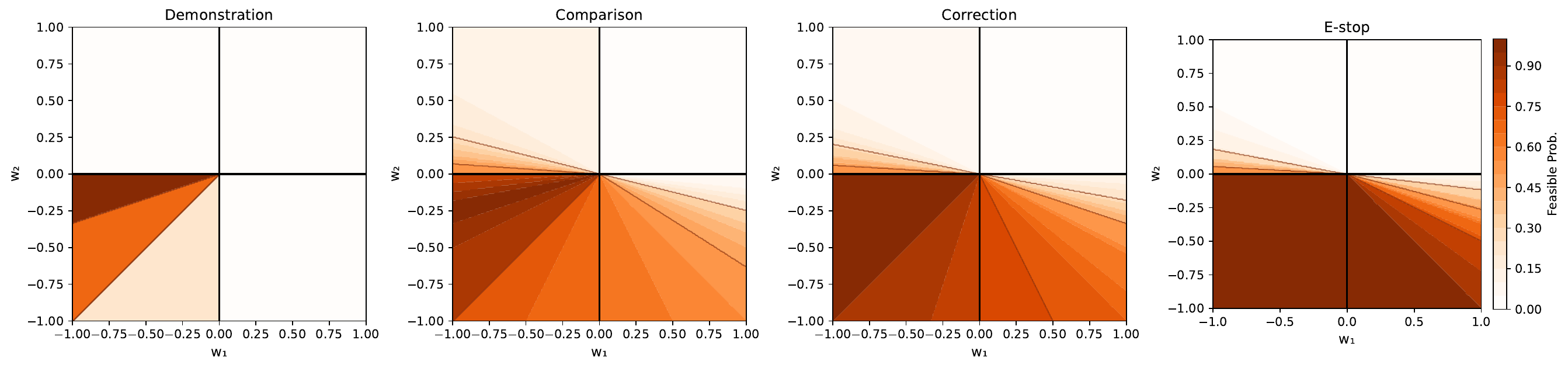}
    \caption{$B=2$}
    \label{fig:hb2}
  \end{subfigure}\\[3pt]
  \begin{subfigure}{\textwidth}
    \includegraphics[width=\linewidth]{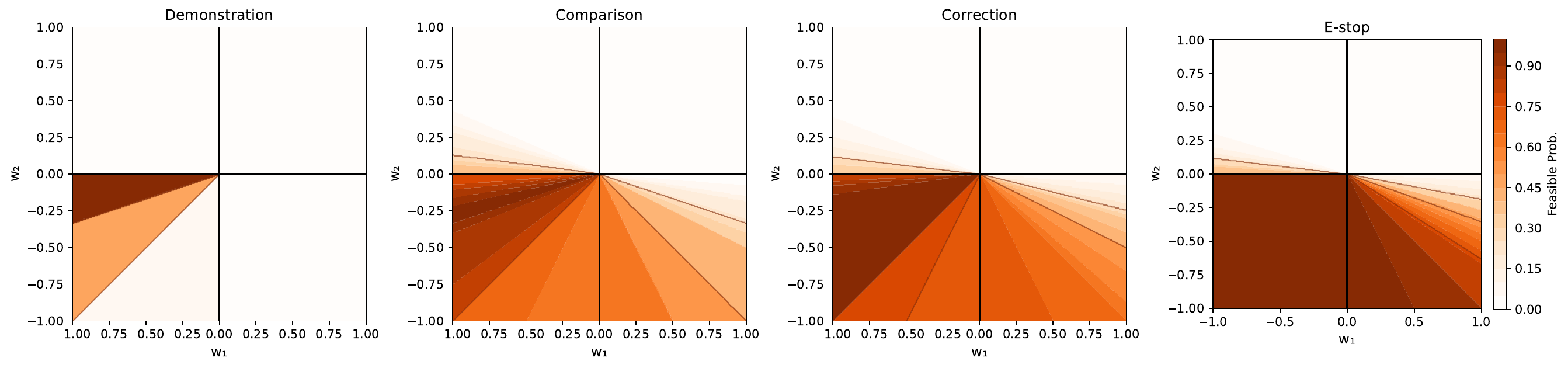}
    \caption{$B=3$}
    \label{fig:hb3}
  \end{subfigure}\\[3pt]
  \begin{subfigure}{\textwidth}
    \includegraphics[width=\linewidth]{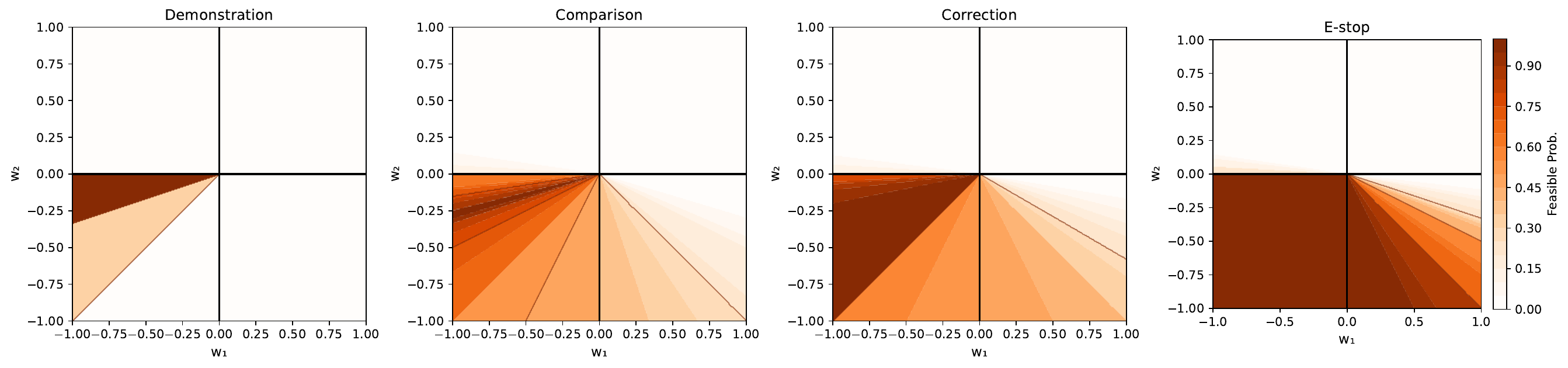}
    \caption{$B=5$}
    \label{fig:hb5}
  \end{subfigure}
  \caption{\textbf{Per-budget feasible reward regions across feedback modalities.}
  Each row shows the feasible reward region for a single feedback budget $B$, with
  the four columns corresponding to demonstration, comparison, correction, and
  E-stop feedback. Colors encode the empirical feasibility probability over $100$
  independent random feedback samples---darker regions are feasible across more
  samples. All panels are
  computed for the same MDP whose layout is shown in Figure~\ref{fig:grid1}.
  \emph{(Continued in Figure~\ref{fig:heatmaps_budget_cont}.)}}
  \label{fig:heatmaps_budget}
\end{figure}

\begin{figure}[p]\ContinuedFloat
  \centering
  \begin{subfigure}{\textwidth}
    \includegraphics[width=\linewidth]{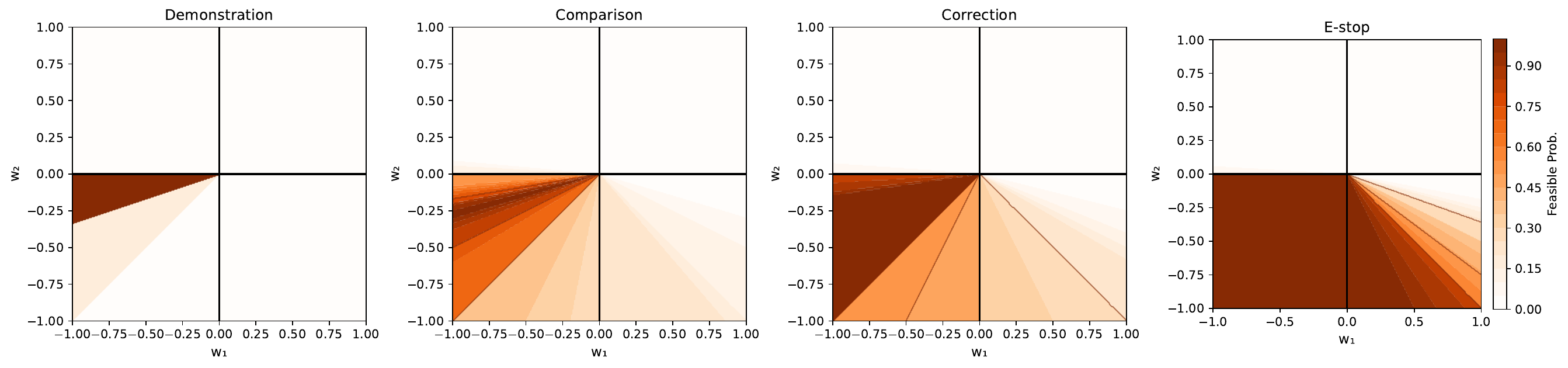}
    \caption{$B=7$}
    \label{fig:hb7}
  \end{subfigure}\\[3pt]
  \begin{subfigure}{\textwidth}
    \includegraphics[width=\linewidth]{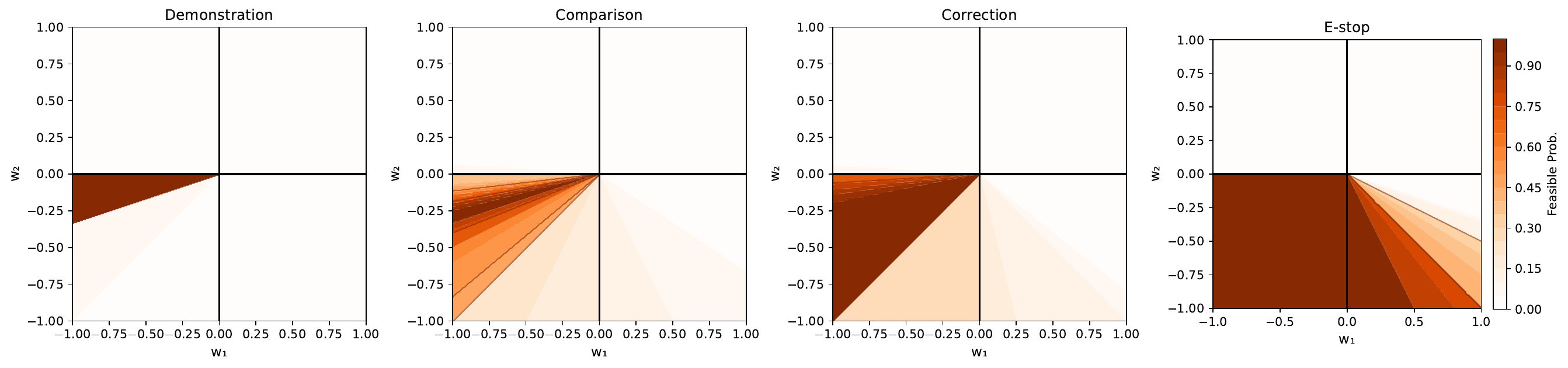}
    \caption{$B=10$}
    \label{fig:hb10}
  \end{subfigure}\\[3pt]
  \begin{subfigure}{\textwidth}
    \includegraphics[width=\linewidth]{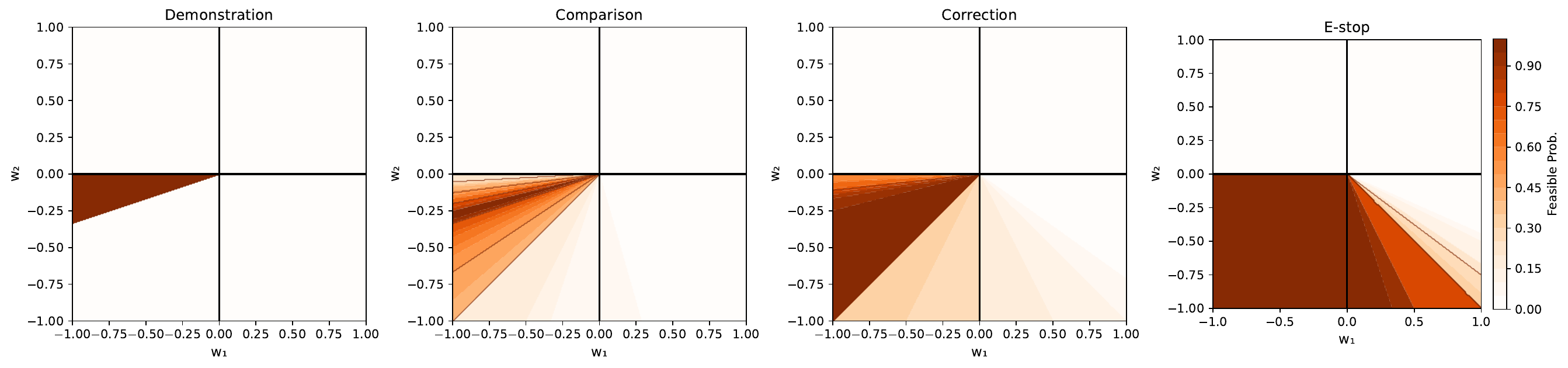}
    \caption{$B=15$}
    \label{fig:hb15}
  \end{subfigure}\\[3pt]
  \begin{subfigure}{\textwidth}
    \includegraphics[width=\linewidth]{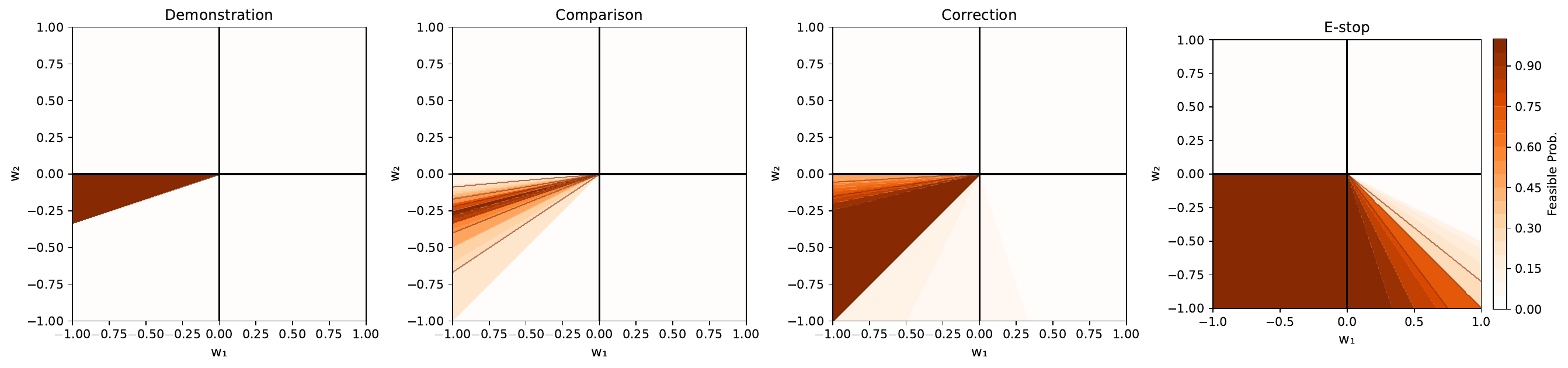}
    \caption{$B=20$}
    \label{fig:hb20}
  \end{subfigure}
  \caption{\textbf{Per-budget feasible reward regions (continued).} Budgets
  $B\in\{7,10,15,20\}$, continuing Figure~\ref{fig:heatmaps_budget}. As the budget
  grows, the comparison wedge tightens toward the demonstration wedge, while the
  correction and E-stop wedges shrink slowly and stay comparatively wide,
  reflecting their same-start-state and trajectory-local constraint structure.}
  \label{fig:heatmaps_budget_cont}
\end{figure}

\subsection{Feasible reward regions across sampled layouts}
\label{app:layout_gallery}

Figures~\ref{fig:layout_gallery} and~\ref{fig:feasible_gallery} extend the single MDP-pair
illustration of Figure~\ref{fig:reward_ambiguity_asymmetry} to a larger population of layouts,
confirming that the environment-dependent ambiguity of Theorem~\ref{thm:single_mdp_ambiguity} is
typical rather than an artifact of one hand-picked example. We use $2\times3$ gridworlds with two
cell features (drawn gray and white) and a randomly placed terminal cell $\mathbf{T}$ that may
occupy either feature. For each $k\in\{1,\dots,5\}$, where $k$ is the number of gray-feature cells,
we sample $40$ layouts, for $200$ layouts in total. Figure~\ref{fig:layout_gallery} shows the
sampled layouts with their optimal policies (one arrow per tied optimal action), and
Figure~\ref{fig:feasible_gallery} shows the corresponding feasible reward regions
$\mathrm{gBEC}(D)$. The feasible region varies markedly across layouts: some pin the reward down to
a thin wedge, while others---typically those with more gray cells, whose optimal trajectories are
less distinguishable in feature space---leave a much larger region. This reflects that reward
identifiability is governed by the layout-induced feature-difference directions rather than by the
amount of feedback.

\paragraph{How the feasible regions are computed.}
Each region is obtained from the constraints induced by \emph{all} optimal demonstrations in a
layout, expressed through action-successor features. Recall the discounted feature counts
(successor features) of a policy $\pi$ introduced in the preliminaries, which we write here as
$\psi^\pi(s)\equiv\mu^\pi(s)$ and $\psi^\pi(s,a)\equiv\mu^\pi(s,a)$:

\[
\psi^\pi(s)=\mathbb{E}_\pi\!\Big[\textstyle\sum_{t=0}^\infty \gamma^t \phi(s_t,a_t)\ \big|\ s_0=s\Big],
\]
\[
\psi^\pi(s,a)=\mathbb{E}\!\Big[\textstyle\sum_{t=0}^\infty \gamma^t \phi(s_t,a_t)\ \big|\ s_0=s,\,a_0=a,\ a_{t\ge1}\sim\pi\Big].
\]

The action-successor feature $\psi^\pi(s,a)$ takes action $a$ in state $s$ and then follows $\pi$; it
captures the full long-run feature consequence of that action choice and satisfies
$Q^\pi_w(s,a)=w^\top\psi^\pi(s,a)$ (in these gridworlds the feature is a property of the cell, so
$\phi(s,a)=\phi(s)$). We obtain the optimal policy $\pi^\star$ by value iteration and compute its
action-successor features $\psi^{\pi^\star}$. Optimality of any $a^\star\in\arg\max_a Q^{\pi^\star}_w(s,a)$
over an alternative action $b$ then gives, for every non-terminal state $s$,
\[
w^\top\big(\psi^{\pi^\star}(s,a^\star)-\psi^{\pi^\star}(s,b)\big)\ge 0,
\]
which is exactly a behavioral-equivalence-class optimality constraint and a special case of the
generalized constraint $w^\top\Delta\Phi\ge0$ in Eq.~\eqref{eq:gbec_def}, with
$\Delta\Phi=\psi^{\pi^\star}(s,a^\star)-\psi^{\pi^\star}(s,b)$. Collecting these feature-difference
vectors over all states and all optimal/non-optimal action pairs, and discarding constraints implied
by the rest, yields the half-spaces whose intersection is $\mathrm{gBEC}(D)$. Restricting to
$\|w\|_2\le1$ makes this a bounded convex cone (an angular wedge for $d=2$): in
Figure~\ref{fig:feasible_gallery} the red lines are the retained constraint boundaries and the gold
hatch is their feasible intersection.

\begin{figure}[t]
  \centering
  \includegraphics[width=\linewidth]{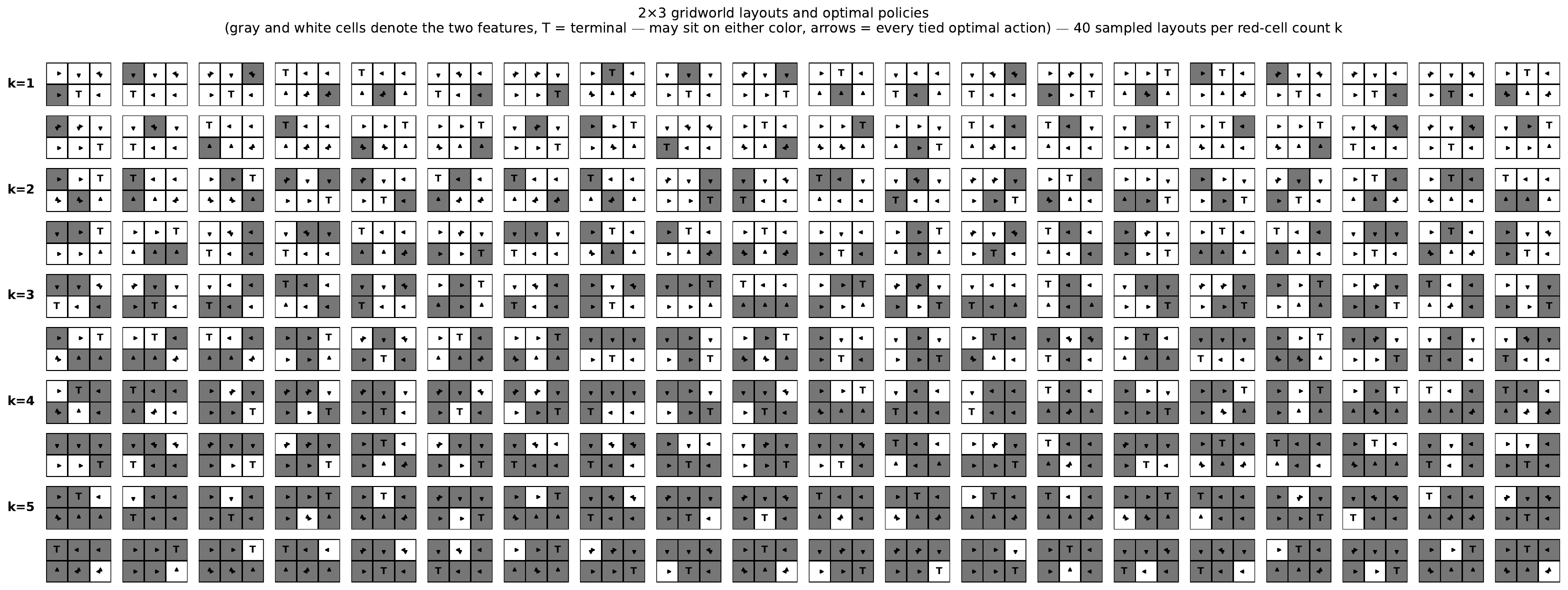}
  \caption{\textbf{Sampled $2\times3$ layouts and optimal policies.} Gray and white cells denote the
  two features and $\mathbf{T}$ the (randomly placed) terminal cell, which may sit on either feature;
  arrows show every tied optimal action. Rows are grouped by $k$, the number of gray-feature cells
  ($40$ layouts per $k$, $200$ total).}
  \label{fig:layout_gallery}
\end{figure}

\begin{figure}[t]
  \centering
  \includegraphics[width=\linewidth]{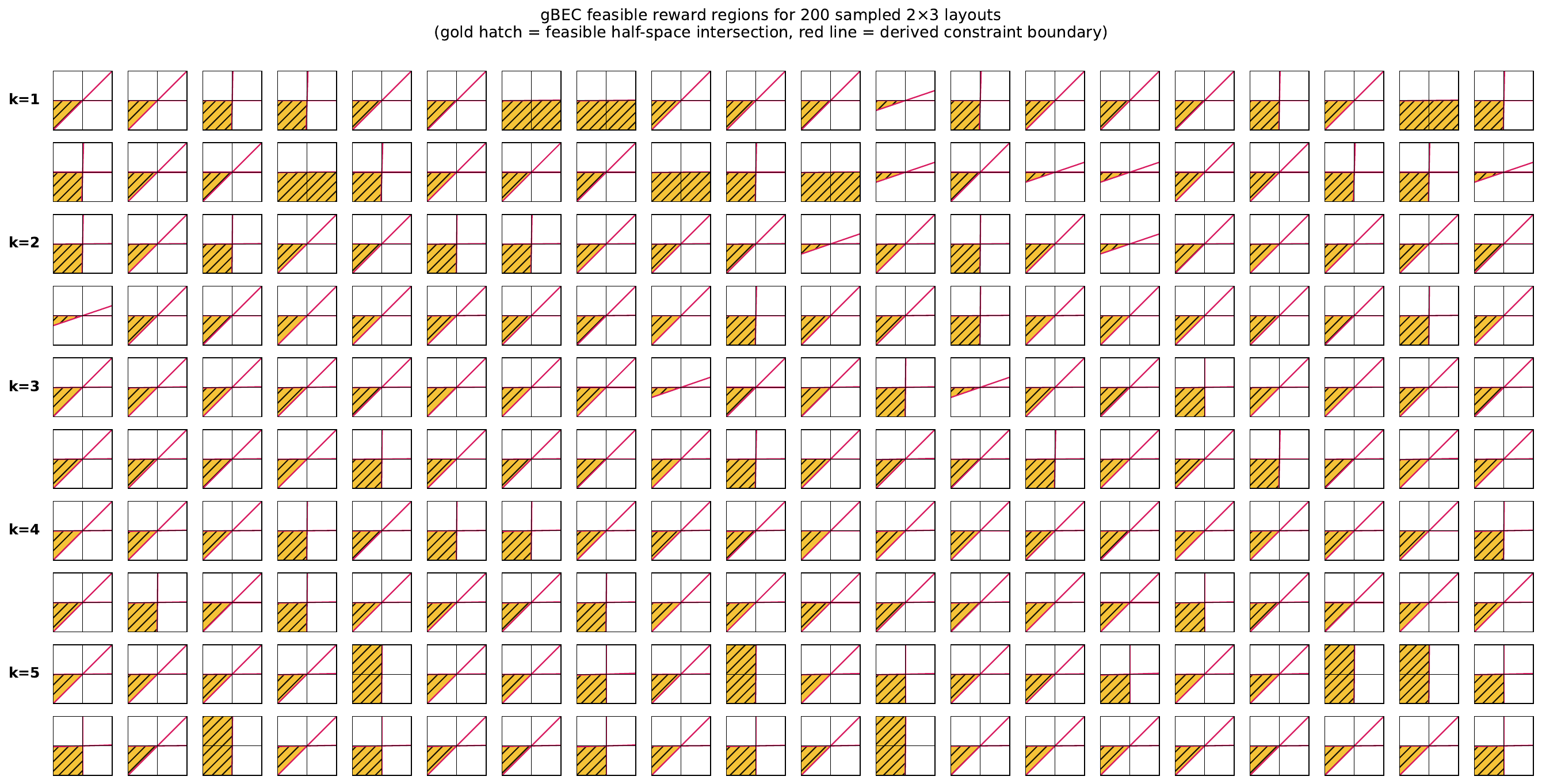}
  \caption{\textbf{Feasible reward regions $\mathrm{gBEC}(D)$ for the sampled layouts.} For each
  layout in Figure~\ref{fig:layout_gallery}, the gold hatch is the feasible half-space intersection
  and the red lines are the derived constraint boundaries, computed from the action-successor-feature
  constraints described above. The region shape---and hence reward identifiability---varies strongly
  with layout, mirroring Figure~\ref{fig:reward_ambiguity_asymmetry}.}
  \label{fig:feasible_gallery}
\end{figure}

\subsection{Supplementary Algorithms}
\label{sec:supp_algorithms}

This section provides the full pseudocode for the hierarchical
set-cover teaching procedure used in the main text.
The algorithms correspond to the two-stage greedy approximation
described in Section~\ref{sec:hierarchical_teaching}.

\subsection*{Algorithm S1: Greedy Environment Selection (Outer Stage)}

\begin{algorithm}[H]
\caption{Greedy Environment Selection}
\label{alg:env_selection}
\begin{algorithmic}[1]
\Require Per-MDP constraint coverage sets $\{\mathcal{U}_k\}_{k \in \mathcal{M}_{\mathrm{train}}}$, universe $\mathcal{U}$
\Ensure Ordered list of selected MDP indices $\mathcal{K}$

\State $\text{covered} \gets \emptyset$
\State $\mathcal{K} \gets [\,]$

\While{$\text{covered} \neq \mathcal{U}$}
    \State Select $k$ maximizing $|\mathcal{U}_k \setminus \text{covered}|$
    \State $\mathcal{K} \gets \mathcal{K} \cup \{k\}$
    \State $\text{covered} \gets \text{covered} \cup (\mathcal{U}_k \setminus \text{covered})$
\EndWhile

\State \Return $\mathcal{K}$
\end{algorithmic}
\end{algorithm}

\vspace{5mm}

\subsection*{Algorithm S2: Greedy Atom Selection (Inner Stage)}

\begin{algorithm}[H]
\caption{Greedy Atom Selection}
\label{alg:atom_selection}
\begin{algorithmic}[1]
\Require Selected MDPs $\mathcal{K}$, candidate atoms and coverage sets within $\mathcal{K}$, universe $\mathcal{U}$
\Ensure Ordered list of chosen atoms $D$

\State $\text{covered} \gets \emptyset$
\State $D \gets [\,]$

\While{$\text{covered} \neq \mathcal{U}$}
    \State Select atom $x$ from any $k \in \mathcal{K}$ maximizing $|\mathrm{coverage}(x) \setminus \text{covered}|$
    \State $D \gets D \cup \{x\}$
    \State $\text{covered} \gets \text{covered} \cup (\mathrm{coverage}(x) \setminus \text{covered})$
\EndWhile

\State \Return $D$
\end{algorithmic}
\end{algorithm}

\vspace{5mm}

\subsection*{Algorithm S3: Hierarchical SCOT (HSCOT)}

\begin{algorithm}[H]
\caption{Hierarchical SCOT (HSCOT)}
\label{alg:hscot}
\begin{algorithmic}[1]
\Require Constraint universe $\mathcal{U}$, per-MDP inducible constraints $\{\mathcal{U}_k\}$, candidate atoms per MDP
\Ensure Selected environments $\mathcal{K}$ and feedback dataset $D$

\State $\mathcal{K} \gets$ Greedy Environment Selection$(\{\mathcal{U}_k\}, \mathcal{U})$ \hfill (Alg. S1)

\State Restrict candidate atoms to those in environments $\mathcal{K}$

\State $D \gets$ Greedy Atom Selection$(\mathcal{K}, \mathcal{U})$ \hfill (Alg. S2)

\State \Return $(\mathcal{K}, D)$
\end{algorithmic}
\end{algorithm}

\subsection{Per-environment teaching maps}
\label{app:env_maps}

To complement the aggregate metrics in Section~\ref{sec:experiments}, we
visualize the full set of $50$ LavaMiniGrid environments for each feedback
modality. In every panel, red cells denote lava, green the goal, and white free
space. An orange border marks environments selected by HSCOT, a cyan border marks
held-out environments excluded from teaching, and a yellow border marks training
environments with high residual regret. Each panel title reports its
per-environment regret $r$, and each figure title reports the mean regret over
all environments.
These maps make HSCOT's selective environment activation concrete. Comparison
(Figure~\ref{fig:envmap_comparison}), correction
(Figure~\ref{fig:envmap_correction}), and demonstration
(Figure~\ref{fig:envmap_demo}) all achieve zero mean regret, whereas E-stop
(Figure~\ref{fig:envmap_estop}) leaves residual regret across many environments
(mean regret $0.188$).

\begin{figure}[p]
  \centering
  \begin{subfigure}{\textwidth}
    \includegraphics[width=\linewidth]{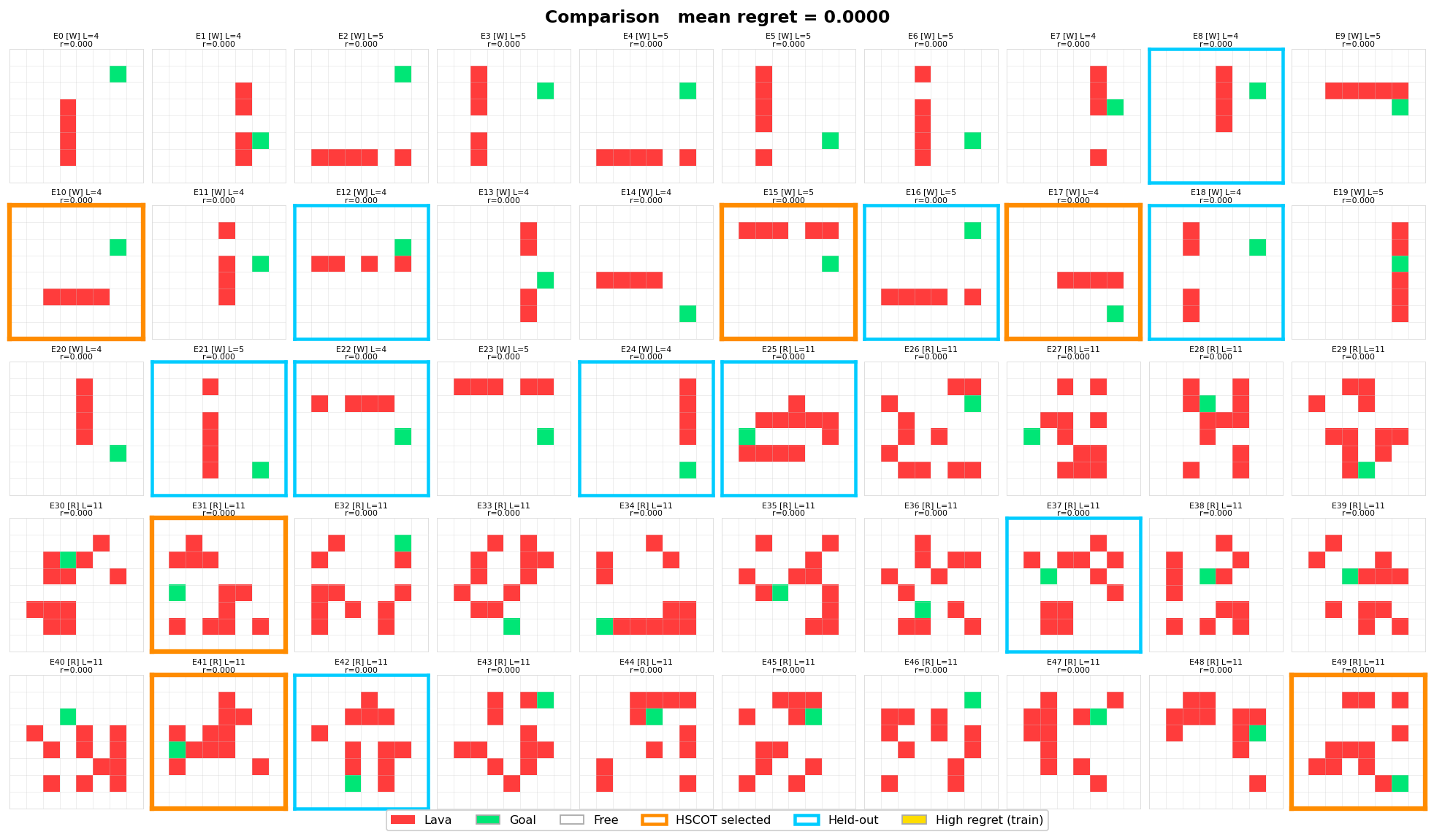}
    \caption{Comparison feedback (mean regret $=0.000$).}
    \label{fig:envmap_comparison}
  \end{subfigure}\\[4pt]
  \begin{subfigure}{\textwidth}
    \includegraphics[width=\linewidth]{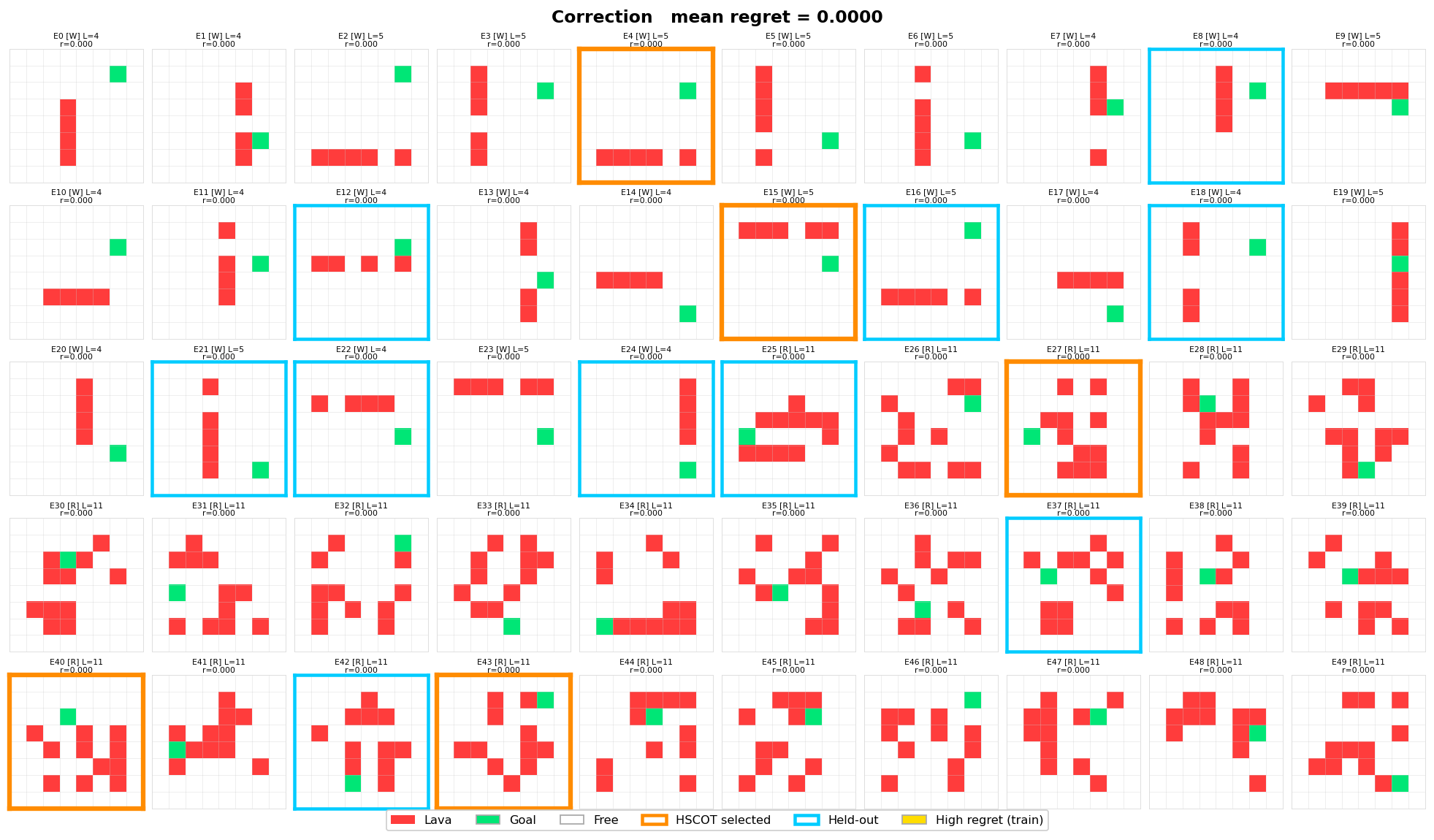}
    \caption{Correction feedback (mean regret $=0.000$).}
    \label{fig:envmap_correction}
  \end{subfigure}
  \caption{\textbf{Per-environment teaching maps (comparison and correction).}
  Each panel shows one LavaMiniGrid environment: red is lava, green the goal, white
  free space. Orange borders mark HSCOT-selected environments, cyan borders mark
  held-out environments, and yellow borders mark high-regret training environments.
  Panel titles report per-environment regret $r$.}
  \label{fig:env_maps_part1}
\end{figure}

\begin{figure}[p]
  \centering
  \begin{subfigure}{\textwidth}
    \includegraphics[width=\linewidth]{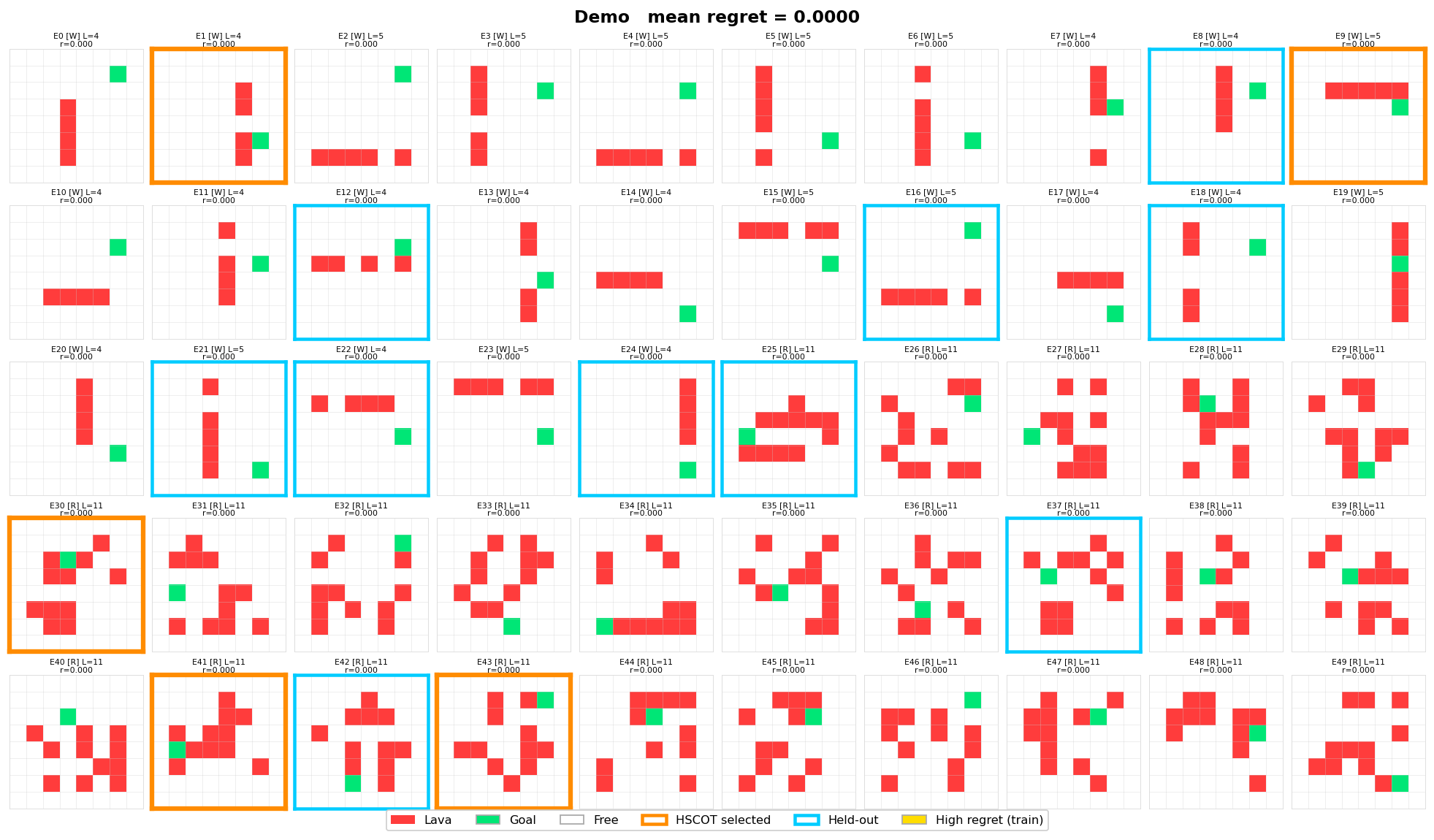}
    \caption{Demonstration feedback (mean regret $=0.000$).}
    \label{fig:envmap_demo}
  \end{subfigure}\\[4pt]
  \begin{subfigure}{\textwidth}
    \includegraphics[width=\linewidth]{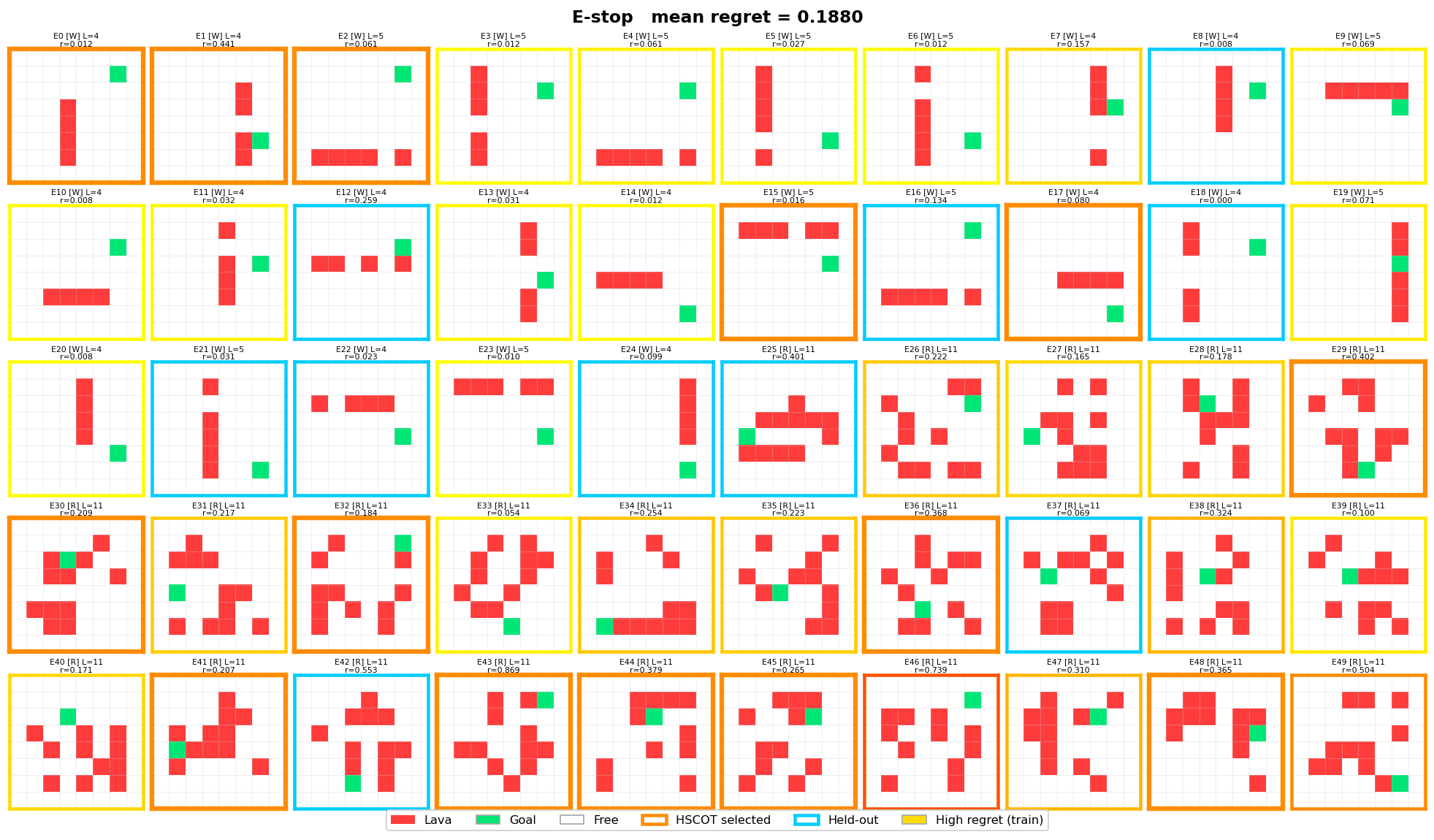}
    \caption{E-stop feedback (mean regret $=0.188$).}
    \label{fig:envmap_estop}
  \end{subfigure}
  \caption{\textbf{Per-environment teaching maps (demonstration and E-stop).}
  Same color and border conventions as Figure~\ref{fig:env_maps_part1}. E-stop
  leaves substantial residual regret across many environments and triggers several
  high-regret (yellow) panels, unlike the other modalities.}
  \label{fig:env_maps_part2}
\end{figure}

\end{document}